\newtheorem{thm}{Theorem}[section]
\newtheorem{lem}[thm]{Lemma}
\newtheorem{prop}[thm]{Proposition}
\newtheorem{cor}[thm]{Corollary}
\numberwithin{equation}{section}
\DeclareMathOperator*{\diag}{diag}
\DeclareMathOperator*{\Tr}{\mathbf{Tr}}
\newcommand{\R}{\ensuremath{\mathbb{R}}}
\newcommand{\F}{\ensuremath{\mathcal{F}}}
\newcommand{\norm}[1]{\lVert #1 \rVert}
\newcommand{\bignorm}[1]{\left\lVert #1 \right\rVert}
\newcommand{\ip}[2]{\ensuremath{\langle #1, #2 \rangle}}
\newcommand{\E}{\mathbb{E}}
\newcommand{\abs}[1]{\ensuremath{| #1 |}}
\newcommand{\floor}[1]{\left\lfloor #1 \right\rfloor}
\newcommand{\ceil}[1]{\left\lceil #1 \right\rceil}
\newcommand{\ind}{\mathbf{1}}
\renewcommand{\vec}{\mathrm{vec}}
\newcommand{\T}{\mathsf{T}}
\renewcommand{\Pr}{\mathbb{P}}
\newcommand{\svec}{\mathrm{svec}}
\newcommand{\smat}{\mathrm{smat}}
\newcommand{\vstacktwo}[2]{\begin{bmatrix} #1 \\ #2 \end{bmatrix}}
\newcommand{\bmattwo}[4]{\begin{bmatrix} #1 & #2 \\ #3 & #4 \end{bmatrix}}
\newcommand{\phitilde}{\widetilde{\phi}}
\newcommand{\Deltatilde}{\widetilde{\Delta}}
\newcommand{\hinf}{\mathcal{H}_\infty}
\newcommand{\hinfnorm}[1]{\norm{#1}_{\hinf}}
\newcommand{\calE}{\mathcal{E}}
\begin{document}

\title{Least-Squares Temporal Difference Learning for the Linear Quadratic Regulator}

\author{Stephen Tu and Benjamin Recht \\
University of California, Berkeley}
\maketitle

\begin{abstract}
Reinforcement learning (RL) has been successfully used to solve many continuous
control tasks. Despite its impressive results however, fundamental questions
regarding the sample complexity of RL on continuous problems remain open.
We study the performance of RL in this setting by
considering the behavior of the Least-Squares Temporal Difference (LSTD) estimator
on the classic Linear Quadratic Regulator (LQR) problem from optimal control.
We give the first finite-time analysis of the number of samples needed to
estimate the value function for a fixed static state-feedback policy to within
$\varepsilon$-relative error.
In the process of deriving our result, we give a general characterization
for when the minimum eigenvalue of the empirical covariance matrix formed
along the sample path of a fast-mixing stochastic process concentrates above zero, extending
a result by Koltchinskii and Mendelson~\cite{koltchinskii13} in the independent covariates setting.
Finally, we provide experimental evidence indicating that our analysis
correctly captures the qualitative behavior of LSTD on several LQR instances.
\end{abstract}

\section{Introduction}

Despite excellent performance on locomotion~\cite{kober13,levine14,lillicrap16,schulman16,tedrake04} and manipulation~\cite{krishnan17,levine16,levine16b,levine15} tasks,
model-free reinforcement learning (RL) is still considered very data intensive. This is especially a problem
for learning on robotic systems which requires human supervision, limiting the
applicability of RL.
While there have been various attempts to improve the sample efficiency of RL in practice~\cite{gu17,gu16,schaul16},
a theoretical understanding of the issue is still an open
question.  A more rigorous foundation could help to differentiate between whether RL
suffers from fundamental statistical limitations in the continuous setting, or
if more sample efficient estimators are possible.

For continuous control tasks, the Linear Quadratic Regulator (LQR) is an ideal
benchmark for studying RL, due to a combination of its theoretical tractability
combined with its practical application in various engineering domains.
Recent work by Dean et al.~\cite{dean17} adopts this point of view, and
studies the problem of designing a stabilizing controller for LQR
when the system dynamics are unknown to the practitioner.
Here, the authors take a model-based approach, and propose
to directly estimate the state-transition matrices that describe the dynamics from observations.
In practice however, model-free methods such as $Q$-learning or policy-gradient
type algorithms are preferred over model-based methods due to their flexibility and ease of use.
This naturally raises the question of how well
do model-free RL methods perform on the LQR problem.

In this paper, we shed light on this question by focusing on the classic Least-Squares Temporal Difference (LSTD)
estimator~\cite{boyan99,bradtke96}.
Given a sample trajectory from a Markov Decision Process (MDP) in
feedback with a fixed policy $\pi$,
LSTD
computes the value function $V^\pi$ associated to $\pi$.
Estimating $V^\pi$ is the core primitive in value and policy-iteration type algorithms~\cite{sutton98}.
The key property exploited by LSTD is the \emph{linear-architecture} assumption, which states that
the value function can be expressed as a
linear function after applying a known non-linear transformation to the state.
To the best of our knowledge, LQR is the simplest continuous problem
which exhibits this property.

Our main result regarding the LSTD estimator for LQR is an upper bound on the
necessary length of a single trajectory to estimate the value function of a
stabilizing state-feedback policy.  Letting $n$
denote the dimension of the state and ignoring instance specific factors, we
establish that roughly $n^2/\varepsilon^2$ samples are sufficient to estimate
the value function up to $\varepsilon$-relative error.
Our analysis builds upon the work of Lazaric et al.~\cite{lazaric12},
which requires bounding the minimum eigenvalue of the sample covariance matrix
formed by the transformed state vectors; the same eigenvalue quantity also appears in many other analyses of the LSTD
estimator in the literature~\cite{lazaric12,liu15,liu12,prashanth14}.
We bound this quantity by studying the more general problem of controlling the
minimum eigenvalue of the covariance matrix formed from dependent covariates
that mix quickly to a stationary distribution. Our analysis extends an elegant technique based on small-ball
probabilities from Koltchinskii and Mendelson~\cite{koltchinskii13}, and is of independent interest.
Specializing to the setting when the covariates are bounded almost surely,
our result improves upon the analysis given by Lazaric et al.

We conclude our work with an end-to-end empirical comparison of the model-free
Least-Squares Policy Iteration (LSPI) algorithm~\cite{lagoudakis03} with
the model-based methods proposed in Dean et al.  Our experiments show that
model-free LSPI can be substantially less
sample efficient and less robust compared to model-based methods.
This corroborates our theoretical results which suggest a factor of state-dimension gap
between the number of samples needed to estimate a value function versus
the bounds given in Dean et al.\ for robustly computing a stabilizing controller.
We hope that our findings encourage further investigation, both theoretical and
empirical, into the performance of RL on continuous control problems.

\subsection{Related Work}

Least-squares methods for temporal
difference learning are well-studied in reinforcement learning,  
with asymptotic convergence results in a
general MDP setting provided by~\cite{tsitsiklis97,yu09}.  More recently,
non-asymptotic analyses were given in both the batch
setting~\cite{antos08,farahmand16,lazaric12} and the online
setting~\cite{liu15,liu12,prashanth14}.  The prevailing assumption employed in prior
art is that the MDP has uniformly bounded features and rewards, which excludes
the LQR problem. 
We note that earlier results by Bradtke~\cite{bradtke93,bradtke94} studied
policy-iteration specifically for LQR, and proved an asymptotic 
convergence result.
To the best of our knowledge, our work is the first to 
provide finite-time results for temporal difference learning on LQR.
Furthermore, our concentration result for the sample covariance matrix
drawn from a mixing process specialized to the bounded setting 
improves upon Lemma 4 of Lazaric et al.~\cite{lazaric12}, by reducing the necessary trajectory
length from $\Omega(d^2)$ to $\Omega(d)$, where $d$ is the dimension of the
features.

The problem of estimating the spectra of an empirical covariance matrix formed
from independent samples has received much attention in the past decade.  Some
representative results can be found in
\cite{adamczak11,koltchinskii13,mendelson14,rudelson09,srivastava13,vershynin11}
and the references within. Our focus on the result of Koltchinskii and Mendelson in
this paper is primarily motivated by the fact that their proof technique is
generalizable to the dependent-data setting using standing mixing assumptions.
The use of distributional mixing assumptions for proving uniform convergence bounds
is by now a well-established technique in the statistics and machine learning literature;  see
\cite{mohri08,mohri10,yu94} for some of the earlier results, and
\cite{agarwal13,kuznestov15,kuznestov16,mcdonald17b} for generalizations to
time-series and online learning. In this work, our focus is on bounding a very
particular empirical process (the minimum eigenvalue of a sample covariance matrix), 
and not in developing general machinery for
empirical process theory on dependent data.

\section{A Sample Covariance Bound for Fast-Mixing Processes}
\label{sec:bound}

In this section, we state and prove our result regarding the minimum eigenvalue
of the sample covariance matrix formed along a trajectory of a $\beta$-mixing process.
We start by fixing notation.
Let $(X_k)_{k=1}^{\infty}$ be an $\R^n$-valued discrete-time stochastic process
adapted to a filtration $(\F_k)_{k=1}^{\infty}$.
For all $k \geq 1$, let $\nu_k$ denote the marginal
distribution of $X_k$. We assume that $(X_k)_{k=1}^{\infty}$ admits a stationary distribution
$\nu_\infty$, and we define the $\beta$-mixing coefficient $\beta(k)$ with respect to $\nu_\infty$ as
\begin{align}
  \beta(k) := \sup_{t \geq 1} \E_{X_1^t}[ \norm{ \Pr_{X_{t+k}}(\cdot | \F_t) - \nu_\infty }_{\mathrm{tv}} ] \label{eq:beta_mixing_def} \:.
\end{align}
In \eqref{eq:beta_mixing_def}, the notation $X_1^t$ refers to the prefix $X_1^t := (X_1, ..., X_t)$
and $\norm{\cdot}_{\mathrm{tv}}$ refers to the total-variation
norm on probability measures.
Our main assumption in what follows is that
$(X_k)_{k=1}^{\infty}$ is $\beta$-mixing
to its stationary distribution at an exponential decay
rate, i.e. $\beta(k) \leq \Gamma \rho^k$ for some fixed $\Gamma > 0$ and $\rho \in (0, 1)$.
We note that our analysis is easily amendable to slower (e.g. polynomial) decay rates.

We consider a sample-path $X_1, X_2, ...$ drawn from this stochastic process.
Fix positive integers $N$ and $a$ satisfying $1 \leq a \leq N$ and define:
\begin{align}
  X_{(j)}^{N} := ( X_k : 1 \leq k \leq N, (k - 1\ \mathrm{mod}\ a) = j-1 ) \:, \:\: j=1, ..., a \:.
\end{align}
Let the integers $m_1, ..., m_a$
(resp. index sets $I_{(1)}, ..., I_{(a)}$)
denote the sizes (resp. indices)
of $X_{(1)}^{N}, ..., X_{(a)}^{N}$.
Also, let $X_\infty^{m_j}$ be $m_j$ i.i.d. draws from the stationary distribution
$\nu_\infty$. With this notation in hand, the following lemma is one of the standard ways to utilize
mixing assumptions in analysis.
\begin{lem}[Proposition 2, Kuznetsov and Mohri~\cite{kuznetsov17}]
\label{lemma:subsampling}
Let $g$ be a real-valued Borel measurable function satisfying $0 \leq g \leq 1$. Then,
for all $j=1, ..., a$,
\begin{align*}
  \abs{\E[g(X_\infty^{m_j})] - \E[g(X_{(j)}^N)]} \leq m_j \beta(a) \:,
\end{align*}
where $\beta(a)$ is defined in \eqref{eq:beta_mixing_def}.
\end{lem}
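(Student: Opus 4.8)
The natural route is a blocking/telescoping argument that swaps the dependent block $X_{(j)}^N$ for the i.i.d.\ stationary sample $X_\infty^{m_j}$ one coordinate at a time, paying a factor of $\beta(a)$ at each swap. Write the block as $(X_{i_1}, \dots, X_{i_{m_j}})$ with $i_1 < \cdots < i_{m_j}$ the elements of $I_{(j)}$, and observe the defining feature of the subsampling: consecutive indices differ by exactly the gap $a$, i.e.\ $i_{\ell+1} - i_\ell = a$. This is what lets us invoke the $\beta$-mixing coefficient at lag $a$ for every adjacent pair.

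The workhorse is a single-coordinate replacement estimate: if $W = (U,V)$ with $U$ being $\F_t$-measurable and $V$ independent of the whole process, and $0 \le h \le 1$, then replacing $X_{t+a}$ by an independent draw from $\nu_\infty$ costs at most $\beta(a)$:
\[
  \Bigl| \E[h(W, X_{t+a})] - \E_W \E_{Z \sim \nu_\infty}[h(W, Z)] \Bigr| \le \beta(a) \:.
\]
To see this, condition on $\F_t$; since $V$ is independent of the process it drops out of the conditional law of $X_{t+a}$, so the left-hand side equals $\E\bigl[\int h(W, x)\, d(\Pr_{X_{t+a}}(\cdot \mid \F_t) - \nu_\infty)(x)\bigr]$. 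Because $0 \le h \le 1$, the inner integral against this signed measure is bounded in absolute value by the total-variation norm $\norm{\Pr_{X_{t+a}}(\cdot \mid \F_t) - \nu_\infty}_{\mathrm{tv}}$; taking expectations and using the definition \eqref{eq:beta_mixing_def} with lag $k = a$ yields the claimed $\beta(a)$.

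With this estimate in hand I would telescope. For $\ell = 0, 1, \dots, m_j$ define the hybrid expectation
\[
  H_\ell := \E\bigl[ g(X_{i_1}, \dots, X_{i_\ell}, Z_{\ell+1}, \dots, Z_{m_j}) \bigr] \:,
\]
where $Z_{\ell+1}, \dots, Z_{m_j}$ are i.i.d.\ $\nu_\infty$ draws independent of the process, so that $H_{m_j} = \E[g(X_{(j)}^N)]$ and $H_0 = \E[g(X_\infty^{m_j})]$. For each $\ell \ge 2$, the pair $(H_\ell, H_{\ell-1})$ differs only in whether the $\ell$-th slot holds $X_{i_\ell} = X_{i_{\ell-1}+a}$ or an independent stationary draw; applying the replacement estimate with $t = i_{\ell-1}$ and $W = (X_{i_1}, \dots, X_{i_{\ell-1}}, Z_{\ell+1}, \dots, Z_{m_j})$ (the process part is $\F_{i_{\ell-1}}$-measurable, the $Z$-part independent) gives $\abs{H_\ell - H_{\ell-1}} \le \beta(a)$. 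Summing the telescope over the increments via the triangle inequality then produces the bound $m_j \beta(a)$.

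The delicate point is the very first coordinate: there is no earlier block element to condition on, so the increment $\abs{H_1 - H_0}$ is governed by $\norm{\nu_{i_1} - \nu_\infty}_{\mathrm{tv}}$, the distance of the marginal at time $i_1$ from stationarity, rather than by a clean lag-$a$ conditional. If the chain is started in stationarity this term vanishes and one in fact gets the sharper $(m_j-1)\beta(a)$; in general it is the one term not of the lag-$a$ form, and controlling it by the mixing mechanism is where the argument needs the most care — the slack in writing $m_j$ rather than $m_j-1$ is precisely there to accommodate it. Beyond this, the only bookkeeping is to verify at each step that the already-substituted stationary coordinates are genuinely independent of the process, so that conditioning on $\F_{i_{\ell-1}}$ leaves the conditional law of $X_{i_\ell}$ untouched.
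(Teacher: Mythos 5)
Your overall route --- telescoping through hybrid expectations, swapping one coordinate at a time for an independent stationary draw --- is the standard proof of this type of blocking lemma (the paper itself gives no proof, only the citation to Kuznetsov and Mohri), and most of it is sound. In particular, your single-coordinate replacement estimate is correct: since the already-substituted coordinates are independent of the process, conditioning on $\F_{i_{\ell-1}}$ shows that each swap at lag exactly $a$ costs at most $\E[\norm{\Pr_{X_{i_{\ell-1}+a}}(\cdot \mid \F_{i_{\ell-1}}) - \nu_\infty}_{\mathrm{tv}}] \leq \beta(a)$, using $0 \leq g \leq 1$ and the fact that consecutive indices of $I_{(j)}$ differ by exactly $a$. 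This correctly handles the $m_j - 1$ increments $\abs{H_\ell - H_{\ell-1}}$ for $\ell \geq 2$.

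The genuine gap is the step you flag and then dismiss: the first increment. You need $\abs{H_1 - H_0} \leq \norm{\nu_{i_1} - \nu_\infty}_{\mathrm{tv}} \leq \beta(a)$, and you assert that the slack between $m_j \beta(a)$ and $(m_j-1)\beta(a)$ is ``precisely there'' to cover this term, but no such bound follows from \eqref{eq:beta_mixing_def}. That definition controls conditional laws at lag $a$ from times $t \geq 1$, hence (by convexity of the total-variation norm and Jensen) marginals only at times $\geq a+1$; the first element of block $j$ sits at time $i_1 = j \leq a$, and for $j = 1$ nothing at all is controlled. Concretely, take the Markov chain with kernel $P(x, \cdot) = \nu_\infty$ for every $x$ (regeneration at each step), started deterministically at $X_1 = x_0$ with $\nu_\infty$ atomless: then $\Pr_{X_{t+k}}(\cdot \mid \F_t) = \nu_\infty$ exactly, so $\beta(k) = 0$ for all $k \geq 1$, yet for $g = \ind\{x_1 \in A\}$ with $\nu_\infty(A) = 1/2$ and $x_0 \notin A$ one gets $\abs{\E[g(X_\infty^{m_1})] - \E[g(X_{(1)}^N)]} = 1/2 > m_1 \beta(a) = 0$. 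So the missing step cannot be closed from the stated hypotheses: your argument really proves the bound $(m_j - 1)\beta(a) + \norm{\nu_{i_1} - \nu_\infty}_{\mathrm{tv}}$, and the extra additive term is unavoidable. The standard ways the literature avoids this are either to assume the process is stationary (then $H_1 = H_0$ and one recovers Yu's lemma with $(m_j-1)\beta(a)$), or to compare against independent blocks that retain the process's own marginals rather than i.i.d.\ $\nu_\infty$ draws (in which case the first swap is free). Your proof correctly isolates the problematic term, but the claim that the $m_j$-versus-$(m_j - 1)$ slack absorbs it is unjustified and, as the example shows, false under the paper's definitions.
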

In our analysis, we will take the $g$ in Lemma~\ref{lemma:subsampling} to be
the indicator function on an event, which will allow us to relate
events on the stochastic process $X_\infty^{m_j}$ to events on the blocked version
of the stochastic process $X_{(j)}^N$.
We are now ready to prove our generalization of Theorem 2.1 from
Koltchinskii and Mendelson~\cite{koltchinskii13} for
fast-mixing processes.
We note that no attempt was made to optimize the constants appearing in the result.
\begin{thm}
\label{thm:mendelson_markov}
Fix a $\delta \in (0, 1)$. Suppose that $(X_k)_{k=1}^{\infty}$ is a discrete-time
stochastic process with stationary distribution $\nu_\infty$
that satisfies $\beta(a) \leq \Gamma \rho^a$ for some $\Gamma > 0$, $\rho \in (0, 1)$,
where $\beta(a)$ is defined in \eqref{eq:beta_mixing_def}.
For any positive $\tau > 0$ define
the small-ball probability $Q_\infty(\tau)$ as
\begin{align}
  Q_\infty(\tau) := \inf_{t \in S^{n-1}} \Pr_{\nu_\infty} \{ \abs{\ip{t}{X}} \geq \tau \} \:. \label{eq:small_ball}
\end{align}
Suppose that there exists a $\tau$ satisfying $Q_\infty(\tau) > 0$.
Furthermore, suppose that $N$ satisfies
\begin{align}
  N &\geq  \frac{1}{1-\rho} \log\left(\frac{2 \Gamma N}{\delta}\right)
  \left(\max\left\{  \frac{1024 \E_{\nu_\infty}[ \norm{X}^2 ]}{\tau^2 Q^2_\infty(\tau) } ,
  \frac{32}{Q_\infty^2(\tau)} \log\left( \frac{4}{\delta(1-\rho)} \log\left(\frac{2 \Gamma N}{\delta}\right) \right)
  \right\} + 1\right) \label{eq:N_assumption} \:.
\end{align}
Then, with probability at least $1-\delta$,
\begin{align*}
    \lambda_{\min}\left( \frac{1}{N} \sum_{k=1}^{N} X_kX_k^\T \right) \geq \frac{\tau^2 Q_\infty(\tau) }{8} \:.
\end{align*}
\end{thm}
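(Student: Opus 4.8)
The plan is to adapt the small-ball method of Koltchinskii and Mendelson, replacing their i.i.d.\ concentration step with a blocking argument that feeds the mixing hypothesis through Lemma~\ref{lemma:subsampling}. First I would reduce the eigenvalue bound to a uniform small-ball statement: since
\begin{align*}
\lambda_{\min}\left( \frac1N \sum_{k=1}^N X_kX_k^\T\right) = \inf_{t\in S^{n-1}} \frac1N\sum_{k=1}^N \ip{t}{X_k}^2 ,
\end{align*}
it suffices to lower bound this infimum. Fix a $\tfrac2\tau$-Lipschitz surrogate $\phi\colon[0,\infty)\to[0,1]$ with $\phi\equiv0$ on $[0,\tau/2]$ and $\phi\equiv1$ on $[\tau,\infty)$, and put $f_t(x):=\phi(\abs{\ip{t}{x}})$. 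Then $\ip{t}{X_k}^2\ge\tfrac{\tau^2}{4}f_t(X_k)$ pointwise, and using $\E_{\nu_\infty}[f_t]\ge\Pr_{\nu_\infty}\{\abs{\ip tX}\ge\tau\}\ge Q_\infty(\tau)$, every unit vector $t$ satisfies
\begin{align*}
\frac1N\sum_{k=1}^N\ip{t}{X_k}^2 \ge \frac{\tau^2}{4}\cdot\frac1N\sum_{k=1}^N f_t(X_k) \ge \frac{\tau^2}{4}\big(Q_\infty(\tau) - Z\big),
\end{align*}
where I abbreviate $Z:=\sup_{t\in S^{n-1}}\big(\E_{\nu_\infty}[f_t]-\frac1N\sum_{k=1}^N f_t(X_k)\big)$. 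Hence the theorem follows once $Z\le Q_\infty(\tau)/2$ holds with probability $1-\delta$, and I will in fact arrange the stronger bound $Z\le Q_\infty(\tau)/4$.

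To control $Z$ despite the dependence along the trajectory, I would exploit the interlaced blocks $X_{(1)}^N,\dots,X_{(a)}^N$. Writing $\E_{\nu_\infty}[f_t]-\frac1N\sum_k f_t(X_k)=\frac1N\sum_{j=1}^a m_j\big(\E_{\nu_\infty}[f_t]-\frac1{m_j}\sum_{k\in I_{(j)}}f_t(X_k)\big)$ and pushing the supremum inside the outer sum gives $Z\le\frac1N\sum_{j=1}^a m_j Z_{(j)}$, where $Z_{(j)}:=\sup_{t\in S^{n-1}}\big(\E_{\nu_\infty}[f_t]-\frac1{m_j}\sum_{k\in I_{(j)}}f_t(X_k)\big)$; since $\sum_j m_j=N$, it is enough to show each $Z_{(j)}\le Q_\infty(\tau)/4$. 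Each $Z_{(j)}$ is a fixed $[0,1]$-valued measurable function of the block $X_{(j)}^N$, so applying Lemma~\ref{lemma:subsampling} to the indicator $g=\ind\{\,\cdot\,>Q_\infty(\tau)/4\}$ gives
\begin{align*}
\Pr\{Z_{(j)}>Q_\infty(\tau)/4\} \le \Pr\{\widetilde Z_{(j)}>Q_\infty(\tau)/4\} + m_j\,\beta(a),
\end{align*}
where $\widetilde Z_{(j)}$ is the same functional evaluated on the i.i.d.\ sample $X_\infty^{m_j}$. This is precisely where mixing enters: it reduces each block to an i.i.d.\ problem at the cost of the additive error $m_j\beta(a)\le N\Gamma\rho^a$.

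For the i.i.d.\ quantity $\widetilde Z_{(j)}$ I would run the standard symmetrization/contraction argument. Symmetrization gives $\E\widetilde Z_{(j)}\le 2\,\E\sup_t\frac1{m_j}\sum_i\varepsilon_i f_t(X_\infty^{(i)})$; because $s\mapsto\phi(\abs s)$ is $\tfrac2\tau$-Lipschitz and vanishes at $0$, the Ledoux--Talagrand contraction together with $\sup_{\norm{t}=1}\ip{t}{v}=\norm{v}$ and Jensen's inequality bound this by $\frac{4}{\tau m_j}\E\norm{\sum_i\varepsilon_iX_\infty^{(i)}}\le\frac{4}{\tau\sqrt{m_j}}\sqrt{\E_{\nu_\infty}\norm{X}^2}$. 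Requiring $m_j\gtrsim\E_{\nu_\infty}\norm X^2/(\tau^2Q_\infty^2(\tau))$ then forces $\E\widetilde Z_{(j)}\le Q_\infty(\tau)/8$, reproducing the first term in the maximum of \eqref{eq:N_assumption}. Since altering one coordinate of the block perturbs $\widetilde Z_{(j)}$ by at most $1/m_j$, the bounded-differences inequality yields $\Pr\{\widetilde Z_{(j)}>Q_\infty(\tau)/4\}\le\exp(-m_jQ_\infty^2(\tau)/32)$, and requiring $m_j\gtrsim Q_\infty^{-2}(\tau)\log(a/\delta)$ --- the second term in the maximum --- makes this at most $\delta/(2a)$.

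It remains to assemble the pieces and choose $a$. Taking $a=\frac{1}{1-\rho}\log(2\Gamma N/\delta)$ and using $\rho^a\le e^{-a(1-\rho)}$ makes the total transfer error $\sum_{j=1}^a m_j\beta(a)=N\beta(a)\le N\Gamma\rho^a$ at most $\delta/2$, while the union bound over the $a$ blocks contributes $a\cdot\delta/(2a)=\delta/2$ from the i.i.d.\ tails, for total failure probability $\delta$. As the interlaced blocks satisfy $\min_j m_j\ge\floor{N/a}\ge N/a-1$, demanding that $N/a-1$ exceed the maximum of the two per-block lower bounds is exactly condition \eqref{eq:N_assumption}, with the additive $+1$ absorbing the flooring. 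The main obstacle is the blocking/transfer step: one must confirm that $Z_{(j)}$ is genuinely a measurable function of the block so that Lemma~\ref{lemma:subsampling} applies verbatim, and --- more delicately --- one must balance $a$ so that the mixing error $N\Gamma\rho^a$ is negligible while the blocks $m_j\approx N/a$ stay large enough for the i.i.d.\ concentration to hold. This two-sided tension between $a$ and $m_j$ is exactly what produces the implicit, $N$-appears-on-both-sides form of \eqref{eq:N_assumption}.
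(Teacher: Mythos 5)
Your proposal is correct and takes essentially the same route as the paper's own proof: the same Koltchinskii--Mendelson surrogate (ramp) function reduction of $\lambda_{\min}$ to a uniform empirical process, the same interlaced-block decomposition transferred to the i.i.d.\ setting via Lemma~\ref{lemma:subsampling}, the same symmetrization/Lipschitz-contraction Rademacher bound and bounded-differences inequality per block, and the same choice of $a$ balancing $N\Gamma\rho^a \leq \delta/2$ against the block sizes. The only differences are cosmetic: you place the ramp thresholds at $\tau/2,\tau$ instead of the paper's $u,2u$ followed by a change of variables, and your convex-combination bound $Z \leq \max_j Z_{(j)}$ (using $\sum_j m_j = N$) slightly cleans up the paper's factor-of-two slack from $m_j(a/N)\leq 2$.
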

\begin{proof}
The first part of this proof follows the argument
of Theorem 2.1 from \cite{koltchinskii13}. Hence, we adopt their notation.
Fix an arbitrary function class $\F$ of functions mapping $\R^n$ to $\R$.
We associate to $\F$ the function $Q_\infty(\tau; \F)$ defined for a positive parameter $\tau$ as
\begin{align}
    Q_\infty(\tau; \F) := \inf_{f \in \F} \Pr_{\nu_\infty}\{ \abs{f(X)} \geq \tau \} \:.
\end{align}
Next, define $\phi_u : \R_+ \longrightarrow [0, 1]$ as
\begin{align*}
	\phi_u(t) := \begin{cases} 1 &\text{if } t \geq 2u \\
		(t/u) - 1 &\text{if } u \leq t \leq 2u \\
		0 &\text{if } t < u
	\end{cases} \:.
\end{align*}
It has the property that for all $t \in \R$, $\ind_{[u, \infty)}(t) \geq \phi_u(t)$ and
$\phi_u(t) \geq \ind_{[2u, \infty)}(t)$.

Now let $\F$ be an arbitrary function class, and fix an $f \in \F$.
Clearly, we have $\abs{f(X_k)}^2 \geq u^2 \ind_{\abs{f(X_k)} \geq u}$ for $k=1, ..., N$.
Therefore,
\begin{align}
	\frac{1}{N} \sum_{k=1}^{N} \abs{f(X_k)}^2
    &\geq \frac{1}{N} \sum_{k=1}^{N} u^2 \ind_{\abs{f(X_k)} \geq u} \nonumber \\
	&= \frac{1}{N} \sum_{k=1}^{N} u^2 (\nu_\infty\{ \abs{f } \geq 2 u\} + \ind_{\abs{f(X_k)} \geq u} - \nu_\infty\{ \abs{f } \geq 2 u\}) \nonumber \\
	&\geq \frac{1}{N} \sum_{k=1}^{N} u^2( Q_\infty(2u; \F ) + \ind_{\abs{f(X_k)} \geq u} - \nu_\infty\{ \abs{f } \geq 2 u\}  ) \nonumber \\
	&\geq \frac{1}{N} \sum_{k=1}^{N} u^2( Q_\infty(2u; \F ) + \phi_{u}(\abs{f(X_k)}) - \E_{\nu_\infty}[ \phi_{u}(\abs{f }) ]  ) \nonumber \\
	&= u^2 \left( Q_\infty(2u; \F ) + \frac{1}{N} \sum_{k=1}^{N} ( \phi_{u}(\abs{f(X_k)}) - \E_{\nu_\infty}[ \phi_{u}(\abs{f }) ] ) \right) \nonumber \\
	&\geq u^2 \left( Q_\infty(2u; \F ) + \inf_{f \in \mathcal{F}} \frac{1}{N} \sum_{k=1}^{N} (\phi_{u}(\abs{f(X_k)}) - \E_{\nu_\infty}[ \phi_{u}(\abs{f }) ]   ) \right) \nonumber \\
    &= u^2 \left( Q_\infty(2u; \F ) - \sup_{f \in \mathcal{F}} \frac{1}{N} \sum_{k=1}^{N} ( \E_{\nu_\infty}[ \phi_{u}(\abs{f }) ]   - \phi_{u}(\abs{f(X_k)})  ) \right) \:. \label{eq:mendelson_bound}
\end{align}
Since $f \in \F$ is arbitrary, \eqref{eq:mendelson_bound} holds for
$\inf_{f \in \F} \frac{1}{N} \sum_{k=1}^{N} \abs{f(X_k)}^2$.
The rest of the proof is devoted to upper bounding the empirical process
\begin{align*}
    \sup_{f \in \mathcal{F}} \frac{1}{N} \sum_{k=1}^{N} ( \E_{\nu_\infty}[ \phi_{u}(\abs{f }) ]   - \phi_{u}(\abs{f(X_k)})  ) \:.
\end{align*}
To do this, we will partition our $X_k$'s into $a$ groups
$X_{(j)}^{N}$, $j=1, ..., a$, where we define $a$ to be
\begin{align}
  a := \ceil{\frac{1}{1-\rho} \log\left( \frac{2 \Gamma N}{\delta} \right) } \:. \label{eq:def_a}
\end{align}
With this in mind, we write,
\begin{align}
  &\sup_{f \in \mathcal{F}} \frac{1}{N} \sum_{k=1}^{N} ( \E_{\nu_\infty}[ \phi_{u}(\abs{f }) ]   - \phi_{u}(\abs{f(X_k)})  ) \nonumber \\
  &\qquad= \sup_{f \in \mathcal{F}} \frac{1}{a} \sum_{j=1}^{a} \frac{m_j (a/N)}{ m_j } \sum_{k \in I_{(j)}} ( \E_{\nu_\infty}[ \phi_{u}(\abs{f }) ]   - \phi_{u}(\abs{f(X_k)})  ) \nonumber \\
  &\qquad\leq \frac{1}{a} \sum_{j=1}^{a} m_j(a/N) \sup_{f \in \F} \frac{1}{m_j} \sum_{k \in I_{(j)}} ( \E_{\nu_\infty}[ \phi_{u}(\abs{f }) ]   - \phi_{u}(\abs{f(X_k)})  ) \label{eq:sup_convexity} \:.
\end{align}
By the definition of the $m_j$'s, we know that
\begin{align*}
  \underline{m} := \floor{ \frac{N}{a} } \leq m_j \leq \frac{N}{a} + 1 \:,
\end{align*}
and therefore each $m_j(a/N) \leq 2$.
Setting $\varepsilon$ to
\begin{align}
  \varepsilon = \max_{j=1, ..., a} \E_{X_\infty^{m_j}}\left[ \sup_{f \in \F} \frac{1}{m_j} \sum_{k=1}^{M} (\E_{\nu_\infty}[ \phi_u(\abs{f }) ] - \phi_u(\abs{f(X_k)}) ) \right] + \sqrt{\frac{1}{2\underline{m}} \log(2a/\delta)} \:, \label{eq:epsilon_setting}
\end{align}
we have by combining \eqref{eq:sup_convexity} with a union bound,
\begin{align}
    &\Pr_{X_1^N}\left\{\sup_{f \in \mathcal{F}} \frac{1}{N} \sum_{k=1}^{N} ( \E_{\nu_\infty}[ \phi_{u}(\abs{f }) ] - \phi_{u}(\abs{f(X_k)})  )  > 2\varepsilon \right\} \nonumber \\
    &\qquad\leq \sum_{j=1}^{a} \Pr_{X_{(j)}^N}\left\{ \sup_{f \in \F} \frac{1}{m_j} \sum_{k \in I_{(j)}} ( \E_{\nu_\infty}[ \phi_{u}(\abs{f }) ]   - \phi_{u}(\abs{f(X_k)}) > \varepsilon \right\} \nonumber \\
    &\qquad\stackrel{(a)}{\leq} N\beta(a) + \sum_{j=1}^{a} \Pr_{X_\infty^{m_j}}\left\{ \sup_{f \in \F} \frac{1}{m_j} \sum_{k=1}^{m_j} (\E_{\nu_\infty}[\phi_u(\abs{f })] - \phi_u(\abs{f(X_k)}) > \varepsilon \right\} \nonumber \\
    &\qquad\stackrel{(b)}{\leq} N\beta(a) + \delta/2 \nonumber \\
    &\qquad\stackrel{(c)}{\leq} N \Gamma \rho^a + \delta/2 \nonumber \\
    &\qquad\stackrel{(d)}{\leq} \delta/2 + \delta/2 = \delta \:. \label{eq:probability_bound}
\end{align}
The inequality (a) follows from Lemma~\ref{lemma:subsampling},
(b) holds by the bounded differences inequality, since $X_\infty^{m_j}$ contains $m_j$ i.i.d. datapoints,
(c) uses the assumption on $\beta(a)$, and (d) follows by the definition of $a$ in \eqref{eq:def_a}.
Furthermore, using the fact that $\phi_u(\abs{\cdot})$ is $1/u$-Lipschitz, we bound the expected supremum
via the standard symmetrization inequality,
\begin{align}
  &\max_{j=1, ..., a} \E_{X_\infty^{m_j}}\left[ \sup_{f \in \F} \frac{1}{m_j} \sum_{k=1}^{m_j} (\E_{\nu_\infty}[ \phi_u(\abs{f }) ] - \phi_u(\abs{f(X_k)}) ) \right] \nonumber \\
  &\qquad\leq \max_{j=1, ...,a} 2 \mathcal{R}_{m_j}(\phi_u(\abs{\cdot}) \circ \F ) \leq \max_{j=1, ...,a} \frac{2}{u} \mathcal{R}_{m_j}(\F ) \:. \label{eq:expected_supremum_bound}
\end{align}
Above, $\mathcal{R}_{m_j}$ denotes the Rademacher complexity,
\begin{align*}
  \mathcal{R}_{m_j}(\F) := \E_{\varepsilon_1^{m_j}, X_\infty^{m_j}}\left[ \sup_{f \in \F} \frac{1}{m_j} \sum_{k=1}^{m_j} \varepsilon_k f(X_k) \right] \:.
\end{align*}
In view of \eqref{eq:epsilon_setting}, \eqref{eq:probability_bound}, and \eqref{eq:expected_supremum_bound},
with probability at least $1-\delta$,
\begin{align}
    \sup_{f \in \mathcal{F}} \frac{1}{N} \sum_{k=1}^{N} ( \E_{\nu_\infty}[ \phi_{u}(\abs{f }) ]   - \phi_{u}(\abs{f(X_k)})  ) \leq
        \max_{j=1, ..., a} \frac{4}{u} \mathcal{R}_{m_j}(\F ) + 2\sqrt{\frac{1}{2m} \log(2a/\delta)} \:.
\end{align}
Combining this inequality with \eqref{eq:mendelson_bound}, if
\begin{align}
  \max_{j=1, ..., a} \mathcal{R}_{m_j}(\F ) &\leq \frac{u}{16} Q_\infty(2u; \F ) \:, \label{eq:rademacher_bound} \\
    \underline{m} &\geq \frac{32\log(2a/\delta)}{Q_\infty^2(2u; \F )} \:, \label{eq:m_cond1}
\end{align}
then on this event we have
\begin{align*}
    \inf_{f \in \F} \frac{1}{N} \sum_{k=1}^{N} \abs{f(X_k)}^2 \geq \frac{u^2}{2} Q_\infty(2u;\F ) \:.
\end{align*}
Now we specialize to $\F = \{ \ip{\cdot}{t} : t \in S^{n-1} \}$,
for which
\begin{align*}
    \inf_{f \in \F} \frac{1}{N} \sum_{k=1}^{N} \abs{f(X_k)}^2 = \lambda_{\min}\left( \frac{1}{N} \sum_{k=1}^{N} X_kX_k^\T \right) \:.
\end{align*}
In this case,
\begin{align*}
  \mathcal{R}_{m_j}(\F ) &= \E_{\varepsilon_1^{m_j}, X_\infty^{m_j}}\bignorm{ \frac{1}{m_j} \sum_{k=1}^{m_j} \varepsilon_k X_k } \leq \frac{1}{m_j} \sqrt{\E_{\varepsilon_1^{m_j}, X_\infty^{m_j}} \bignorm{ \sum_{k=1}^{m_j} \varepsilon_k X_k }^2 } \\
    &= \frac{1}{m_j} \sqrt{ m_j \E_{\nu_\infty}[\norm{ X }^2] } = \frac{\sqrt{\E_{\nu_\infty}[ \norm{X}^2 ]}}{\sqrt{m_j}} \:.
\end{align*}
Hence, \eqref{eq:rademacher_bound} is satisfied if
\begin{align}
  \underline{m} \geq \frac{256 \E_{\nu_\infty}[ \norm{X}^2 ]}{u^2 Q^2_\infty(2 u ; \F ) } \:. \label{eq:m_cond2}
\end{align}
We now verify that these two inequalities on $\underline{m}$ are indeed valid.
Using the fact that for any real number $x$ we have $\floor{x} \geq x - 1$ and $x \leq \ceil{x}$,
\begin{align*}
  \underline{m} &= \floor{\frac{N}{a}} \geq \frac{N}{a} - 1 \geq N \frac{1-\rho}{\log(2 \Gamma N/\delta)} - 1 \\
  &\geq \max\left\{  \frac{1024 \E_{\nu_\infty}[ \norm{X}^2 ]}{\tau^2 Q^2_\infty(\tau) } , \frac{32\log(2a/\delta)}{Q_\infty^2(\tau)}  \right\} \:,
\end{align*}
where the last inequality holds from the assumption on $N$ in \eqref{eq:N_assumption}.
By performing a change of variables $\tau \gets 2u$, we see that
\eqref{eq:m_cond1} and \eqref{eq:m_cond2} both hold.
\end{proof}

Following a similar line of reasoning as in Koltchinskii and Mendelson, we
immediately recover a corollary to Theorem~\ref{thm:mendelson_markov},
where the small-ball condition in \eqref{eq:small_ball} is replaced by
a stronger moment contractivity assumption.
\begin{cor}
\label{cor:mendelson_markov}
Fix a $\delta \in (0, 1)$. Suppose that $(X_k)_{k=1}^{\infty}$ is a discrete-time
stochastic process as described in the hypothesis of Theorem~\ref{thm:mendelson_markov}.
For $X$ drawn from the stationary measure $\nu_\infty$, suppose that
the following conditions hold,
\begin{align}
  0 < \ell \leq \lambda_{\min}(\E[XX^\T]) \leq L \:, \:\: \sup_{t \in S^{n-1}} \frac{\norm{\ip{X}{t}}_{L^2}}{\norm{\ip{X}{t}}_{L^1}} \leq B \:. \label{eq:mendelson_assumptions_explicit}
\end{align}
Furthermore, suppose that $N$ satisfies
\begin{align*}
  N &\geq  \frac{1}{1-\rho} \log\left(\frac{2 \Gamma N}{\delta}\right)
  \left(\max\left\{ 65536 B^6 \frac{L}{\ell} n  ,
  512 B^4 \log\left( \frac{4}{\delta(1-\rho)} \log\left(\frac{2 \Gamma N}{\delta}\right) \right)
  \right\} + 1\right) \:.
\end{align*}
Then, with probability at least $1-\delta$,
\begin{align*}
  \lambda_{\min}\left( \frac{1}{N} \sum_{k=1}^{N} X_kX_k^\T \right) \geq \frac{\ell}{128 B^4} \:.
\end{align*}
\end{cor}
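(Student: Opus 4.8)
The plan is to reduce Corollary~\ref{cor:mendelson_markov} entirely to Theorem~\ref{thm:mendelson_markov} by manufacturing a concrete small-ball parameter $\tau$ together with a matching lower bound on $Q_\infty(\tau)$ out of the two moment hypotheses in \eqref{eq:mendelson_assumptions_explicit}. The only substantive step is converting the moment-equivalence condition $\sup_{t} \norm{\ip{X}{t}}_{L^2}/\norm{\ip{X}{t}}_{L^1} \leq B$ into a quantitative small-ball lower bound; once that is in hand, everything else is bookkeeping with the explicit constants in \eqref{eq:N_assumption} and in the conclusion of the theorem.

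First I would fix an arbitrary $t \in S^{n-1}$ and set $Z := \abs{\ip{X}{t}} \geq 0$, so that $\E[Z] = \norm{\ip{X}{t}}_{L^1}$ and $\E[Z^2] = \norm{\ip{X}{t}}_{L^2}^2 = t^\T \E[XX^\T] t$. The lower eigenvalue bound gives $\E[Z^2] \geq \ell$, hence $\norm{\ip{X}{t}}_{L^2} \geq \sqrt{\ell}$, and the moment-ratio bound then yields $\E[Z] = \norm{\ip{X}{t}}_{L^1} \geq \norm{\ip{X}{t}}_{L^2}/B \geq \sqrt{\ell}/B$. The central step is the Paley--Zygmund inequality, which for the nonnegative variable $Z$ and any $\theta \in [0,1]$ reads $\Pr(Z > \theta\, \E[Z]) \geq (1-\theta)^2 (\E[Z])^2 / \E[Z^2]$. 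Using $\E[Z^2] \leq B^2 (\E[Z])^2$ (equivalently $(\E[Z])^2/\E[Z^2] \geq B^{-2}$) and taking $\theta = 1/2$ gives $\Pr(Z > \tfrac{1}{2}\E[Z]) \geq 1/(4B^2)$. Since $\tfrac{1}{2}\E[Z] \geq \sqrt{\ell}/(2B)$, setting $\tau := \sqrt{\ell}/(2B)$ we obtain $\Pr\{\abs{\ip{X}{t}} \geq \tau\} \geq 1/(4B^2)$, and taking the infimum over $t \in S^{n-1}$ produces the small-ball bound $Q_\infty(\tau) \geq 1/(4B^2) > 0$. This is precisely the quantity that feeds into the theorem.

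With $\tau^2 = \ell/(4B^2)$ and $Q_\infty(\tau) \geq 1/(4B^2)$ established, the conclusion is immediate: Theorem~\ref{thm:mendelson_markov} yields $\lambda_{\min} \geq \tau^2 Q_\infty(\tau)/8 \geq (\ell/(4B^2)) \cdot (1/(4B^2))/8 = \ell/(128 B^4)$, matching the claimed bound. For the sample-size requirement I would verify that the corollary's hypothesis on $N$ implies \eqref{eq:N_assumption}. Here I would bound $\E_{\nu_\infty}[\norm{X}^2] = \Tr(\E[XX^\T]) \leq nL$ using the upper eigenvalue bound, and substitute $\tau^2 = \ell/(4B^2)$ together with $Q_\infty^{-2}(\tau) \leq 16 B^4$. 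The first branch of the maximum in \eqref{eq:N_assumption} becomes $1024 \cdot nL \cdot (4B^2) \cdot (16 B^4)/\ell = 65536\, B^6 (L/\ell)\, n$, and the second becomes $32 \cdot 16 B^4 \log(\cdots) = 512\, B^4 \log(\cdots)$, exactly reproducing the two terms in the corollary's hypothesis on $N$.

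I do not anticipate a genuine obstacle, since everything reduces to the black-box theorem; the one point requiring care is the direction of the inequalities when substituting the lower bound $Q_\infty(\tau) \geq 1/(4B^2)$ in place of the (unknown) true value. Because $Q_\infty(\tau)$ appears with a positive power in the conclusion but with an inverse square in the $N$-requirement, I must use the lower bound consistently: replacing the true $Q_\infty(\tau)$ by the smaller $1/(4B^2)$ only decreases $\tau^2 Q_\infty(\tau)/8$, so the eigenvalue guarantee $\ell/(128 B^4)$ holds \emph{a fortiori}, while simultaneously the same substitution inflates the right-hand side of \eqref{eq:N_assumption} to its worst case, so that the corollary's explicit $N$-bound indeed dominates \eqref{eq:N_assumption}. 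Thus the corollary's hypotheses imply those of the theorem, and its conclusion follows.
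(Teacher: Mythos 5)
Your proposal is correct and takes essentially the same route as the paper: the paper's proof likewise sets $\tau = \sqrt{\ell}/(2B)$, asserts $Q_\infty(\tau) \geq \frac{1}{4B^2}$ and $\E_{\nu_\infty}[\norm{X}^2] \leq Ln$, and then invokes Theorem~\ref{thm:mendelson_markov}. The only difference is that you spell out the Paley--Zygmund derivation of the small-ball bound and the constant bookkeeping, which the paper leaves implicit by citing the reasoning of Koltchinskii and Mendelson.
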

\begin{proof}
The assumptions of \eqref{eq:mendelson_assumptions_explicit} imply that
$\E_{\nu_\infty}[\norm{X}^2] \leq L n$ and that for $\tau =
\frac{\sqrt{\ell}}{2 B}$, we have $Q_\infty(\tau) \geq \frac{1}{4B^2}$.
The claim now follows by Theorem~\ref{thm:mendelson_markov}.
\end{proof}

\section{Fast-Mixing of Linear Dynamical Systems}
\label{sec:mixing_lds}

In order to pave the way for our main result regarding LQR, we need to
understand the mixing time of a stable linear, time-invariant (LTI) dynamical system.
This will allow us to directly apply the results from Section~\ref{sec:bound}.
Consider the LTI system
\begin{align}
   X_{k+1} = A X_k + W_k \:, \:\: W_k \sim \mathcal{N}(0, I) \:, \label{eq:lti_system}
\end{align}
with $A$ an $n \times n$ matrix, initial condition $X_0 = 0$, and $W_k$ independent from $W_{k'}$ for all $k \neq k'$.
This section is dedicated towards bounding the $\beta$-mixing coefficient
of \eqref{eq:lti_system}.

It is not hard to see that the marginal distribution $\nu_k$ of $X_k$ evolving
according to \eqref{eq:lti_system}
is $\mathcal{N}(0, P_k)$, where the covariance $P_k := \sum_{t=0}^{k-1} (A^t)(A^t)^\T$ is positive-definite.
The stability of the linear system \eqref{eq:lti_system}
is equivalent to the spectral radius of $A$, denoted $\rho(A)$, being strictly less than one.
When $\rho(A) < 1$,
the stationary distribution $\nu_\infty$ of $(X_k)_{k=1}^{\infty}$ is $\mathcal{N}(0, P_\infty)$,
where the covariance matrix $P_\infty$ is
the unique, positive-definite solution of the
discrete-time Lyapunov equation
\begin{align}
  A P_\infty A^\T - P_\infty + I = 0 \:.
\end{align}
Observe that in the case of a Markov chain, the $\beta$-mixing coefficient \eqref{eq:beta_mixing_def} simplifies to
\begin{align*}
    \beta(k) = \sup_{t \geq 1} \E_{x\sim \nu_t}[ \norm{\Pr_{X_k}(\cdot|X_0{=}x) - \nu_\infty}_{\mathrm{tv}} ] \:. \label{eq:beta_coeff_markov}
\end{align*}
The following upper bound on
$\E_{x\sim \nu_t}[ \norm{\Pr_{X_k}(\cdot|X_0{=}x) - \nu_\infty}_{\mathrm{tv}} ]$
uses the assumption of a known decay on the spectral
norm of $A^k$.
\begin{prop}
\label{prop:mixing_coeff}
Suppose that $\norm{A^k} \leq \Gamma \rho^k$ for all $k \geq 0$, where
$\Gamma > 0$ and $\rho \in (0, 1)$.
Let $\Pr_{X_k}(\cdot|X_0{=}x)$ denote the conditional distribution of $X_k$ given $X_0 = x$.
We have that for all $k \geq 0$ and any distribution $\nu_0$ over $x$,
\begin{align*}
	\E_{x \sim \nu_0}[\norm{ \Pr_{X_k}(\cdot| X_0{=}x) - \nu_\infty }_{\mathrm{tv}}] \leq \frac{\Gamma}{2} \sqrt{\E_{\nu_0}[\norm{x}^2] + \frac{n}{1-\rho^2}} \rho^{k} \:.
\end{align*}
\end{prop}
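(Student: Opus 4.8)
The plan is to reduce the total-variation distance to a KL divergence via Pinsker's inequality, evaluate that divergence explicitly using the fact that both laws are Gaussian, and then bound the resulting terms using the spectral-decay hypothesis $\norm{A^k}\le\Gamma\rho^k$. The one subtlety that makes the whole computation go through cleanly is the choice of the \emph{direction} of the KL divergence.

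First I would record the two Gaussian laws. Unrolling \eqref{eq:lti_system} from $X_0 = x$ gives $X_k = A^k x + \sum_{t=0}^{k-1} A^t W_{k-1-t}$, so $\Pr_{X_k}(\cdot\mid X_0{=}x) = \mathcal{N}(A^k x, P_k)$ with $P_k = \sum_{t=0}^{k-1} A^t(A^t)^\T$, while $\nu_\infty = \mathcal{N}(0, P_\infty)$. The key algebraic identity is $P_\infty = P_k + \Delta$ with $\Delta := A^k P_\infty (A^k)^\T \succeq 0$, obtained by splitting the series $P_\infty = \sum_{t\ge0} A^t(A^t)^\T$ at index $k$; in particular $P_k \preceq P_\infty$, and $P_k \succeq A^0(A^0)^\T = I$ for $k\ge1$ (the case $k=0$ is degenerate and can be handled separately).

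Since total variation is symmetric, I would apply Pinsker in the form $\norm{\Pr_{X_k}(\cdot\mid X_0{=}x) - \nu_\infty}_{\mathrm{tv}} \le \sqrt{\tfrac12\,\mathrm{KL}(\nu_\infty \,\|\, \Pr_{X_k}(\cdot\mid X_0{=}x))}$, deliberately placing $\nu_\infty$ in the first argument. The Gaussian KL formula then gives
\[
2\,\mathrm{KL} = \bigl(\mathrm{tr}(P_k^{-1}P_\infty) - n\bigr) + (A^k x)^\T P_k^{-1}(A^k x) + \log\frac{\det P_k}{\det P_\infty}.
\]
With this ordering the three terms cooperate: using $P_\infty = P_k + \Delta$, the trace term equals $\mathrm{tr}(P_k^{-1}\Delta) \le \mathrm{tr}(\Delta)$ because $P_k \succeq I$ forces $P_k^{-1}\preceq I$; the log-determinant term is $\le 0$ because $P_k \preceq P_\infty$, so it may simply be dropped; and the mean term is at most $\norm{A^k x}^2 \le \Gamma^2\rho^{2k}\norm{x}^2$. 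It then remains to control $\mathrm{tr}(\Delta) = \sum_{s\ge k}\norm{A^s}_F^2 \le n\Gamma^2\sum_{s\ge k}\rho^{2s} = \tfrac{n\Gamma^2\rho^{2k}}{1-\rho^2}$, using $\norm{A^s}_F^2 \le n\norm{A^s}^2$. Combining yields the pointwise bound $\mathrm{KL} \le \tfrac12\Gamma^2\rho^{2k}\bigl(\norm{x}^2 + \tfrac{n}{1-\rho^2}\bigr)$, which after Pinsker is exactly the claimed pointwise estimate. Finally I would integrate over $x\sim\nu_0$ and apply Jensen's inequality (concavity of $\sqrt{\cdot}$) to pull the expectation inside the square root, producing the stated bound.

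The main obstacle — really the only place where care is needed — is the choice of KL direction. If one instead bounds $\mathrm{KL}(\Pr_{X_k}\,\|\,\nu_\infty)$, the log-determinant term $\log(\det P_\infty/\det P_k)$ carries the unfavorable sign and must be estimated through the eigenvalues $\lambda_i$ of $P_\infty^{-1}\Delta$; the largest of these can lie as close to $1$ as $1-\Omega((1-\rho^2)/(n\Gamma^2))$, so $-\log(1-\lambda_i)$ is large and the naive estimate loses a spurious factor of order $\lambda_{\max}(P_\infty)\sim n\Gamma^2/(1-\rho^2)$. Reversing the divergence replaces that delicate log-determinant term by one of the correct sign and turns the trace term into $\mathrm{tr}(P_k^{-1}\Delta)$, which is immediately dominated by $\mathrm{tr}(\Delta)$ — and this is precisely what produces the clean $n/(1-\rho^2)$ factor appearing in the statement.
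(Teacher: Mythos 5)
Your proof is correct (for $k \geq 1$; at $k=0$ the conditional law is a point mass, the KL is infinite, and in fact the stated inequality can fail there --- the paper's own proof has the same implicit restriction since its KL formula requires $P_k \succ 0$). Your route is a mirror image of the paper's rather than a copy of it: the paper applies Pinsker to the \emph{forward} divergence $D(\Pr_{X_k}(\cdot|X_0{=}x), \nu_\infty)$, in which the trace term $\Tr(P_\infty^{-1}P_k) - n = -\Tr(P_\infty^{-1}\Delta_k) \leq 0$ is the favorable one that gets dropped, and the log-determinant term is the one that must be bounded, via the factorization $\det(P_\infty)/\det(P_k) = \det(I + \Delta_k^{1/2}P_k^{-1}\Delta_k^{1/2})$, the inequality $\log\det(M) \leq \Tr(M - I)$, and von Neumann/H{\"o}lder to get $\Tr(\Delta_k P_k^{-1}) \leq \norm{P_k^{-1}}_*\norm{\Delta_k} \leq n\Gamma^2\rho^{2k}/(1-\rho^2)$. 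In your reversed divergence the roles swap: the log-determinant is trivially nonpositive and the trace term $\Tr(P_k^{-1}\Delta) \leq \Tr(\Delta) \leq n\Gamma^2\rho^{2k}/(1-\rho^2)$ carries the content. Both routes land on exactly the same bound, so nothing is gained or lost either way. One correction, though: your closing paragraph asserting that the forward direction necessarily loses a spurious factor of order $n\Gamma^2/(1-\rho^2)$ is mistaken. That loss only occurs if one expands $\det(P_k)/\det(P_\infty) = \det(I - P_\infty^{-1}\Delta_k)$ and bounds $-\sum_i \log(1-\lambda_i)$ naively; normalizing by $P_k$ instead of $P_\infty$, as the paper does, avoids eigenvalues near $1$ entirely and gives the clean linear-in-trace bound. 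So the choice of KL direction is a matter of taste here, not of necessity.
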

\begin{proof}
By Pinsker's inequality,
\begin{align*}
	\norm{\Pr_{X_k}(\cdot | X_0{=}x) - \nu_\infty}_{\mathrm{tv}} \leq \sqrt{\frac{1}{2} D(\Pr_{X_k}(\cdot | X_0{=}x), \nu_\infty)} \:,
\end{align*}
where $D(\cdot, \cdot)$ denotes the KL-divergence.
It is easy to check that $\Pr_{X_k}(\cdot|X_0{=}x) = \mathcal{N}(A^k x, P_k)$.
Using the formula for KL-divergence between two multivariate Gaussians,
\begin{align*}
	D(\Pr_{X_k}(\cdot | X_0{=}x), \nu_\infty) = \frac{1}{2} \left( \Tr(P_\infty^{-1} P_k) + x^\T (A^k)^\T P_\infty^{-1} A^k x - n + \log(\det(P_\infty)/\det(P_k))   \right) \:.
\end{align*}
Now, write $\Delta_k := P_\infty - P_k$, where $\Delta_k \succcurlyeq 0$.
We have,
\begin{align*}
	\frac{\det(P_\infty)}{\det(P_k)} = \det(P_\infty P_k^{-1}) = \det( (P_k + \Delta_k) P_k^{-1} ) = \det(I + \Delta_k P_k^{-1}) = \det(I + \Delta_k^{1/2} P_k^{-1} \Delta_k^{1/2})\:.
\end{align*}
	Therefore, using the inequality that $\log\det(A) \leq \Tr(A - I)$ for any positive definite $A$,
\begin{align*}
	\log\det(I + \Delta_k^{1/2} P_k^{-1} \Delta_k^{1/2}) \leq \Tr( \Delta_k^{1/2} P_k^{-1} \Delta_k^{1/2} ) = \Tr( \Delta_k P_k^{-1}) \:.
\end{align*}
On the other hand,
\begin{align*}
	\Tr(P_\infty^{-1}P_k) = \Tr(P_\infty^{-1}(P_\infty - \Delta_k)) = \Tr(I - P_\infty^{-1}\Delta_k) = n - \Tr(P_\infty^{-1} \Delta_k) \leq n \:.
\end{align*}
Hence, combining these inequalities and letting
$\norm{x}_M := \sqrt{x^\T M x}$ for a positive-definite matrix $M$ and
$\norm{M}_*$ denote the nuclear norm of a matrix $M$,
\begin{align*}
	D(\Pr_{X_k}(\cdot | X_0{=}x), \nu_\infty)
	\leq \frac{1}{2}\left( x^\T (A^k)^\T P_\infty^{-1} A^k x + \Tr(\Delta_k P_k^{-1}) \right)
	\leq \frac{1}{2}(\norm{A^k x}^2_{P_\infty^{-1}} + \norm{P_k^{-1}}_* \norm{\Delta_k}) \:,
\end{align*}
where the last inequality follows
from von Neumann's trace inequality combined with H{\"o}lder's inequality.
Using the decay assumption,
\begin{align*}
	\norm{\Delta_k} = \bignorm{\sum_{t=k}^{\infty} A^t (A^k)^\T} \leq \Gamma^2 \sum_{t=k}^{\infty} \rho^{2t} = \frac{\Gamma^2 \rho^{2k}}{1-\rho^2} \:.
\end{align*}
Furthermore, since $P_\infty \succcurlyeq P_k \succcurlyeq I$, we have that
$\norm{P_k^{-1}}_* \leq n$ and $\norm{A^k x}^2_{P_\infty^{-1}} \leq \norm{A^k x}^2 \leq \Gamma^2 \norm{x}^2 \rho^{2k}$.
This gives the bound
\begin{align*}
	D(\Pr_{X_k}(\cdot | X_0{=}x), \nu_\infty) \leq \frac{1}{2} \left(\Gamma^2 \norm{x}^2 \rho^{2k} + \frac{n\Gamma^2}{1-\rho^2} \rho^{2k}\right) \:.
\end{align*}
The claim now follows by Jensen's inequality.
\end{proof}

Now we turn our attention to obtaining
a quantitative handle on the decay rate of the spectral norm of $A^k$.
To do this, we introduce some basic concepts from robust control theory;
see \cite{zhou95} for a more thorough treatment.
Let $\mathbb{T}$ (resp. $\mathbb{D}$) denote the unit circle (resp. open unit disk)
in the complex plane.
Let $\mathcal{RH}_\infty$ denote the space of matrix-valued, real-rational
functions which are analytic on $\mathbb{D}^c$.
For a $G \in \mathcal{RH}_\infty$,
we define the $\hinf$-norm $\hinfnorm{G}$ as
\begin{align}
  \hinfnorm{G} := \sup_{z \in \mathbb{T}} \: \norm{G(z)} \:.
\end{align}
Furthermore, given a square matrix $A$, we define its resolvant $\Phi_A(z)$ as
\begin{align}
  \Phi_A(z) := (z I - A)^{-1} \:.
\end{align}
When $A$ is stable, $\Phi_A \in \mathcal{RH}_\infty$, and hence $\hinfnorm{G} < \infty$.
The next proposition characterizes the decay rate in terms of the stability radius $\rho(A)$
and the $\hinf$-norm $\hinfnorm{\Phi_A}$. While the result is standard, we include its proof
for completeness.
\begin{prop}[See e.g. Lemma 1 from~\cite{goldenshluger01}]
\label{prop:spectral_decay}
Let $A$ be a stable matrix with spectral radius $\rho(A)$.
Fix any $\rho \in (\rho(A), 1)$. For all $k \geq 1$, we have
\begin{align}
  \norm{A^k} \leq \hinfnorm{\Phi_{\rho^{-1} A}} \rho^k \:.
\end{align}
\end{prop}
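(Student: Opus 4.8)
The plan is to recover $A^k$ (up to the scaling $\rho^{-k}$) as a Laurent coefficient of the resolvant $\Phi_{\rho^{-1}A}$ evaluated on the unit circle, and then to bound that coefficient by the supremum of the function over $\mathbb{T}$, which is exactly $\hinfnorm{\Phi_{\rho^{-1}A}}$. First I would reduce to a scaled problem: set $B := \rho^{-1} A$. Since $\rho > \rho(A)$, the spectral radius of $B$ is $\rho(B) = \rho^{-1}\rho(A) < 1$, so $B$ is stable and $\Phi_B \in \mathcal{RH}_\infty$. Because $B^k = \rho^{-k} A^k$, it suffices to prove $\norm{B^k} \leq \hinfnorm{\Phi_B}$ for all $k \geq 1$; multiplying through by $\rho^k$ then gives the claim.

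Next I would expand the resolvant in its Neumann series. For $\abs{z} \geq 1$,
\[
  \Phi_B(z) = (zI - B)^{-1} = \sum_{j=0}^{\infty} B^j z^{-(j+1)} \:.
\]
Since $\rho(B) < 1$, Gelfand's formula gives $\norm{B^j}^{1/j} \to \rho(B) < 1$, so this series converges absolutely and uniformly on $\mathbb{T}$, which justifies integrating it term by term. The matrix $B^k$ is the coefficient of $z^{-(k+1)}$, and I extract it by the Cauchy integral formula: parametrizing $z = e^{i\theta}$ and noting that $\frac{1}{2\pi}\int_0^{2\pi} e^{i(k-j)\theta}\,d\theta = \ind_{j = k}$ annihilates every term except $j = k$, we obtain
\[
  B^k = \frac{1}{2\pi} \int_0^{2\pi} e^{i(k+1)\theta}\, \Phi_B(e^{i\theta})\, d\theta \:.
\]

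Finally I would take operator norms and pull the norm inside the integral by the triangle inequality,
\[
  \norm{B^k} \leq \frac{1}{2\pi} \int_0^{2\pi} \norm{\Phi_B(e^{i\theta})}\, d\theta \leq \sup_{\theta} \norm{\Phi_B(e^{i\theta})} = \hinfnorm{\Phi_B} \:,
\]
and then substitute $B = \rho^{-1}A$ together with $\norm{A^k} = \rho^k \norm{B^k}$ to conclude $\norm{A^k} \leq \hinfnorm{\Phi_{\rho^{-1}A}}\,\rho^k$. The one point that genuinely requires care is the convergence/uniformity needed to legitimize the term-by-term integration — equivalently, the observation that the hypothesis $\rho > \rho(A)$ is precisely what forces $\rho(\rho^{-1}A) < 1$ and hence analyticity of $\Phi_{\rho^{-1}A}$ on a neighborhood of $\mathbb{T}$. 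Everything else is the standard coefficient-extraction-plus-triangle-inequality estimate for $\mathcal{H}_\infty$ functions, so no delicate computation is expected beyond verifying this spectral-radius bookkeeping.
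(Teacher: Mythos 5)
Your proof is correct, and it reaches the bound by a route whose mechanics differ from the paper's even though both rest on the same underlying principle (a Laurent coefficient of an analytic function is dominated by the supremum of that function on a circle inside its domain of analyticity). The paper first proves a general coefficient bound for an arbitrary $G \in \mathcal{RH}_\infty$: writing $G(z) = \sum_{k} A_k z^{-k}$, it scalarizes by fixing $u, v \in S^{n-1}$, forms $H_{uv}(z) = u^* G(z^{-1}) v$, and applies Cauchy's derivative estimate at the origin together with the maximum modulus principle at radius $1/\rho$; only at the very end does it specialize to the resolvant and perform the change of variables $\hinfnorm{\Phi_A(\rho\,\cdot)} = \rho^{-1} \hinfnorm{\Phi_{\rho^{-1}A}}$. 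You instead rescale first, setting $B = \rho^{-1}A$ so that the whole argument lives on the unit circle, and you extract $B^k$ directly as a matrix-valued Fourier coefficient of $\Phi_B$ restricted to $\mathbb{T}$, bounding it by $\hinfnorm{\Phi_B}$ via the triangle inequality for integrals. What your version buys is elementarity: no scalarization, no maximum modulus principle, only the Neumann series, Fourier orthogonality, and the Weierstrass test (your invocation of Gelfand's formula to get $\norm{B^j}^{1/j} \to \rho(B) < 1$ is exactly the justification the term-by-term integration needs). What the paper's version buys is generality: its intermediate claim bounds the power-series coefficients of any $G \in \mathcal{RH}_\infty$, not just of a resolvant, though in the end the paper only uses the resolvant case. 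Your final bookkeeping, $\rho(B) = \rho(A)/\rho < 1$ and $\norm{A^k} = \rho^k \norm{B^k}$, correctly recovers the stated inequality.
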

\begin{proof}
We first prove the following claim. Let $G \in \mathcal{RH}_\infty$
with stability radius $\rho(G) \in (0, 1)$.
Fix any $\rho \in (\rho(G), 1)$, and
write $G(z)$ in its power-series expansion $G(z) = \sum_{k=0}^{\infty} A_k z^{-k}$.
Then, for all $k \geq 1$, we have
\begin{align}
  \norm{A_k} \leq \hinfnorm{G(\rho z)} \rho^k \:. \label{eq:spectral_decay}
\end{align}
Fix two vectors $u, v \in S^{n-1}$.
Define the function $H_{uv}(z) := u^* G(z^{-1}) v$, which is analytic for all $\abs{z} \leq 1/\rho$.
It is easy to check that $k$-th derivative $H_{uv}^{(k)}(0) = k! u^* A_k v$.
Therefore,
\begin{align*}
  k!\abs{u^* A_k v} &= \abs{H_{uv}^{(k)}(0)} \leq k! \rho^{k} \max_{\abs{z} \leq 1/\rho} \abs{H_{uv}(z)} = k! \rho^k \max_{\abs{z} \geq \rho} \abs{u^* G(z) v} \\
  &= k!\rho^k \max_{\abs{z} \geq 1} \abs{u^* G(\rho z) v}
  \leq k!\rho^k \max_{\abs{z} \geq 1} \norm{ G(\rho z) }
  = k! \rho^k \hinfnorm{G(\rho z)} \:.
\end{align*}
Above, the first inequality is Cauchy's estimate formula for analytic functions,
and the last equality follows from the maximum modulus principle.
Since the upper bound is independent of $u,v$, we can take the supremum
over all $u,v \in S^{n-1}$ and reach the conclusion \eqref{eq:spectral_decay}.

We now apply this claim to the resolvant $\Phi_{A}$,
which has the series expansion $\Phi_{A}(z) = \sum_{k=1}^{\infty} A^{k-1} z^{-k}$.
For any $\rho \in (\rho(A), 1)$, \eqref{eq:spectral_decay} states that for all $k \geq 1$,
\begin{align*}
  \norm{A^{k-1}} \leq \hinfnorm{\Phi_A(\rho z)} \rho^k = \hinfnorm{ \rho^{-1} \Phi_{\rho^{-1} A}(z) }  \rho^k = \norm{\Phi_{\rho^{-1} A}} \rho^{k-1} \:.
\end{align*}
\end{proof}
Combining these last two claims with \eqref{eq:beta_coeff_markov}
and using the fact that $\E_{\nu_t}[\norm{X}^2]
\leq \E_{\nu_\infty}[\norm{X}^2]$ for all $t \geq 1$, we have the following
corollary, which is the main result of this section.
\begin{cor}
\label{cor:mixing_coeff}
Fix any $\rho \in (\rho(A), 1)$. For any $k \geq 1$ we have
\begin{align}
    \beta(k) \leq \frac{\hinfnorm{\Phi_{\rho^{-1} A}}}{2} \sqrt{\Tr(P_\infty) +  \frac{n}{1-\rho^2}} \rho^k \:.
\end{align}
\end{cor}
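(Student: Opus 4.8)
The plan is to combine the two preceding propositions with the Markov-chain form of the mixing coefficient. Because \eqref{eq:lti_system} is a Markov chain, the general definition \eqref{eq:beta_mixing_def} reduces to
\[
\beta(k) = \sup_{t \geq 1} \E_{x \sim \nu_t}\big[\norm{\Pr_{X_k}(\cdot \mid X_0 {=} x) - \nu_\infty}_{\mathrm{tv}}\big] \:,
\]
so it is enough to bound each inner expectation by a quantity independent of $t$ and then take the supremum.

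First I would fix $\rho \in (\rho(A), 1)$ and invoke Proposition~\ref{prop:spectral_decay} to get $\norm{A^k} \leq \hinfnorm{\Phi_{\rho^{-1} A}} \rho^k$, and set $\Gamma := \hinfnorm{\Phi_{\rho^{-1} A}}$. A minor point to check is that Proposition~\ref{prop:mixing_coeff} requires this decay for every $k \geq 0$, including the trivial index $k=0$ where $\norm{A^0} = 1$. This causes no trouble: the proof of Proposition~\ref{prop:spectral_decay} actually establishes the shifted bound $\norm{A^{k-1}} \leq \Gamma \rho^{k-1}$ for all $k \geq 1$, so taking $k = 1$ gives $1 = \norm{I} \leq \Gamma$, and hence the hypothesis $\norm{A^k} \leq \Gamma \rho^k$ holds for all $k \geq 0$.

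With the decay constant in hand, I would apply Proposition~\ref{prop:mixing_coeff} once for each marginal, taking its initial distribution $\nu_0$ to be $\nu_t$, which gives
\[
\E_{x \sim \nu_t}\big[\norm{\Pr_{X_k}(\cdot \mid X_0 {=} x) - \nu_\infty}_{\mathrm{tv}}\big] \leq \frac{\Gamma}{2} \sqrt{\E_{\nu_t}[\norm{x}^2] + \frac{n}{1-\rho^2}}\, \rho^k \:.
\]
To make the right-hand side uniform in $t$, I would use the monotonicity noted in the surrounding text, namely $\E_{\nu_t}[\norm{x}^2] = \Tr(P_t) \leq \Tr(P_\infty) = \E_{\nu_\infty}[\norm{x}^2]$, which is immediate from $P_t \preccurlyeq P_\infty$. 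Substituting this stationary upper bound, recalling $\Gamma = \hinfnorm{\Phi_{\rho^{-1} A}}$, and taking the supremum over $t \geq 1$ yields exactly the claimed inequality. There is no real obstacle here --- the corollary is a bookkeeping assembly of Propositions~\ref{prop:mixing_coeff} and~\ref{prop:spectral_decay} --- and the only care needed is the $k=0$ edge case above and confirming that the second-moment term is the only $t$-dependent piece, so that its monotone stationary bound cleanly removes the dependence on $t$.
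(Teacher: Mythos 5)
Your proof is correct and follows essentially the same route as the paper, which obtains the corollary by combining Propositions~\ref{prop:mixing_coeff} and~\ref{prop:spectral_decay} with the Markov-chain form of $\beta(k)$ and the monotonicity $\E_{\nu_t}[\norm{X}^2] \leq \E_{\nu_\infty}[\norm{X}^2] = \Tr(P_\infty)$. Your check of the $k=0$ edge case (that the proof of Proposition~\ref{prop:spectral_decay} yields $\norm{A^j} \leq \hinfnorm{\Phi_{\rho^{-1}A}}\rho^j$ for all $j \geq 0$, hence $\hinfnorm{\Phi_{\rho^{-1}A}} \geq 1$) is a nice bit of care that the paper leaves implicit.
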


\section{Least-Squares Temporal Difference Learning}
\label{sec:lstd}

We turn our attention to the LSTD estimator. The goal of LSTD
is to compute the value function $V^\pi$ associated with a policy $\pi$ for an MDP.
This is an important primitive operation in many RL algorithms, such as policy-iteration.

Consider an MDP $\mathcal{M} = (\mathcal{S}, \mathcal{A}, p, \gamma, r)$,
where $\mathcal{S}$ denotes the state-space,
$\mathcal{A}$ denotes the action-space, $p : \mathcal{S} \times \mathcal{A} \longrightarrow \mu(S)$
denotes the transition kernel of the dynamics with $\mu(S)$ denoting the space of measures on $\mathcal{S}$,
$\gamma \in (0, 1)$ is the discount factor, and $r : \mathcal{S} \times \mathcal{A} \longrightarrow \R$
is the reward function.
Given a policy $\pi : \mathcal{S} \longrightarrow \mathcal{A}$, its value function $V^\pi : \mathcal{S} \longrightarrow \R$ is defined as
\begin{align*}
  V^\pi(x) := \E\left[ \sum_{k=0}^{\infty} \gamma^k r(X_k, \pi(X_k)) \; \bigg| \; X_0 = x \right] \:, \:\: X_{k+1} \sim p(\cdot | X_k, \pi(X_k) ) \:.
\end{align*}
Bellman's equation for the discounted, infinite-horizon
cost~\cite{bertsekas07} states that $V^\pi$ is the solution
to the fixed-point equation
\begin{align}
  V^\pi(x) = r(x, \pi(x)) + \gamma \E_{x' \sim p(\cdot|x,\pi(x))}[V^\pi(x')] \:, \:\: x \in \mathcal{S} \:. \label{eq:bellman_general}
\end{align}
When $\mathcal{S}$ is finite, dynamic programming can be used to solve
\eqref{eq:bellman_general}.
However, when $\mathcal{S}$ is continuous,
solving \eqref{eq:bellman_general} in general is difficult
without imposing additional structure.
By assuming that
$V^\pi$ admits the representation $V^\pi(x) = \ip{\phi(x)}{v_\pi}$ for some feature map $\phi : \mathcal{S} \longrightarrow \R^{d}$,
one turns \eqref{eq:bellman_general} into a system of
linear equations; this is known as the \emph{linear-architecture} assumption.
Specifically, if the dynamics $p(\cdot|x,u)$ are known, then
$V^\pi$ can be recovered as the solution to the system of linear equations
for $v_\pi$,
\begin{align}
  \ip{\phi(x) - \gamma \psi(x) }{v_\pi} = r(x, \pi(x)) \:, \:\: \psi(x) := \E_{x'\sim p(\cdot|x,\pi(x))}[ \phi(x') ] \:. \label{eq:bellman_linear}
\end{align}
Of course, we are interested in settings where the dynamics $p(\cdot|x,u)$ are not
known, and hence we cannot directly compute $\psi(x)$ in \eqref{eq:bellman_linear}.
This is where the LSTD estimator enters the picture:
given a trajectory $\{(X_k, R_k, X_{k+1})\}_{k=1}^{N}$ of length $N$,
the LSTD estimator $\widehat{v}_{\mathrm{lstd}}$ approximates the solution
to \eqref{eq:bellman_linear} by solving
\begin{align}
  \widehat{v}_{\mathrm{lstd}} = \left( \sum_{k=1}^{N} \phi(X_k)(\phi(X_k) - \gamma \phi(X_{k+1}))^\T \right)^{\dag} \left( \sum_{k=1}^{N} \phi(X_k) R_k \right) \:, \label{eq:lstd_estimator}
\end{align}
where $(\cdot)^{\dag}$ denotes the pseudo-inverse.
The curious looking nature of \eqref{eq:lstd_estimator}
accounts for the fact that when
$\phi(X_k) - \gamma \phi(X_{k+1})$ is used as an estimate
for $\phi(X_k) - \gamma \psi(X_k)$ in \eqref{eq:bellman_linear},
the noise in the linear measurement is not independent
from the covariate; see e.g.~\cite{bradtke96} for a more detailed discussion
of the issue. For completeness, in
Appendix~\ref{sec:appendix:lstd} we provide a more rigorous
justification for the estimator \eqref{eq:lstd_estimator} which
follows the development in Lazaric et al.~\cite{lazaric12}.

We will let the matrix $\Phi \in \R^{N \times d}$ denote the matrix
where the $k$-th row is $\phi(X_k)$.
While the main result of this section is a bound on the sample complexity of
the LSTD estimator on LQR, we first consider the implications of
Theorem~\ref{thm:mendelson_markov} on LSTD when both the features
$\phi$ and the rewards are bounded, in order to compare to the setting
of Lazaric et al.
We will then study the LQR problem, which is the simplest
non-trivial MDP which relaxes these boundedness assumptions.

\subsection{Bounded features and rewards}

For this section only we assume that $\sup_{x \in \mathcal{S}} \:\norm{\phi(x)}_\infty^2 \leq \overline{L}$
and $\sup_{x \in \mathcal{S}, a \in \mathcal{A}} \abs{r(s,a)} \leq R_{\mathrm{max}}$.
Under these assumptions, we immediately have $\sup_{x \in \mathcal{S}} \abs{V^\pi(x)} \leq \frac{1}{1-\gamma} R_{\mathrm{max}} := V_{\mathrm{max}}$.
The following result from Lazaric et al.\ gives a
bound on the in-sample prediction error of the estimator
$\widehat{V^\pi}(\cdot) := \ip{\phi(\cdot)}{\widehat{v}_{\mathrm{lstd}}}$.
\begin{thm}[Theorem 1, Lazaric et al.~\cite{lazaric12}]
\label{thm:lstd_bound}
With probability at least $1-\delta$, we have
\begin{align}
  \norm{ \widehat{V^\pi} - V^\pi }_N \leq \frac{\gamma}{1-\gamma} V_{\mathrm{max}} \sqrt{\frac{\overline{L} d}{\nu_N}} \left( \sqrt{\frac{8 \log(2d/\delta)}{N}} + \frac{1}{N} \right) \:,
\end{align}
where $\nu_N$ is the smallest non-zero eigenvalue of $\frac{1}{N} \Phi^\T \Phi$
and $\norm{\cdot}_N$ denotes the $L^2$-norm w.r.t. the empirical measure $\frac{1}{N} \sum_{k=1}^{N} \delta_{X_k}$.
\end{thm}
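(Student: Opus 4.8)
The plan is to exploit the fixed-point characterization of the LSTD solution together with the martingale structure of the Bellman residual, reducing the probabilistic content to a scalar Azuma--Hoeffding estimate applied coordinatewise. Since the value function of a linear policy on LQR is exactly quadratic, I will work under the realizability assumption $V^\pi = \ip{\phi(\cdot)}{v_\star}$, so that the projection of $V^\pi$ onto $\colspace(\Phi)$ agrees with $V^\pi$ at the sampled states and no approximation-error term survives (consistent with the stated bound, which contains only estimation error). Let $\widehat{\Pi}$ denote the orthogonal projection onto $\colspace(\Phi)$ under the empirical inner product $\ip{\cdot}{\cdot}_N$, and let $\widehat{T}$ be the pathwise empirical Bellman operator $(\widehat{T}V)(X_k) = R_k + \gamma V(X_{k+1})$. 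The estimator \eqref{eq:lstd_estimator} is precisely the fixed point $\widehat{V} = \widehat{\Pi}\widehat{T}\widehat{V}$, where $\widehat{V}(X_k) = \ip{\phi(X_k)}{\widehat{v}_{\mathrm{lstd}}}$.

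First I would decompose the error $e := \widehat{V} - V^\pi$ (viewed as a vector over the sampled states). Using the fixed-point identity and $\widehat{\Pi}V^\pi = V^\pi$ on the sample, I obtain $e = \widehat{\Pi}(\widehat{T}\widehat{V} - \widehat{T}V^\pi) + \widehat{\Pi}(\widehat{T}V^\pi - V^\pi)$. The first summand is a contraction term: since $(\widehat{T}\widehat{V} - \widehat{T}V^\pi)(X_k) = \gamma(\widehat{V}(X_{k+1}) - V^\pi(X_{k+1}))$, it equals $\gamma$ times the index-shifted error $e'$. The second summand is the Bellman noise $\xi_k := (\widehat{T}V^\pi - V^\pi)(X_k) = \gamma(V^\pi(X_{k+1}) - \E[V^\pi(X') \mid X_k])$, a bounded martingale-difference sequence with respect to the natural filtration (the reward cancels against the true Bellman equation). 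Taking empirical norms, dropping $\widehat{\Pi}$ inside the inner products since $e \in \colspace(\Phi)$, and applying Cauchy--Schwarz to the contraction term gives $\norm{e}_N^2 \leq \gamma\norm{e}_N\norm{e'}_N + \ip{e}{\xi}_N$. Bounding $\norm{e'}_N \leq \norm{e}_N$ up to a single boundary term then yields $(1-\gamma)\norm{e}_N^2 \lesssim \ip{e}{\xi}_N + (\text{boundary})$, which is where the additive $1/N$ in the stated bound originates.

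Next I would convert the linear functional $\ip{e}{\xi}_N$ into a bound on $\norm{e}_N$. Writing $e = \Phi w$ with $w$ in the row space of $\Phi$, I have $\ip{e}{\xi}_N = \tfrac{1}{N}\ip{w}{\Phi^\T\xi}$ and $\norm{e}_N^2 = w^\T(\tfrac{1}{N}\Phi^\T\Phi)w \geq \nu_N\norm{w}^2$, so $\norm{w} \leq \norm{e}_N/\sqrt{\nu_N}$ and $\abs{\ip{e}{\xi}_N} \leq \norm{e}_N\,(N\sqrt{\nu_N})^{-1}\norm{\Phi^\T\xi}$. Cancelling one factor of $\norm{e}_N$ leaves $\norm{e}_N \lesssim \frac{1}{(1-\gamma)\sqrt{\nu_N}}\cdot\frac{1}{N}\norm{\Phi^\T\xi}$, so the whole problem reduces to controlling the vector-valued sum $\norm{\Phi^\T\xi} = \norm{\sum_k \xi_k\phi(X_k)}$.

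This last step is the crux. For each coordinate $i$, the sequence $\xi_k\phi_i(X_k)$ is a martingale-difference sequence (because $\phi_i(X_k)$ is $\F_k$-measurable and $\E[\xi_k\mid\F_k]=0$) with increments bounded by $2\gamma V_{\mathrm{max}}\sqrt{\overline{L}}$, using $\abs{\xi_k}\leq 2\gamma V_{\mathrm{max}}$ and $\norm{\phi}_\infty^2\leq\overline{L}$. Azuma--Hoeffding per coordinate, a two-sided estimate, and a union bound over the $d$ coordinates give $\max_i\abs{(\Phi^\T\xi)_i}\lesssim \gamma V_{\mathrm{max}}\sqrt{\overline{L}}\,\sqrt{2N\log(2d/\delta)}$ with probability $1-\delta$, whence $\norm{\Phi^\T\xi}\leq\sqrt{d}\max_i\abs{(\Phi^\T\xi)_i}$; substituting back reproduces the leading factor $\frac{\gamma}{1-\gamma}V_{\mathrm{max}}\sqrt{\overline{L}d/\nu_N}\sqrt{8\log(2d/\delta)/N}$. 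I expect the two delicate points to be, first, recognizing that the dependence across samples is harmless here precisely because $\xi$ is a martingale difference, so no mixing argument is needed for this term, and second, tracking the index shift between $e$ and $e'$, which must be handled carefully since it is the source of the $1/N$ correction rather than a genuine statistical error.
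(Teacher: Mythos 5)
Your proposal is correct and takes essentially the same route as the paper's own development: the fixed-point characterization of LSTD, the $\gamma$-contraction of the projected empirical Bellman operator, reduction to the martingale Bellman residual, conversion through $\nu_N = \lambda_{\min}(\tfrac{1}{N}\Phi^\T \Phi)$, and coordinatewise Azuma--Hoeffding with a union bound over the $d$ coordinates, which is exactly the concentration step of Lazaric et al.\ that the paper cites rather than reproves (its Appendix contains only the deterministic skeleton, Propositions~\ref{prop:proj_bellman_contractive}--\ref{prop:structural}). The only notable difference is cosmetic: you handle the index shift with an explicit boundary term, recovering the $1/N$ correction as in Lazaric et al.'s original argument, whereas the paper's appendix instead pads the trajectory with a zeroed shift operator on $\R^{N+1}$ precisely to remove that truncation.
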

Immediately, Corollary~\ref{cor:mendelson_markov} combined with
Theorem~\ref{thm:lstd_bound} yield the following corollary.
\begin{cor}
Suppose that the stochastic process $\{ \phi(X_k) \}_{k=1}^{\infty}$ mixes
to some stationary measure $\nu_\infty$ at a rate $\beta(k) \leq \Gamma \rho^k$.
Furthermore, suppose that
\begin{align}
  0 < \ell \leq \lambda_{\min}(\E_{\nu_\infty}[\phi(X)\phi(X)^\T]) \leq L \:, \:\: \sup_{t \in S^{d-1}} \frac{\norm{\ip{\phi(X)}{t}}_{L^2(\nu_\infty)}}{\norm{\ip{\phi(X)}{t}}_{L^1(\nu_\infty)}} \leq O(1) \:. \label{eq:lstd_contractivity}
\end{align}
Fix a $\delta \in (0, 1)$, and suppose that $N$ satisfies
\begin{align*}
  \frac{N}{\log(\Gamma N/\delta)\log\log(\Gamma N/\delta)} \geq \Omega\left(\frac{1}{1-\rho} \frac{dL}{\ell}\right) \:.
\end{align*}
Then, with probability at least $1-\delta$,
\begin{align*}
  \norm{ \widehat{V^\pi} - V^\pi }_N \leq O\left(\frac{\gamma}{1-\gamma} V_{\mathrm{max}} \sqrt{\frac{\overline{L} d}{\ell}} \left( \sqrt{\frac{\log(d/\delta)}{N}} + \frac{1}{N} \right) \right) \:.
\end{align*}
\end{cor}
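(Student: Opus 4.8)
The plan is to chain Corollary~\ref{cor:mendelson_markov} and Theorem~\ref{thm:lstd_bound} together, using the former to produce a high-probability lower bound on the eigenvalue quantity $\nu_N$ appearing in the latter. First I would apply Corollary~\ref{cor:mendelson_markov} to the $\R^d$-valued process $\{\phi(X_k)\}_{k=1}^{\infty}$, identifying the ambient dimension $n$ there with the feature dimension $d$ here. The hypotheses in \eqref{eq:lstd_contractivity} are exactly the moment-contractivity assumptions \eqref{eq:mendelson_assumptions_explicit} required by that corollary, with the ratio bound $B = O(1)$; so the only nontrivial thing to check is that the stated sample-size condition on $N$ is strong enough to imply the condition required by the corollary.

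Verifying this implication is the main (if routine) obstacle, because both conditions are implicit in $N$ through the factor $\log(\Gamma N/\delta)$. With $B = O(1)$ and $n = d$, the bracketed $\max$ in Corollary~\ref{cor:mendelson_markov} is $O\big(\max\{\tfrac{L}{\ell}d,\ \log\log(\Gamma N/\delta)\}\big)$, so the required condition reads, up to constants, $N \gtrsim \tfrac{1}{1-\rho}\log(\Gamma N/\delta)\max\{\tfrac{L}{\ell}d,\ \log\log(\Gamma N/\delta)\}$. The stated hypothesis $\tfrac{N}{\log(\Gamma N/\delta)\log\log(\Gamma N/\delta)} \geq \Omega\big(\tfrac{1}{1-\rho}\tfrac{dL}{\ell}\big)$ dominates this: the extra $\log\log$ factor it carries absorbs the second branch of the $\max$, while the leading $\tfrac{dL}{\ell}$ term handles the first. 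Hence, running the corollary at confidence level $\delta/2$, with probability at least $1-\delta/2$ we obtain $\lambda_{\min}\big(\tfrac{1}{N}\sum_{k=1}^{N} \phi(X_k)\phi(X_k)^\T\big) \geq \tfrac{\ell}{128 B^4} = \Omega(\ell)$. Since $\tfrac{1}{N}\Phi^\T\Phi = \tfrac{1}{N}\sum_{k=1}^{N} \phi(X_k)\phi(X_k)^\T$ and this minimum eigenvalue is strictly positive, the smallest nonzero eigenvalue $\nu_N$ coincides with $\lambda_{\min}$, giving $\nu_N \geq \Omega(\ell)$.

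Finally I would substitute $\nu_N \geq \Omega(\ell)$ into the bound of Theorem~\ref{thm:lstd_bound}, invoked at confidence level $\delta/2$, so that $\sqrt{\overline{L}d/\nu_N} \leq O\big(\sqrt{\overline{L}d/\ell}\big)$ and $\sqrt{8\log(2d/\delta)/N} = O\big(\sqrt{\log(d/\delta)/N}\big)$. A union bound over the two events (each of probability at least $1-\delta/2$) yields the claimed inequality with probability at least $1-\delta$, the factor-of-two changes in $\delta$ being absorbed into the $O(\cdot)$ and $\Omega(\cdot)$ constants. No step requires any new idea beyond bookkeeping the implicit $N$-dependence; the only mild care needed is ensuring the $\log\log$ term is not lost when translating the explicit corollary bound into the asymptotic hypothesis stated here.
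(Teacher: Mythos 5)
Your proposal is correct and follows exactly the route the paper intends: the paper itself offers no proof beyond the remark that the corollary follows ``immediately'' from combining Corollary~\ref{cor:mendelson_markov} (applied to $\{\phi(X_k)\}$ with $n=d$, $B=O(1)$) with Theorem~\ref{thm:lstd_bound}, which is precisely your chain of eigenvalue bound, identification of $\nu_N$ with $\lambda_{\min}$, substitution, and union bound. If anything you are more careful than the paper, since you at least flag the need to check that the stated sample-size condition subsumes the one in Corollary~\ref{cor:mendelson_markov}; the residual $\log\bigl(1/(\delta(1-\rho))\bigr)$ term inside that corollary's second branch is glossed over identically in the paper's own statement, so this is not a discrepancy between your argument and theirs.
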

We remark that Lemma 4 of Lazaric et al.\ also provides an analysis of
$\lambda_{\min}(\frac{1}{N} \Phi^\T \Phi)$, but under the boundedness
assumptions of this section. Let us compare Corollary~\ref{cor:mendelson_markov}
to their Lemma 4.
Specializing their result to the case when the mixing is characterized by $\beta(k) \leq (1/2)^k$,
they prove that
$\lambda_{\min}\left(\frac{1}{N} \Phi^\T \Phi\right) \geq \Omega(\ell)$
where $\ell = \lambda_{\min}(\E_{\nu_\infty}[\phi(X) \phi(X)^\T])$
as long as
\begin{align*}
  \frac{N}{\log^2(N/\delta)} \geq \Omega\left( \frac{\overline{L} d^2}{\ell} \right) \:.
\end{align*}
Under the same setting,
as long as the contractivity condition \eqref{eq:lstd_contractivity} holds for
the stationary distribution,
our result relaxes the condition on $N$ to
\begin{align*}
  \frac{N}{\log(N/\delta) \log\log(N/\delta)} \geq \Omega\left( \frac{ L d }{\ell} \right) \:,
\end{align*}
where $L = \lambda_{\max}(\E_{\nu_\infty}[\phi(X) \phi(X)^\T])$.
Our work thus improves on the bound from Lazaric et al. by
reducing the minimum trajectory length from
$N \geq \widetilde{\Omega}(d^2)$ to $N \geq \widetilde{\Omega}(d)$.

\subsection{Linear Quadratic Regulator}
\label{sec:lstd:lqr}

We now study the performance of LSTD on LQR.
The LQR problem is an MDP with linear dynamics
\begin{align}
  X_{k+1} = A X_k + B U_k + W_k \:, \:\: W_k \sim \mathcal{N}(0, I) \:, \label{eq:linear_dynamcis_with_inputs}
\end{align}
and quadratic rewards
\begin{align*}
  r(x, u) = -(x^\T Q x + u^\T R u) \:,
\end{align*}
where $A$ is $n \times n$, $B$ is $n \times n_i$,
$Q$ and $R$ are positive-definite matrices, and $W_k$ is independent from $W_{k'}$ for all $k \neq k'$.
It is well known that the LQR problem can be solved with a linear
feedback policy $\pi(x) = K x$, and hence we will assume linear policies
in the sequel. We will further assume that the policy $\pi$ \emph{stabilizes}
the dynamics, i.e. the closed-loop matrix $L := A + BK$ is a stable matrix.
This stability assumption ensures that
the dynamics mix and the value function is finite.
We note that our analysis does not handle the case when
$L$ is not stable, but $\sqrt{\gamma} L$ is. In this case, the
value function is finite, but the dynamics do not mix.

Under our assumptions, it is straightforward to show by Bellman's equation \eqref{eq:bellman_general}
that $V^\pi(x) = -x^\T P_\pi x - \eta \Tr(P_\pi)$,
where $\eta := \gamma/(1-\gamma)$ and $P_\pi$ uniquely solves the discrete-time Lyapunov equation,
\begin{align*}
  (\gamma^{1/2} L)^\T P_\pi (\gamma^{1/2}L) - P_\pi + (Q + K^\T R K) = 0 \:.
\end{align*}
Furthermore, the stationary distribution of the dynamics
is $\nu_\infty = \mathcal{N}(0, P_\infty)$, where $P_\infty$ uniquely solves the
Lyapunov equation $L P_\infty L^\T - P_\infty + I = 0$.
To cast this problem into the linear-architecture format of LSTD,
we define the feature map $\phi(x)$ as $\phi(x) = \svec(xx^\T + \eta I)$.
Here, $\svec : \mathrm{Sym}_{n \times n} \longrightarrow \R^{n(n+1)/2}$
is the linear operator mapping the space of $n \times n$ symmetric matrices
(denoted $\mathrm{Sym}_{n \times n}$)
to vectors while preserving the property that
$\ip{\svec(M_1)}{\svec(M_2)}_{\R^{n(n+1)/2}} = \ip{M_1}{M_2}_{\mathrm{Sym}_{n \times n}}$ for all symmetric $M_1, M_2$. We will also let $\smat : \R^{n(n+1)/2} \longrightarrow \mathrm{Sym}_{n \times n}$ denote the inverse of $\svec$.
Hence in our setting, $d$ (the dimension of the lifted features)
is $d= n(n+1)/2$.
We will denote $v_\pi = \svec(P_\pi)$.

The main result of this section is the following
theorem which gives a bound on the error of the
difference between the LSTD estimator $\widehat{P} = \smat(\widehat{v}_{\mathrm{lstd}})$
and the true value function $P_\pi$.
\begin{thm}
\label{thm:lstd_estimate}
Fix $\delta \in (0, 1)$ and $\rho \in (\rho(L), 1)$.
Define $\widetilde{\Gamma} := \hinfnorm{\Phi_{\rho^{-1}L}}\sqrt{\Tr(P_\infty) + n/(1-\rho^2)}$.
Let $\widehat{P}$ denote the LSTD estimator \eqref{eq:lstd_estimator}
for the LQR problem.
Suppose that $N$ is large enough to satisfy
\begin{align}
  \frac{N}{\log(\widetilde{\Gamma} N/\delta)\log\log(\widetilde{\Gamma} N/\delta)} \geq
  \Omega\left( \frac{\max\{ \Tr(P_\infty)^2, \eta^2 n \} }{(1-\rho) \lambda_{\min}^2(P_\infty)}  \right) \:. \label{eq:lstsq_N_requirement}
\end{align}
Then, with probability at least $1-\delta$,
\begin{align}
  \frac{\norm{ \widehat{P} - P_\pi }_F}{\norm{P_\pi}_F} \leq O\left( \frac{\eta \sqrt{\norm{P_\infty}} \max\{\Tr(P_\infty), \eta\sqrt{n}\}}{\sqrt{N} \lambda_{\min}^2(P_\infty)}   \mathrm{polylog}(N, n, 1/\delta) \right) \:. \label{eq:lstd_relerr_bound}
\end{align}
\end{thm}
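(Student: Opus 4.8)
The plan is to reduce the relative error to a product of two quantities---the operator norm of the inverse of the normalized LSTD matrix and the operator norm of a mean-zero matrix martingale---and then control each separately. First I would exploit that for the linear architecture Bellman's equation is exact: since $V^\pi(x) = \ip{\phi(x)}{v_\pi}$ with $v_\pi = \svec(P_\pi)$, the reward obeys $R_k = \ip{\phi(X_k) - \gamma\psi(X_k)}{v_\pi}$, where $\psi(x) := \E[\phi(X_{k+1}) \mid X_k = x]$. Writing $\widehat{A} := \sum_k \phi(X_k)(\phi(X_k) - \gamma\phi(X_{k+1}))^\T$, $\overline{A} := \tfrac{1}{N}\widehat{A}$, and introducing the temporal-difference noise $\xi_k := \phi(X_{k+1}) - \psi(X_k)$, a short calculation gives the exact identity
\begin{align*}
  \widehat{v}_{\mathrm{lstd}} - v_\pi = \gamma \, \overline{A}^{-1}\Big( \tfrac{1}{N}\sum_{k=1}^{N} \phi(X_k)\xi_k^\T \Big) v_\pi
\end{align*}
on the event that $\widehat{A}$ is invertible. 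Because $\svec$ is an isometry, taking norms and dividing by $\norm{v_\pi} = \fronorm{P_\pi}$ yields
\begin{align*}
  \frac{\fronorm{\widehat{P} - P_\pi}}{\fronorm{P_\pi}} \leq \frac{\gamma}{\smin(\overline{A})}\, \bignorm{\tfrac{1}{N}\sum_{k=1}^{N}\phi(X_k)\xi_k^\T} \:,
\end{align*}
the crucial feature being that $v_\pi$ cancels, so neither factor on the right depends on $P_\pi$.

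Next I would lower bound $\smin(\overline{A})$. Since $\overline{A}$ is not symmetric, I would pass to its symmetric part: for any unit $v$, writing $a_k := \ip{\phi(X_k)}{v}$, a Young-inequality bound gives $v^\T\widehat{A}v = \sum_k a_k^2 - \gamma\sum_k a_k a_{k+1} \geq (1-\gamma)\sum_{k=1}^{N} a_k^2 - \gamma\, a_{N+1}^2$, so that $\smin(\overline{A}) \geq \lambda_{\min}(\tfrac12(\overline{A} + \overline{A}^\T)) \geq (1-\gamma)\lambda_{\min}(\tfrac{1}{N}\Phi^\T\Phi) - \tfrac{\gamma}{N}\norm{\phi(X_{N+1})}^2$, with the boundary term negligible on a high-probability state-norm event. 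To control the Gram matrix I would invoke Theorem~\ref{thm:mendelson_markov} for the feature process $\{\phi(X_k)\}$: this is a fixed function of the stable LTI state, hence inherits the exponential $\beta$-mixing rate of Corollary~\ref{cor:mixing_coeff} with precisely the constant $\widetilde{\Gamma}$ of the statement. The small-ball hypothesis reduces to anti-concentration of $\ip{\phi(X)}{t} = X^\T \smat(t) X + \eta\Tr(\smat(t))$ for $X \sim \nu_\infty = \mathcal{N}(0, P_\infty)$, a degree-two Gaussian chaos, for which hypercontractivity and Paley--Zygmund give $Q_\infty(\tau) \geq \Omega(1)$ at scale $\tau \asymp \lambda_{\min}(P_\infty)$, while $\E_{\nu_\infty}\norm{\phi(X)}^2 \asymp \max\{\Tr(P_\infty)^2, \eta^2 n\}$. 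Substituting these into \eqref{eq:N_assumption} reproduces exactly the sample requirement \eqref{eq:lstsq_N_requirement} and yields $\lambda_{\min}(\tfrac{1}{N}\Phi^\T\Phi) \gtrsim \lambda_{\min}^2(P_\infty)$, hence $\smin(\overline{A})^{-1} \lesssim \big((1-\gamma)\lambda_{\min}^2(P_\infty)\big)^{-1}$.

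The remaining and most delicate step is bounding the matrix martingale $\tfrac{1}{N}\sum_k \phi(X_k)\xi_k^\T$. With respect to the filtration generated by the process noise $(W_k)$, each $\phi(X_k)$ is predictable and $\E[\xi_k \mid \F_k] = 0$ by the Markov property, so the summands form a matrix-valued martingale-difference sequence, with the explicit form $\smat(\xi_k) = LX_kW_k^\T + W_kX_k^\T L^\T + W_kW_k^\T - I$. I would truncate on the high-probability event $\{\norm{X_k}^2 \leq \Tr(P_\infty) + c\norm{P_\infty}\log(N/\delta)\ \text{for all}\ k\}$ (Gaussian concentration of the stationary quadratic form), on which $\norm{\phi(X_k)} \lesssim \max\{\Tr(P_\infty), \eta\sqrt{n}\}\cdot\mathrm{polylog}$, and then apply a matrix Freedman/Bernstein inequality whose predictable quadratic variation is assembled from the conditional covariance of $\xi_k$ (dominant term scaling with $\norm{LX_k}^2$). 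Tracking the correct variance proxies produces the $\sqrt{\norm{P_\infty}}$ factor and delivers
\begin{align*}
  \bignorm{\tfrac{1}{N}\sum_{k=1}^N \phi(X_k)\xi_k^\T} \lesssim \frac{\sqrt{\norm{P_\infty}}\,\max\{\Tr(P_\infty), \eta\sqrt{n}\}}{\sqrt{N}}\,\mathrm{polylog}(N, n, 1/\delta) \:,
\end{align*}
which, combined with the previous paragraph and $\gamma/(1-\gamma) = \eta$, gives the claimed bound \eqref{eq:lstd_relerr_bound}.

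The main obstacle is this last concentration step: the summands are outer products of two correlated degree-two Gaussian chaoses, so the increments are heavy-tailed and the naive bound $\max_k\norm{\phi(X_k)}\norm{\xi_k}$ is far too lossy to recover the stated dependence. Extracting the precise polynomial scaling in $\Tr(P_\infty)$, $\norm{P_\infty}$, $\eta$, and $n$ requires computing the martingale variance proxies carefully and balancing them against the bounded-increment (truncation) contribution in Freedman's inequality; the secondary technical point is verifying the dimension-free hypercontractivity constants underlying the small-ball estimate in the eigenvalue step.
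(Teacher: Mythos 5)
Your algebraic reduction and your eigenvalue step are both fine: the identity $\widehat{v}_{\mathrm{lstd}} - v_\pi = \gamma \widehat{A}^{-1}\big(\sum_k \phi(X_k)\xi_k^\T\big) v_\pi$ is correct (and is in fact more direct than the paper's route, which goes through the projected-Bellman fixed point and $\gamma$-contraction to obtain Lemma~\ref{lemma:structural_bound_lstd_lqr}), and your small-ball/second-moment computation feeding Theorem~\ref{thm:mendelson_markov} is exactly the paper's Lemma~\ref{lemma:eigenvalue_bound_lstd_lqr}. The genuine gap is in the numerator step, precisely where you say "$v_\pi$ cancels." By bounding $\bignorm{\sum_k \phi_k \xi_k^\T v_\pi} \leq \bignorm{\sum_k \phi_k \xi_k^\T}\,\norm{v_\pi}$ you commit to controlling the operator norm of the full $d \times d$ outer-product martingale, and the bound you assert for it, $\bignorm{\tfrac{1}{N}\sum_k \phi_k \xi_k^\T} \lesssim \sqrt{\norm{P_\infty}}\max\{\Tr(P_\infty), \eta\sqrt{n}\}/\sqrt{N}\cdot\mathrm{polylog}$, is false in general. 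The reason is that the full noise vector satisfies $\E[\norm{\xi_k}^2 \mid \F_k] = 2(n+1)\norm{LX_k}^2 + n^2 + n$: the trace of $\E[\xi_k\xi_k^\T \mid \F_k]$, of size $\approx n^2$, unavoidably enters both the Freedman variance proxy and the answer itself. Concretely, take $L = A+BK = 0$, so $P_\infty = I$, $X_{k+1} = W_k$, and $\xi_k = \svec(W_kW_k^\T - I)$. Testing $M := \sum_k \phi_k \xi_k^\T$ against the single fixed direction $w = \svec(I)/\sqrt{n}$ gives $M^\T w = \sum_k n^{-1/2}(\norm{X_k}^2 + \eta n)\,\svec(W_kW_k^\T - I)$, and since $\E\norm{W_kW_k^\T - I}_F^2 = n^2 + n$ and $X_k$ is independent of $W_k$, one gets $\E\norm{M^\T w}^2 \geq (1+\eta)^2 n^3 N$, i.e.\ $\norm{M}/N \gtrsim (1+\eta) n^{3/2}/\sqrt{N}$, exceeding your claimed $\max\{n, \eta\sqrt{n}\}\,\mathrm{polylog}/\sqrt{N}$ by a factor of order $\sqrt{n}$. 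In general your route inflates the numerator by roughly $\sqrt{n \Tr(P_\infty)/\norm{P_\infty}}$ (which can reach order $n$), so it cannot recover \eqref{eq:lstd_relerr_bound}.

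The missing idea is to never discard $v_\pi$: keep the projection inside and bound the $\R^d$-valued martingale $\sum_k \phi_k \Delta_k$ with \emph{scalar} increments $\Delta_k := \ip{\xi_k}{v_\pi} = 2 W_k^\T P_\pi L X_k + W_k^\T P_\pi W_k - \Tr(P_\pi)$. Its conditional variance is $\E[\Delta_k^2 \mid \F_k] = 4\norm{P_\pi L X_k}^2 + 2\norm{P_\pi}_F^2$, which on a high-probability event is $\lesssim \norm{v_\pi}^2\big(\norm{L P_\infty^{1/2}}^2\,\mathrm{polylog} + 1\big)$; here the \emph{operator} norm $\norm{L P_\infty^{1/2}} \leq \sqrt{\norm{P_\infty}}$ appears rather than the trace-type quantity $n^2 + n\norm{LX_k}^2$, and this is exactly the source of the $\sqrt{\norm{P_\infty}}$ factor in \eqref{eq:lstd_relerr_bound}. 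This is what the paper's Lemma~\ref{lemma:upstairs_bound_lstd_lqr} does (truncation of the heavy-tailed increments plus matrix Freedman); the factor $\norm{v_\pi}$ it produces then cancels against $\norm{P_\pi}_F$ in the relative error, just as in your plan. A secondary issue: your lower bound on $\smin(\overline{A})$ via the symmetric part leaves a boundary term $\gamma\norm{\phi(X_{N+1})}^2/N \approx \gamma \Tr(P_\infty)^2/N$, negligible only when $N \gtrsim \Tr(P_\infty)^2/\big((1-\gamma)\lambda_{\min}^2(P_\infty)\big)$; the hypothesis \eqref{eq:lstsq_N_requirement} involves $1-\rho$, not $1-\gamma$, so this would need to be assumed or argued separately --- the paper sidesteps it because its contraction argument only ever requires $\lambda_{\min}(\Phi^\T\Phi)$.
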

Before we prove Theorem~\ref{thm:lstd_estimate}, we make several remarks
on the behavior of \eqref{eq:lstd_relerr_bound}.
Let us first simplify it to ease the exposition, by applying the bound
$\Tr(P_\infty) \leq n \norm{P_\infty}$ and
assuming we are in the regime when $n \gg (\eta / \norm{P_\infty})^2$ so that
$n \norm{P_\infty}$ dominates $\eta\sqrt{n}$.
With these simplifications, \eqref{eq:lstd_relerr_bound} becomes
\begin{align*}
  \frac{\norm{\widehat{P} - P_\pi}_F}{\norm{P_\pi}_F} \leq O\left( \frac{n}{(1-\gamma)\sqrt{N}} \frac{\norm{P_\infty}^{3/2}}{\lambda_{\min}^2(P_\infty)}  \mathrm{polylog}(N, n, 1/\delta) \right) \:,
\end{align*}
which yields the sufficient condition that
\begin{align}
  N \geq \widetilde{\Omega}\left( \frac{n^2}{(1-\gamma)^2 \varepsilon^2} \frac{\kappa^3(P_\infty)}{\lambda_{\min}(P_\infty)} \right) \:, \:\: \kappa(P_\infty) := \frac{\norm{P_\infty}}{\lambda_{\min}(P_\infty)} \label{eq:N_simple}
\end{align}
samples ensure the relative error is less than $\varepsilon$.

We now remark on the dependence of \eqref{eq:N_simple} on the spectral properties of $P_\infty$.
In particular, \eqref{eq:N_simple} suggests that as $\kappa(P_\infty)$
increases, more samples are needed to reach a fixed $\varepsilon$ tolerance.
In controls parlance, the matrix $P_\infty$ is known as the controllability gramian.
A system with large $\kappa(P_\infty)$ is one where different modes exhibit
qualitatively different behaviors. The simplest example of this is when
the closed-loop matrix is $L = \diag(\rho_1, ..., \rho_n)$ with $\rho_k \in (0, 1)$,
in which case $P_\infty = \diag(1/(1-\rho_1^2), ..., 1/(1-\rho_n^2))$.
Here, as $\rho_1$ increases towards one, \eqref{eq:N_simple} predicts that
estimating the value function requires more samples.
In Section~\ref{sec:experiments:synthetic}, we show that this predicted behavior
actually occurs in numerical simulations.

Let us now compare \eqref{eq:N_simple} to the setting of Dean et al.~\cite{dean17},
where ordinary least-squares is used to estimate the
state-transition matrices $(A,B)$ of \eqref{eq:linear_dynamcis_with_inputs}, and
a robust control procedure is used to design a controller to stabilize
\eqref{eq:linear_dynamcis_with_inputs}. Ignoring problem specific parameters,
Corollary 4.3 of Dean et al.\ states that at most $\widetilde{\Omega}( n/\varepsilon^2 )$
samples are needed to design a controller which incurs a relative error of at most $\varepsilon$.
On the other hand, \eqref{eq:N_simple} suggests that $\widetilde{\Omega}(n^2/\varepsilon^2)$
samples are needed to estimate a single value function.
This gap between upper bounds suggests that for LQR, model-based methods may
perform better than policy-iteration methods such as Least-Squares Policy
Iteration (LSPI), which require multiple policy evaluation steps.
In Section~\ref{sec:experiments:lspi}, we provide empirical evidence that shows this is indeed the case
for certain LQR instances.
We leave as
future work lower bounds to separate the sample complexities of
model-free and model-based methods for LQR.

The remainder of the section is dedicated to the proof
of Theorem~\ref{thm:lstd_estimate}.
Because the estimator \eqref{eq:lstd_estimator} is not a standard
least-squares estimator (despite its name), some analysis is needed
to manipulate the estimator into a form that is easier to analyze.
We follow the development in Lazaric et al.\ and
state the main structural result of their paper below.
For completeness,
we provide a proof in Appendix~\ref{sec:appendix:lstd},
noting that our development removes the technical restriction that the last state observed
along the trajectory is discarded.
\begin{lem}[Lazaric et al.~\cite{lazaric12}]
\label{lemma:structural_bound_lstd_lqr}
As long as $\Phi$ has full column rank,
the LSTD estimator $\widehat{P}$ satisfies the following inequality,
\begin{align}
  \norm{\widehat{P} - P_\pi}_F \leq \frac{\eta \bignorm{ \sum_{k=1}^{N} \phi(X_k)(\phi(X_{k+1}) - \E[ \phi(X_{k+1}) | X_k ])^\T v_\pi }}{ \lambda_{\min}\left(\sum_{k=1}^{N} \phi(X_k) \phi(X_k)^\T\right)  } \:. \label{eq:lstd_error_decomposition}
\end{align}
\end{lem}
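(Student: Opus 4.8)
The plan is to convert the non-standard estimator \eqref{eq:lstd_estimator} into an explicit linear-inverse problem by substituting the exact linear Bellman relation, so that the estimation error becomes a known ``noise'' vector multiplied by the inverse of the LSTD system matrix, and then to control that inverse by the empirical Gram matrix. Throughout I write $\phi_k := \phi(X_k)$, $\psi_k := \E[\phi(X_{k+1}) \mid X_k]$, $\Delta_k := \phi(X_{k+1}) - \psi_k$, and introduce the matrices $\widehat{A} := \sum_{k=1}^N \phi_k(\phi_k - \gamma\phi_{k+1})^\T$ and $M := \sum_{k=1}^N \phi_k\phi_k^\T = \Phi^\T\Phi$, so that \eqref{eq:lstd_estimator} reads $\widehat{v}_{\mathrm{lstd}} = \widehat{A}^{\dag}\sum_{k=1}^N \phi_k R_k$ and $M$ is exactly the matrix whose minimum eigenvalue appears in the denominator of the claim.

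First I would invoke the linear Bellman identity \eqref{eq:bellman_linear}, which for the LQR reward gives the deterministic relation $R_k = (\phi_k - \gamma\psi_k)^\T v_\pi$. Using $\phi_k - \gamma\psi_k = (\phi_k - \gamma\phi_{k+1}) + \gamma\Delta_k$, this rearranges the right-hand vector of the estimator into $\sum_{k=1}^N \phi_k R_k = \widehat{A}v_\pi + \gamma\sum_{k=1}^N \phi_k\Delta_k^\T v_\pi$. Since full column rank of $\Phi$ gives $M \succ 0$, and provided $\widehat{A}$ is invertible (which I verify via the lower bound on $\smin(\widehat{A})$ below, so that the pseudo-inverse acts as an inverse), it follows that
\[
  \widehat{v}_{\mathrm{lstd}} - v_\pi = \gamma\,\widehat{A}^{-1}\sum_{k=1}^N \phi_k\Delta_k^\T v_\pi .
\]
Because $\svec$ is an isometry and $\widehat{P} = \smat(\widehat{v}_{\mathrm{lstd}})$, $v_\pi = \svec(P_\pi)$, the left-hand norm equals $\norm{\widehat{P} - P_\pi}_F$, so taking norms reduces the entire statement to the single matrix inequality $\gamma\,\smin(\widehat{A})^{-1} \le \eta/\lambda_{\min}(M)$, i.e.\ $\smin(\widehat{A}) \ge (1-\gamma)\lambda_{\min}(M)$, recalling that $\eta = \gamma/(1-\gamma)$.

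To prove this singular value bound I would pass to the symmetric part, using $\smin(\widehat{A}) \ge \inf_{\norm{v}=1} v^\T\widehat{A}v$. Writing $a_k := \phi_k^\T v$, one has $v^\T\widehat{A}v = \sum_{k=1}^N a_k^2 - \gamma\sum_{k=1}^N a_k a_{k+1}$, and bounding the cross term by $\sum_k a_k a_{k+1} \le \tfrac12\sum_k a_k^2 + \tfrac12\sum_k a_{k+1}^2$ yields $v^\T\widehat{A}v \ge (1-\gamma)\sum_{k=1}^N a_k^2 + \tfrac{\gamma}{2}a_1^2 - \tfrac{\gamma}{2}a_{N+1}^2$. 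If the last, boundary, term were absent this would give exactly $v^\T\widehat{A}v \ge (1-\gamma)v^\T M v \ge (1-\gamma)\lambda_{\min}(M)$, completing the argument; morally this is the empirical reflection of the population contraction $v^\T\E[\phi(\phi - \gamma\psi)^\T]v \ge (1-\gamma)\E[(\phi^\T v)^2]$, which follows from Jensen's inequality applied to $\psi = \E[\phi' \mid x]$ together with stationarity of $\nu_\infty$.

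The main obstacle is exactly the index-shift boundary term $\tfrac{\gamma}{2}a_{N+1}^2 = \tfrac{\gamma}{2}(\phi(X_{N+1})^\T v)^2$, which arises because the shifted sum $\sum_{k=1}^N a_{k+1}^2 = v^\T M v - a_1^2 + a_{N+1}^2$ can exceed $v^\T M v$. This is precisely the term that Lazaric et al.\ avoid by discarding the final transition, which forces every index in the cross term to stay inside $\{1,\dots,N\}$ and makes $\sum_k a_{k+1}^2 \le v^\T M v$ automatic. Since we wish to retain all $N$ transitions, I expect the crux of the appendix argument to be a careful treatment of this single last-state contribution --- pairing it against the favorable $\tfrac{\gamma}{2}a_1^2$ term and showing the residual does not degrade the $(1-\gamma)$ contraction beyond constants (and in particular keeps $\widehat{A}$ invertible) --- while every other step is the routine substitution and isometry above.
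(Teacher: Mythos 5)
Your reduction is clean up to its final step, but that final step is a genuine dead end rather than a repairable technicality. The inequality you need, $\smin(\widehat{A}) \geq (1-\gamma)\lambda_{\min}(M)$ with $\widehat{A} = \sum_{k=1}^N \phi_k(\phi_k - \gamma\phi_{k+1})^\T$ and $M = \Phi^\T\Phi$, is false in general, and no pairing of the boundary term $\tfrac{\gamma}{2}a_{N+1}^2$ against $\tfrac{\gamma}{2}a_1^2$ can rescue it, since $a_{N+1}$ involves the last state $X_{N+1}$, whose magnitude is unrelated to $a_1$. Concretely, take $d=1$, $N=1$, $\phi_1 = 1$, $\phi_2 = 1/\gamma$: then $\widehat{A} = 1 - \gamma\cdot(1/\gamma) = 0$ while $M = 1$, so $\widehat{A}$ is singular even though $\Phi$ has full column rank; perturbing $\phi_2$ shows the failure persists on an open set of realizations, not a null set. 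In the LQR setting the obstruction is intrinsic: $\norm{\phi(X_{N+1})}$ grows like $\norm{X_{N+1}}^2$ and $X_{N+1}$ has unbounded support, so with positive probability the single term $-\gamma\,\phi_N\phi_{N+1}^\T$ dominates the sum and makes the symmetric part of $\widehat{A}$ indefinite. Note that your quadratic-form computation \emph{does} prove the bound for the truncated matrix $\sum_{k=1}^{N}\phi_k\phi_k^\T - \gamma\sum_{k=1}^{N-1}\phi_k\phi_{k+1}^\T$, because there the cross terms never reach index $N+1$; that is exactly the ``discard the last transition'' regime of Lazaric et al.\ which you correctly identify, but it is not the statement being proved.

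The paper's proof takes a structurally different route that never lower-bounds $\smin(\widehat{A})$ at all. It embeds the problem in $\R^{N+1}$, defines the shift operator $(\widehat{P}y)_t = y_{t+1}$ for $t \leq N$ with $(\widehat{P}y)_{N+1} = 0$, and the empirical Bellman operator $\widehat{T}(y) = R_e + \gamma \widehat{P}y$. Because the shift pads with a zero, it is non-expansive in $\ell_2$ \emph{by construction}, independent of the data, so the projected operator $P_{\Phi_e}\widehat{T}$ is a $\gamma$-contraction (Proposition~\ref{prop:proj_bellman_contractive}); this zero-padding is precisely where the last-transition boundary issue gets absorbed. The LSTD solution is then characterized as the fixed point of $P_{\Phi_e}\widehat{T}$ (Proposition~\ref{prop:lstd_fixed_point}), and a triangle inequality plus the contraction give $(1-\gamma)\norm{\Phi(w_* - \widehat{w})} \leq \gamma \norm{P_\Phi(\Phi_+ - \Psi)w_*}$ (Proposition~\ref{prop:structural}). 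The quantity $\lambda_{\min}(\Phi^\T\Phi)$ enters only afterwards, through the two benign conversions $\norm{\Phi v} \geq \sqrt{\lambda_{\min}(\Phi^\T\Phi)}\,\norm{v}$ and $\norm{P_\Phi \xi} \leq \norm{\Phi^\T \xi}/\sqrt{\lambda_{\min}(\Phi^\T\Phi)}$, while the factor $\eta = \gamma/(1-\gamma)$ comes from the contraction. So the correct reading of the lemma is not that $\widehat{A}^{-1}$ is uniformly well-conditioned (it is not), but that the particular error vector generated by the Bellman residual is controlled through the projected fixed-point structure; to complete your argument you should switch to that framework rather than try to salvage the quadratic-form bound.
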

The proof of Theorem~\ref{thm:lstd_estimate} proceeds by bounding
the terms on the RHS of \eqref{eq:lstd_error_decomposition}.
Theorem~\ref{thm:mendelson_markov} from Section~\ref{sec:bound}
combined with the mixing analysis in Section~\ref{sec:mixing_lds}
can be directly applied to estimate
the minimum eigenvalue of the matrix $\sum_{k=1}^{N} \phi(X_k) \phi(X_k)^\T$.
The term in the numerator can also be dealt with via standard martingale techniques.
We start with the eigenvalue bound.
\begin{lem}
\label{lemma:eigenvalue_bound_lstd_lqr}
Suppose that the hypothesis of Theorem~\ref{thm:lstd_estimate} hold. Then, with probability at least $1-\delta$,
\begin{align*}
  \lambda_{\min}\left(\sum_{k=1}^{N} \phi(X_k) \phi(X_k)^\T\right) \geq \Omega(N \lambda_{\min}^2(P_\infty)) \:.
\end{align*}
\end{lem}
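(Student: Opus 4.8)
The plan is to apply Theorem~\ref{thm:mendelson_markov} directly to the lifted process $(\phi(X_k))_{k\geq1}$, whose stationary law is the pushforward of $\nu_\infty=\mathcal N(0,P_\infty)$ under $\phi$. Since $\phi$ is a fixed measurable map, pushing forward is a contraction in total variation, so the $\beta$-mixing coefficient of $(\phi(X_k))$ is dominated by that of $(X_k)$; Corollary~\ref{cor:mixing_coeff} (applied with the closed-loop matrix $L$ in place of $A$) then gives $\beta(k)\leq\frac{\widetilde{\Gamma}}{2}\rho^k\leq\widetilde{\Gamma}\rho^k$, so the mixing hypothesis of Theorem~\ref{thm:mendelson_markov} holds with $\Gamma=\widetilde{\Gamma}$. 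It then remains to supply the two inputs the theorem needs: the second moment $\E_{\nu_\infty}[\norm{\phi(X)}^2]$ and a positive small-ball value $Q_\infty(\tau)$ for a suitable $\tau$. I would invoke Theorem~\ref{thm:mendelson_markov} rather than Corollary~\ref{cor:mendelson_markov}, since the generic estimate $\E[\norm{\phi(X)}^2]\leq L\,d$ with $d=n(n+1)/2$ would be far too lossy here.

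For the second moment, since $\svec$ is an isometry we have $\norm{\phi(X)}^2=\fronorm{XX^\T+\eta I}^2=\norm{X}^4+2\eta\norm{X}^2+\eta^2 n$. Using the Gaussian identities $\E_{\nu_\infty}[\norm X^2]=\Tr(P_\infty)$ and $\E_{\nu_\infty}[\norm X^4]=\Tr(P_\infty)^2+2\Tr(P_\infty^2)$, I obtain $\E_{\nu_\infty}[\norm{\phi(X)}^2]=\Tr(P_\infty)^2+2\Tr(P_\infty^2)+2\eta\Tr(P_\infty)+\eta^2 n=O(\max\{\Tr(P_\infty)^2,\eta^2 n\})$.

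The crux is the small-ball estimate, and this is where I expect the real work. Writing a unit $t\in S^{d-1}$ as $t=\svec(M)$ with $\fronorm M=1$, the isometry gives $\ip t{\phi(X)}=\ip M{XX^\T+\eta I}=X^\T M X+\eta\Tr(M)=:Y_M$. I would first lower bound its variance: since $\eta\Tr(M)$ is constant, $\E[Y_M^2]\geq\Var(Y_M)=\Var(X^\T M X)=2\Tr(M P_\infty M P_\infty)=2\fronorm{P_\infty^{1/2}MP_\infty^{1/2}}^2\geq 2\lambda_{\min}^2(P_\infty)\fronorm M^2=2\lambda_{\min}^2(P_\infty)$, using $\fronorm{P_\infty^{1/2}MP_\infty^{1/2}}\geq\lambda_{\min}(P_\infty)\fronorm M$. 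To convert this $L^2$ lower bound into anti-concentration I would invoke Gaussian hypercontractivity: $Y_M$ is a polynomial of degree at most two in the standard Gaussian $P_\infty^{-1/2}X$, so $\norm{Y_M}_{L^4}\leq C\norm{Y_M}_{L^2}$ for a universal constant $C$, hence $\E[Y_M^4]\leq C^4(\E[Y_M^2])^2$. Applying the Paley--Zygmund inequality to $Y_M^2$ at threshold $\tfrac12\E[Y_M^2]$ then yields $\Pr\{Y_M^2\geq\lambda_{\min}^2(P_\infty)\}\geq\Pr\{Y_M^2\geq\tfrac12\E[Y_M^2]\}\geq\frac{1}{4C^4}$, uniformly in $M$. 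Setting $\tau=\lambda_{\min}(P_\infty)$ gives $Q_\infty(\tau)\geq\frac1{4C^4}=\Omega(1)$. The delicacy is that $Y_M$ is not centered (its mean is $\Tr(MP_\infty)+\eta\Tr M$), so I must route anti-concentration through the variance and the moment comparison rather than through a mean-zero argument; hypercontractivity is exactly what makes the fourth-moment bound hold for the full inhomogeneous polynomial, and Paley--Zygmund on $Y_M^2$ is insensitive to the mean.

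Finally I would assemble the pieces. With $\tau^2=\lambda_{\min}^2(P_\infty)$, $Q_\infty(\tau)=\Omega(1)$, and $\E_{\nu_\infty}[\norm{\phi(X)}^2]=O(\max\{\Tr(P_\infty)^2,\eta^2 n\})$, the sample-size condition \eqref{eq:N_assumption} of Theorem~\ref{thm:mendelson_markov} reduces exactly to the hypothesis \eqref{eq:lstsq_N_requirement}: the first term of the maximum contributes $O(\max\{\Tr(P_\infty)^2,\eta^2 n\}/\lambda_{\min}^2(P_\infty))$, and the $\log\log$ factor absorbs the second. The theorem then yields $\lambda_{\min}(\frac1N\sum_{k=1}^N\phi(X_k)\phi(X_k)^\T)\geq\frac{\tau^2Q_\infty(\tau)}8=\Omega(\lambda_{\min}^2(P_\infty))$ with probability at least $1-\delta$, and multiplying through by $N$ gives the claim.
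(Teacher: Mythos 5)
Your proposal is correct and follows essentially the same route as the paper: reduce mixing of $(\phi(X_k))$ to that of $(X_k)$ via the total-variation pushforward argument and Corollary~\ref{cor:mixing_coeff}, establish the small-ball bound with $\tau = \lambda_{\min}(P_\infty)$ by combining the second-moment lower bound $\E[Y_M^2] \geq 2\lambda_{\min}^2(P_\infty)$ with Paley--Zygmund and Gaussian hypercontractivity (your inline argument is exactly the paper's Lemma~\ref{lem:gaussian_small_ball}), compute $\E_{\nu_\infty}[\norm{\phi(X)}^2] = O(\max\{\Tr(P_\infty)^2, \eta^2 n\})$, and invoke Theorem~\ref{thm:mendelson_markov} directly rather than the lossier Corollary~\ref{cor:mendelson_markov}. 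The only cosmetic difference is that you lower-bound the second moment by the variance of the quadratic form, whereas the paper computes it exactly and drops the nonnegative mean-square term; both yield the identical bound $2\fronorm{P_\infty^{1/2} M P_\infty^{1/2}}^2$.
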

Before we prove Lemma~\ref{lemma:eigenvalue_bound_lstd_lqr}, we state a technical result
that we will need.
\begin{lem}
\label{lem:gaussian_small_ball}
Let $f$ be a degree $d$ polynomial and $x \sim \mathcal{N}(0, I)$.
We have that
\begin{align*}
  \Pr\{ \abs{f(x)} \geq \norm{f}_{L^2} / \sqrt{2} \} \geq \frac{1}{4 \times 3^{2d}} \:.
\end{align*}
\end{lem}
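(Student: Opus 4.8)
The plan is to combine an anti-concentration inequality of Paley--Zygmund type with the hypercontractivity of the Gaussian measure, which controls the fourth moment of a low-degree polynomial by its second moment.

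First I would set $Z := f(x)^2$, a nonnegative random variable with $\E[Z] = \norm{f}_{L^2}^2$ and $\E[Z^2] = \norm{f}_{L^4}^4$. The target event $\{\abs{f(x)} \geq \norm{f}_{L^2}/\sqrt{2}\}$ is exactly the event $\{ Z \geq \tfrac{1}{2}\E[Z] \}$, so it suffices to lower bound $\Pr\{Z \geq \tfrac{1}{2}\E[Z]\}$.

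Next I would invoke the Paley--Zygmund inequality, which states that for any nonnegative $Z$ with finite second moment and any $\theta \in [0,1]$,
\begin{align*}
  \Pr\{ Z > \theta \E[Z] \} \geq (1-\theta)^2 \frac{(\E[Z])^2}{\E[Z^2]} \:.
\end{align*}
Taking $\theta = 1/2$ gives $\Pr\{Z > \tfrac{1}{2}\E[Z]\} \geq \tfrac{1}{4} \norm{f}_{L^2}^4 / \norm{f}_{L^4}^4$. Hence the entire claim reduces to establishing the fourth-moment bound $\norm{f}_{L^4}^4 \leq 3^{2d} \norm{f}_{L^2}^4$.

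This last step is the main technical ingredient, and I would obtain it from Gaussian hypercontractivity (Nelson's theorem). Writing $f = \sum_{k=0}^{d} f_k$ in its Wiener chaos (Hermite) decomposition, the Ornstein--Uhlenbeck semigroup $T_\rho$ acts as $T_\rho f_k = \rho^k f_k$ and is a contraction from $L^2$ to $L^4$ for $\rho = 1/\sqrt{3}$. Applying this to $g := \sum_{k=0}^{d} \rho^{-k} f_k$, for which $T_\rho g = f$, yields $\norm{f}_{L^4} \leq \norm{g}_{L^2}$; since $\rho^{-2k} = 3^k \leq 3^d$ for $k \leq d$, we get $\norm{g}_{L^2}^2 \leq 3^d \norm{f}_{L^2}^2$, i.e. $\norm{f}_{L^4} \leq 3^{d/2}\norm{f}_{L^2}$. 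Substituting this into the Paley--Zygmund bound gives $\Pr\{\abs{f(x)} \geq \norm{f}_{L^2}/\sqrt{2}\} \geq \tfrac{1}{4 \times 3^{2d}}$, as claimed. The hardest part is the hypercontractive estimate; alternatively one may cite the standard $(2,q)$-hypercontractivity bound $\norm{f}_{L^q} \leq (q-1)^{d/2}\norm{f}_{L^2}$ for degree-$d$ polynomials directly and specialize to $q=4$.
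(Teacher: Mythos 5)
Your proof is correct and follows essentially the same route as the paper: Paley--Zygmund with $\theta = 1/2$ applied to $Z = f(x)^2$, combined with the Gaussian hypercontractive bound $\norm{f}_{L^4} \leq 3^{d/2}\norm{f}_{L^2}$. The only difference is that you derive the hypercontractive estimate from Nelson's theorem via the Ornstein--Uhlenbeck semigroup, whereas the paper simply cites it as a standard lemma (its Lemma on Gaussian hypercontractivity, attributed to Bogachev) --- exactly the alternative you note at the end of your write-up.
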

\begin{proof}
By the Paley-Zygmund inequality, for any $\theta \in (0, 1)$,
\begin{align*}
  \Pr\{ \abs{f(x)} \geq \sqrt{\theta} \norm{f}_{L^2} \} = \Pr\{ f(x)^2 \geq \theta \norm{f}_{L^2}^2 \} \geq (1-\theta)^2 \frac{ \norm{f}_{L^2}^4 }{ \norm{f}_{L^4}^4 } \:.
\end{align*}
Now by Gaussian hypercontractivity (Lemma~\ref{lem:gaussian_hypercontractivity}), we have that
$\norm{f}_{L^4} \leq 3^{d/2} \norm{f}_{L^2}$.
The claim follows by setting $\theta=1/2$ and plugging in this inequality.
\end{proof}

\begin{proof}
(Lemma~\ref{lemma:eigenvalue_bound_lstd_lqr}).
We first need to estimate the small-ball probability \eqref{eq:small_ball}.
To do this, write $X = P_\infty^{1/2} g$, with $g \sim \mathcal{N}(0, I)$.
Fix any $v \in S^{d-1}$, and let $V = \smat(v)$.
With this notation,
\begin{align*}
  f_v(g) := \ip{v}{\phi(P_\infty^{1/2} g)} = \ip{V}{P_\infty^{1/2} gg^\T P_\infty^{1/2} + \eta I}
  = g^\T P_\infty^{1/2} V P_\infty^{1/2} g + \eta \Tr(V) \:.
\end{align*}
Clearly $f_v$ is a degree two polynomial in $g$. Furthermore,
using Proposition~\ref{prop:gaussian_fourth_moment},
we can lower bound its second moment by
\begin{align*}
  \E[f_v(g)^2] &= \E[ (g^\T P_\infty^{1/2} V P_\infty^{1/2} g)^2 ] + \eta^2 \Tr(V)^2 + 2 \eta \E[g^\T P_\infty^{1/2} T P_\infty^{1/2} g] \Tr(V) \\
  &= 2\norm{P_\infty^{1/2} V P_\infty^{1/2}}_F^2 + \ip{V}{P_\infty}^2 + \eta^2 \Tr(V)^2 + 2\eta\ip{T}{P_\infty} \Tr(V) \\
  &= 2\norm{P_\infty^{1/2} V P_\infty^{1/2}}_F^2 + ( \ip{V}{P_\infty} + \eta \Tr(V) )^2 \\
  &\geq 2\norm{P_\infty^{1/2} V P_\infty^{1/2}}_F^2 \geq 2 \lambda_{\min}^2(P_\infty) \:.
\end{align*}
The last inequality follows by standard properties of the Kronecker product,
\begin{align*}
  \norm{P_\infty^{1/2} V P_\infty^{1/2}}_F^2 &= \norm{\vec(P_\infty^{1/2} V P_\infty^{1/2})}^2
  = \norm{ (P_\infty^{1/2} \otimes P_\infty^{1/2}) \vec(V) }^2 \\
  &\geq \sigma^2_{\min}(P_\infty^{1/2} \otimes P_\infty^{1/2}) \norm{\vec(V)}^2
  =\sigma^2_{\min}(P_\infty^{1/2} \otimes P_\infty^{1/2}) \norm{V}_F^2 \\
  &=\sigma^2_{\min}(P_\infty^{1/2} \otimes P_\infty^{1/2})
  = \lambda^2_{\min}(P_\infty) \:.
\end{align*}
Combining these inequalities, we have that
$\norm{f_v}_{L^2} \geq \sqrt{2} \lambda_{\min}(P_\infty)$.
By Lemma~\ref{lem:gaussian_small_ball},
we conclude
\begin{align*}
  \Pr_{\nu_\infty}\{ \abs{\ip{v}{\phi(X)}} \geq \lambda_{\min}(P_\infty) \} \geq 1/324 \:.
\end{align*}
Hence, we can take $\tau = \lambda_{\min}(P_\infty)$.
We now compute the second-moment $\E_{\nu_\infty}[ \norm{\phi(X)}^2 ]$,
\begin{align*}
  \E_{\nu_\infty}[ \norm{\phi(X)}^2 ] &= \E_{\nu_\infty}[\norm{X}^4] + 2 \eta \E_{\nu_\infty}[\norm{X}^2] + \eta^2 n \\
  &= 2 \norm{P_\infty}_F^2 + \Tr(P_\infty)^2 + 2 \eta \Tr(P_\infty) + \eta^2 n \\
  &\leq 3 \Tr(P_\infty)^2 + 2 \eta \Tr(P_\infty) + \eta^2 n \\
  &\leq 4 \Tr(P_\infty)^2 + \eta^2 (n + 1) \:.
\end{align*}
The claim now follows from the mixing time calculation in
Corollary~\ref{cor:mixing_coeff} and Theorem~\ref{thm:mendelson_markov}.
In order to apply Corollary~\ref{cor:mixing_coeff} to the stochastic
process $\{\phi(X_k)\}_{k=1}^{\infty}$,
we let $\phi^{-1}(A) = \{ x \in \R^n : \phi(x) \in A \}$ for any measurable set $A \subseteq \R^d$ and
observe that for any $k \geq 1$,
\begin{align*}
    \abs{\Pr_{X_k}(\phi(X_k) \in A|X_0{=}x) - \nu_\infty(\phi(X_k) \in A)}
    &= \abs{\Pr_{X_k}(X_k \in \phi^{-1}(A)|X_0{=}x) - \nu_\infty(X_k \in \phi^{-1}(A))} \\
    &\leq \norm{ \Pr_{X_k}(\cdot | X_0{=}x) - \nu_\infty}_{\mathrm{tv}} \:,
\end{align*}
and hence $\norm{\Pr_{\phi(X_k)}(\cdot|X_0{=}x) - \nu_\infty(\phi(X_k) \in \cdot)}_{\mathrm{tv}} \leq\norm{ \Pr_{X_k}(\cdot | X_0{=}x) - \nu_\infty}_{\mathrm{tv}}$.
\end{proof}
We now turn our attention to the term in the numerator of \eqref{eq:lstd_error_decomposition}.
In the sequel, we will make repeated use of the Hanson-Wright inequality from
Rudelson and Vershynin~\cite{rudelson13}.
We first start with a technical claim that will be used in the main proof.
\begin{prop}
\label{prop:hanson_wright_xy}
Let $x \sim \mathcal{N}(0, \Sigma_1)$ and $y \sim \mathcal{N}(0, \Sigma_2)$ be independent.
Fix a matrix $M$.
There exists a universal constant $c$ such that
with probability at least $1-\delta$,
\begin{align*}
  \abs{x^\T M y} \leq \sqrt{ \frac{\norm{ \Sigma_1^{1/2} M \Sigma_2^{1/2} }_F^2}{c} \log(2/\delta) } + \frac{\norm{\Sigma_1^{1/2} M \Sigma_2^{1/2}}}{c} \log(2/\delta) \:.
\end{align*}
\end{prop}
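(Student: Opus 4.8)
The plan is to reduce the bilinear form $x^\T M y$ to a quadratic form in a single Gaussian vector and then invoke the Hanson-Wright inequality. First I would whiten: since $x \sim \Normal(0, \Sigma_1)$ and $y \sim \Normal(0, \Sigma_2)$ are independent, I can write $x = \Sigma_1^{1/2} g_1$ and $y = \Sigma_2^{1/2} g_2$ with $g_1, g_2$ independent standard Gaussians. Stacking them into a single vector $g = (g_1, g_2) \sim \Normal(0, I_{n_1 + n_2})$, the quantity becomes $x^\T M y = g_1^\T \Sigma_1^{1/2} M \Sigma_2^{1/2} g_2$. Setting $\widetilde{M} := \Sigma_1^{1/2} M \Sigma_2^{1/2}$, this is a bilinear form $g_1^\T \widetilde{M} g_2$ that I would express as a single quadratic form $g^\T \Lambda g$ by choosing the block matrix
\begin{align*}
  \Lambda = \frac{1}{2}\begin{bmatrix} 0 & \widetilde{M} \\ \widetilde{M}^\T & 0 \end{bmatrix} \:,
\end{align*}
so that $g^\T \Lambda g = g_1^\T \widetilde{M} g_2$. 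Note that $\E[g^\T \Lambda g] = \Tr(\Lambda) = 0$ because the diagonal blocks vanish, so the mean-zero form of Hanson-Wright applies directly.

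Next I would compute the two norms of $\Lambda$ that appear in Hanson-Wright, namely $\norm{\Lambda}_F$ and $\norm{\Lambda}$ (operator norm). For the Frobenius norm, the block structure gives $\norm{\Lambda}_F^2 = \tfrac{1}{4}(\norm{\widetilde{M}}_F^2 + \norm{\widetilde{M}^\T}_F^2) = \tfrac{1}{2}\norm{\widetilde{M}}_F^2$, hence $\norm{\Lambda}_F = \tfrac{1}{\sqrt{2}}\norm{\widetilde{M}}_F$. For the operator norm, the eigenvalues of the symmetric off-diagonal block matrix are $\pm \tfrac{1}{2}\sigma_i(\widetilde{M})$, so $\norm{\Lambda} = \tfrac{1}{2}\norm{\widetilde{M}}$. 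Substituting $\widetilde{M} = \Sigma_1^{1/2} M \Sigma_2^{1/2}$ recovers exactly the Frobenius and operator norms appearing in the statement. The Hanson-Wright inequality then yields a tail bound of the form $\Pr\{\abs{g^\T \Lambda g} > s\} \leq 2\exp(-c'\min\{s^2/\norm{\Lambda}_F^2, s/\norm{\Lambda}\})$ for a universal constant $c'$; inverting this at confidence $1-\delta$ produces the claimed two-term bound with the $\sqrt{\log(2/\delta)}$ (sub-Gaussian) and $\log(2/\delta)$ (sub-exponential) contributions.

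The only genuinely delicate point is bookkeeping the universal constant. The standard statement of Hanson-Wright (Theorem in Rudelson and Vershynin~\cite{rudelson13}) is phrased for a single Gaussian vector with an arbitrary fixed matrix, and gives a two-sided tail with some absolute constant in the exponent. I would just apply it to $g^\T \Lambda g - \E[g^\T \Lambda g] = g^\T \Lambda g$ and then rename constants so that the final bound matches the form stated, absorbing the factors of $\tfrac{1}{\sqrt{2}}$ and $\tfrac{1}{2}$ from the norm computations into the single constant $c$. Since the proposition only claims existence of \emph{some} universal constant $c$, no attempt to track the sharp value is needed, so this step is routine once the reduction and the two norm identities are in place. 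The main conceptual work is the whitening-and-stacking reduction; everything after that is plugging into a black-box concentration inequality.
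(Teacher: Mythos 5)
Your proposal is correct and follows essentially the same route as the paper's proof: whitening both Gaussians, stacking them into a single isotropic vector, rewriting $x^\T M y$ as a quadratic form with the symmetric block off-diagonal matrix $\frac{1}{2}\begin{bmatrix} 0 & \Sigma_1^{1/2} M \Sigma_2^{1/2} \\ \Sigma_2^{1/2} M^\T \Sigma_1^{1/2} & 0 \end{bmatrix}$, and invoking Hanson--Wright. Your version is slightly more careful than the paper's (explicitly computing the Frobenius norm and noting the zero trace), but the underlying argument is identical.
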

\begin{proof}
Observe we can write
\begin{align*}
  x^\T M y = w^\T \Sigma_1^{1/2} M \Sigma_2^{1/2} v = \frac{1}{2} \begin{bmatrix} w \\ v \end{bmatrix}^\T \begin{bmatrix} 0 & \Sigma_1^{1/2} M \Sigma_2^{1/2} \\ \Sigma_2^{1/2} M^\T \Sigma_1^{1/2} & 0 \end{bmatrix} \begin{bmatrix} w \\ v \end{bmatrix} \:,
\end{align*}
where $(w,v)$ is an isotropic Gaussian.
Now recall that $\bignorm{ \begin{bmatrix} 0 & A \\ A^\T & 0 \end{bmatrix} } = \norm{A}$.
The result now follows by the Hanson-Wright inequality.
\end{proof}
We now establish a bound on the numerator of \eqref{eq:lstd_error_decomposition}.
\begin{lem}
\label{lemma:upstairs_bound_lstd_lqr}
Fix $\delta \in (0, 1)$. With probability at least $1-\delta$, we have,
\begin{align*}
  &\bignorm{\sum_{k=1}^{N} \phi(X_k)(\phi(X_{k+1}) - \E[ \phi(X_{k+1}) | X_k ])^\T v_\pi } \\
  &\qquad\leq O( \norm{v_\pi} \sqrt{N} (\Tr(P_N) + \eta \sqrt{n})\norm{L P_N^{1/2}}  \mathrm{polylog}(N, n, 1/\delta) ) \:.
\end{align*}
\end{lem}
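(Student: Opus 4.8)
The plan is to expose and exploit a martingale hidden in the sum. First I would expand the scalar increment $\xi_k := (\phi(X_{k+1}) - \E[\phi(X_{k+1})\mid X_k])^\T v_\pi$. Substituting $X_{k+1} = L X_k + W_k$ and pairing the quadratic feature map with $v_\pi = \svec(P_\pi)$, the conditional-mean part $\svec(L X_k X_k^\T L^\T + (1+\eta) I)$ cancels and one is left with
\[
  \xi_k = 2 W_k^\T P_\pi L X_k + \bigl( W_k^\T P_\pi W_k - \Tr(P_\pi) \bigr) \:.
\]
Taking $\F_k := \sigma(W_1, \dots, W_{k-1})$, the feature $\phi(X_k)$ is $\F_k$-measurable while $W_k \sim \Normal(0,I)$ is independent of $\F_k$; hence $\E[\xi_k \mid \F_k] = 0$ and the partial sums $S_N := \sum_{k=1}^N \phi(X_k)\,\xi_k$ form a martingale valued in $\R^d$. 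Identifying $\R^d$ with $\mathrm{Sym}_{n \times n}$ through $\smat$, the target quantity is $\fronorm{M_N} = \norm{S_N}$ with $M_N := \sum_{k=1}^N (X_k X_k^\T + \eta I)\,\xi_k =: \sum_{k=1}^N \Delta_k$, a sum of symmetric matrix martingale differences.

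I would then bound $\norm{S_N}$ with a vector-valued (Hilbert-space) Bernstein--Freedman martingale inequality, which is dimension-free: it controls $\norm{S_N}$ in terms of the scalar predictable quadratic variation $V_N := \sum_{k=1}^N \E[\fronorm{\Delta_k}^2 \mid \F_k] = \sum_{k=1}^N \norm{\phi(X_k)}^2\, \E[\xi_k^2 \mid \F_k]$ together with the maximal increment $\max_k \fronorm{\Delta_k}$, at the cost of only a $\log(N/\delta)$ factor and \emph{no} polynomial dependence on $d$. This is essential, because union-bounding a scalar martingale over an $\varepsilon$-net of $S^{d-1}$ would instead cost a factor $\sqrt{d} = \Theta(n)$ that the target bound cannot absorb. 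Since odd Gaussian moments vanish, the linear and quadratic parts of $\xi_k$ decouple and a short computation gives $\E[\xi_k^2 \mid \F_k] = 4 \norm{P_\pi L X_k}^2 + 2 \fronorm{P_\pi}^2$.

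The crux is a high-probability upper bound on $V_N$, a sum of polynomials in the Gaussian vectors $X_k$. I would bound it using the Hanson--Wright inequality: Proposition~\ref{prop:hanson_wright_xy} for the bilinear term $2(L X_k)^\T P_\pi W_k$, where $X_k$ and $W_k$ are independent, and its scalar form for the quadratic forms $W_k^\T P_\pi W_k$ and $X_k^\T L^\T P_\pi^2 L X_k$. The governing stationary moments are $\E\norm{\phi(X_k)}^2 \lesssim (\Tr(P_N) + \eta \sqrt n)^2$ -- exactly the computation in the proof of Lemma~\ref{lemma:eigenvalue_bound_lstd_lqr}, using $P_k \preccurlyeq P_N$ -- and $\E\norm{P_\pi L X_k}^2 = \Tr(P_\pi^2 L P_k L^\T) \leq \norm{L P_N^{1/2}}^2 \fronorm{P_\pi}^2 = \norm{L P_N^{1/2}}^2 \norm{v_\pi}^2$, the last step using $\norm{v_\pi} = \fronorm{P_\pi}$. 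Combining, $V_N \lesssim N (\Tr(P_N) + \eta \sqrt n)^2 \norm{v_\pi}^2 \norm{L P_N^{1/2}}^2$ up to logarithmic factors, whose square root is precisely the claimed bound. As a sanity check, the bare second moment satisfies $\E\fronorm{M_N}^2 = \sum_k \E\fronorm{\Delta_k}^2$ (cross terms vanish by the martingale property) and evaluates to the same scaling, confirming that the estimate is tight at the level of variances. The maximal increment $\fronorm{\Delta_k} \leq \norm{\phi(X_k)}\,\abs{\xi_k}$ is lower order: bounding $\max_k \norm{X_k}^2 \lesssim \Tr(P_N) + \norm{P_N} \log(N/\delta)$ by a Gaussian maximal inequality and $\abs{\xi_k}$ by the subexponential tail of its quadratic form, this piece contributes only to the additive $\mathrm{polylog}$ term.

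The main obstacle is the sharp control of $V_N$. The naive bound $V_N \leq \bigl(\max_k \E[\xi_k^2 \mid \F_k]\bigr) \sum_k \norm{\phi(X_k)}^2$ already loses too much, and when bounding $\sum_k \norm{\phi(X_k)}^2 \norm{P_\pi L X_k}^2$ one must keep the trace structure by writing $\norm{P_\pi L X_k}^2 = X_k^\T L^\T P_\pi^2 L X_k$ and applying Hanson--Wright to this quadratic form, using the von Neumann/Hölder split that pairs $\Tr(P_\pi^2) = \norm{v_\pi}^2$ with $\norm{L P_N^{1/2}}^2 = \norm{L P_N L^\T}$. The alternative split $\norm{v_\pi}^2 \max_k \norm{L X_k}^2$, whose scale $\norm{v_\pi}^2 \Tr(L P_k L^\T) \approx n \norm{v_\pi}^2 \norm{P_\infty}$ is a factor $n$ too large, must be avoided. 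A secondary difficulty is that the weight $\norm{\phi(X_k)}^2$ and the conditional variance $\E[\xi_k^2 \mid \F_k]$ are correlated (both grow with $\norm{X_k}$); since both are polynomials in the same Gaussian vector, this correlation costs only constants and logarithms, so the dominant $\sqrt{N}$ term retains the stated dependence on $\Tr(P_N)$, $\norm{v_\pi}$, and $\norm{L P_N^{1/2}}$.
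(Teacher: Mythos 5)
Your skeleton coincides with the paper's proof: the identity $\Delta_k = 2W_k^\T P_\pi L X_k + W_k^\T P_\pi W_k - \Tr(P_\pi)$, the conditional variance $\E[\Delta_k^2 \mid \F_k] = 4\norm{P_\pi L X_k}^2 + 2\norm{P_\pi}_F^2$, maximal bounds on $\norm{X_k}$ and $\norm{P_\pi L X_k}$, and a Freedman-type martingale inequality (the paper uses Tropp's matrix Freedman, which on this $d$-dimensional martingale costs only $\sqrt{\log(d/\delta)} = O(\sqrt{\log(n/\delta)})$, so your dimension-free Hilbert-space variant buys nothing essential). The genuine gap is your treatment of the heavy tails. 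Freedman's inequality requires increments that are almost surely bounded (or conditionally sub-exponential with \emph{deterministic} parameters), and $\phi_k \Delta_k$ is a degree-four polynomial of Gaussians, so it satisfies neither. A high-probability bound on $\max_k \abs{\Delta_k}$ cannot simply be plugged in: one must truncate, $\Deltatilde_k = \Delta_k \ind_{\abs{\Delta_k} \leq M_k}$, and truncation destroys centering, so the bias term $\bignorm{\sum_{k} \phi_k \E[\Delta_k \ind_{\abs{\Delta_k} > M_k} \mid \F_k]}$ must be controlled separately. This is the term $T_1$ in the paper's proof (following Bubeck et al.), and it is where the real technical work lies; your proposal never mentions it.

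The omission is quantitative, not cosmetic. If you truncate at the natural scale $M_k \equiv M_{\Delta,\delta} = \mathrm{polylog}(N,n,1/\delta)$ suggested by your maximal inequality, then bounding the bias via $\E[\abs{\Delta_k} \ind_{\abs{\Delta_k} > M_k} \mid \F_k] \leq \E[\Delta_k^2 \mid \F_k]/M_k$ and summing over $k$ gives a contribution of order $N$ times the problem factors --- a factor $\sqrt{N}$ larger than the target. The paper's resolution is to truncate at the \emph{growing} level $M_k = \sqrt{k}\, M_{\Delta,\delta}$: the bias then sums to $O(\sqrt{N})$ via $\sum_{k=1}^N k^{-1/2} \leq 2\sqrt{N}$, while the almost-sure increment bound fed into Freedman becomes $2(M_{X,\delta}^2 + \eta\sqrt{n}) M_{\Delta,\delta}\sqrt{N}$, also of order $\sqrt{N}$. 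Both pieces land at exactly the order of the main term; neither is "additive polylog." So your accounting is internally inconsistent: keep the small truncation level and the bias blows up, or enlarge it and your increment term acquires the $\sqrt{N}$ factor you claimed it lacks. (A route that genuinely avoids the bias is a stopping-time argument --- a stopped martingale is still a martingale --- or a Freedman inequality for conditionally sub-exponential differences, but the conditional sub-exponential parameter here, roughly $\norm{\phi_k}(\norm{P_\pi L X_k} + \norm{P_\pi}_F)$, is itself random, which forces a stopping or truncation step anyway; you describe neither.)
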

\begin{proof}
The proof uses a standard martingale argument. The only complication here is
that the random vectors are heavy tailed, and hence a
truncation argument is needed; we use a truncation argument
similar to Lemma 1 of Bubeck et al.~\cite{bubeck13}.
In the proof, constants $c_i$ will denote universal constants.

We introduce the shorthand notation $\phi_k := \phi(X_k)$
and $\Delta_k := \ip{\phi_{k+1} - \E[\phi_{k+1} | \F_k]}{v_\pi}$.
We first show the following identities,
\begin{align*}
  \Delta_k &= 2 W_k^\T P_\pi L X_k + W_k^\T P_\pi W_k - \Tr(P_\pi) \:, \\
  \E[ \Delta_k^2 | \F_k ] &= 4 \norm{P_\pi L X_k}^2 + 2 \norm{P_\pi}_F^2 \:.
\end{align*}
Observe that by the linearity of $M \mapsto \svec(M)$,
\begin{align*}
  \phi_{k+1} - \E[\phi_{k+1}|\F_k] &= \svec( X_{k+1}X_{k+1}^\T + \eta I ) - \E[ \svec( X_{k+1}X_{k+1}^\T + \eta I ) | \F_k] \\
  &= \svec\left( X_{k+1}X_{k+1}^\T + \eta I - \E[ X_{k+1}X_{k+1}^\T + \eta I | \F_k] \right) \\
  &= \svec\left( X_{k+1}X_{k+1}^\T - \E[ X_{k+1}X_{k+1}^\T | \F_k ] \right) \\
  &= \svec\left( (L X_k + W_k)(L X_k + W_k)^\T - \E[ (L X_k + W_k)(L X_k + W_k)^\T | \F_k ] \right) \\
  &= \svec\left( LX_kX_k^\T L^\T + L X_k W_k^\T + W_k X_k^\T L^\T + W_kW_k^\T - LX_kX_k^\T L^\T - I \right) \\
  &= \svec\left( LX_k W_k^\T + W_kX_k^\T L^\T + W_kW_k^\T - I \right) \:.
\end{align*}
Therefore,
\begin{align*}
  \Delta_k &= \ip{\phi_{k+1} - \E[\phi_{k+1}|\F_k]}{v_\pi} \\
  &= \ip{LX_k W_k^\T + W_kX_k^\T L^\T + W_kW_k^\T - I}{P_\pi} \\
  &= \ip{LX_kW_k^\T}{P_\pi} + \ip{W_kX_k^\T L^\T}{P_\pi} + W_k^\T P_\pi W_k - \Tr(P_\pi) \\
  &= 2W_k^\T P_\pi L X_k + W_k^\T P_\pi W_k - \Tr(P_\pi) \:.
\end{align*}
Next, expanding out $\E[\Delta_k^2 | \F_k]$,
\begin{align*}
  \E[ \Delta_k^2 | \F_k ] = \E[4 (W_k^\T P_\pi L X_k)^2 + (W_k^\T P_\pi W_k - \Tr(P_\pi))^2 + 4 W_k^\T P_\pi L X _k (W_k^\T P_\pi W_k - \Tr(P_\pi))|\F_k] \:.
\end{align*}
The claimed identities now follows by observing that,
\begin{align*}
  \E[ (W_k^\T P_\pi L X_k)^2 | \F_k] &= \E[ W_k^\T P_\pi L X_kX_k^\T L^\T P_\pi W_k | \F_k] \\
  &= \Tr( P_\pi L X_k X_k^\T L^\T P _\pi) = \norm{ P_\pi L X_k }^2 \:, \\
  \E[(W_k^\T P_\pi W_k - \Tr(P_\pi))^2] &= \mathrm{Var}(W_k^\T P_\pi W_k) \\
  &\stackrel{(a)}{=} \E[(W_k P_\pi W_k)^2] - \Tr(P_\pi)^2 = 2\norm{P_\pi}_F^2 \:, \\
  \E[ W_k^\T P_\pi L X_k(W_k^\T P_\pi W_k - \Tr(P_\pi)) | \F_k] &= 0 \:,
\end{align*}
where (a) follows from Proposition~\ref{prop:gaussian_fourth_moment}.

By the Hanson-Wright inequality, with probability at least $1-\delta$,
\begin{align*}
  \abs{W_k^\T P_\pi W_k - \Tr(P_\pi)} \leq \sqrt{ \frac{\norm{P_\pi}_F^2}{c_1} \log(2/\delta) } + \frac{\norm{P_\pi}}{c_1} \log(2/\delta) \:.
\end{align*}
Also, since $X_k \sim N(0, P_k)$ and is independent of $W_k$, by Proposition~\ref{prop:hanson_wright_xy},
with probability at least $1-\delta$,
\begin{align*}
  \abs{2 W_k^\T P_\pi L X_k} \leq \sqrt{ \frac{\norm{P_\pi L P_k^{1/2}}_F^2}{c_2} \log(2/\delta) } + \frac{\norm{P_\pi L P_k^{1/2}}}{c_2} \log(2/\delta) \:.
\end{align*}
Hence, applying the triangle inequality and a union bound,
there is an event $\calE_{\Delta,\delta}$ such that $\Pr(\calE_{\Delta,\delta}^c) \leq \delta$ and on $\calE_{\Delta,\delta}$,
\begin{align*}
  \max_{1 \leq k \leq N} \abs{\Delta_k} &\leq
    \sqrt{ \frac{\norm{P_\pi}_F^2}{c_1} \log(4N/\delta) } + \frac{\norm{P_\pi}}{c_1} \log(4N/\delta) \\
    &\qquad+
    \sqrt{ \frac{\norm{P_\pi L P_N^{1/2}}_F^2}{c_2} \log(4N/\delta) } + \frac{\norm{P_\pi L P_N^{1/2}}}{c_2} \log(4N/\delta) \\
    &:= M_{\Delta,\delta} \:.
\end{align*}
Next, by standard Gaussian concentration results and
a union bound, combined with the fact that $P_k \preccurlyeq P_N$ for all $1 \leq k \leq N$,
there exists an event $\calE_{X,\delta}$, such that
$\Pr(\calE_{X,\delta}^c) \leq \delta$ and on $\calE_{X,\delta}$,
\begin{align*}
  \max_{1 \leq k \leq N} \norm{X_k} \leq \sqrt{\Tr(P_N)} + \norm{P_N}^{1/2} \sqrt{2 \log(N/\delta)} := M_{X,\delta} \:.
\end{align*}
Furthermore, since $P_\pi L X_k \sim \mathcal{N}(0, P_\pi L P_k L^\T P_\pi)$,
similar arguments yield that
there is an event $\calE_{PLX,\delta}$ with $\Pr(\calE_{PLX,\delta}) \leq \delta$ and on
$\calE_{PLX,\delta}$,
\begin{align*}
  \max_{1 \leq k \leq N} \norm{P_\pi L X_k} \leq \norm{P_\pi L P_N^{1/2}}_F + \norm{P_\pi L P_N^{1/2}} \sqrt{2\log(N/\delta)} := M_{PLX,\delta} \:.
\end{align*}
Hence, on $\calE_{X,\delta} \cap \calE_{PLX,\delta}$, setting $M_k := \sqrt{k} M_{\Delta,\delta}$,
\begin{align*}
  T_1 &:= \bignorm{ \sum_{k=1}^{N} \phi_k \E[ \Delta_k \ind_{\abs{\Delta_k} > M_k} | \F_k ] }
      \leq \sum_{k=1}^{N} \norm{\phi_k} \E[ \abs{\Delta_k} \ind_{\abs{\Delta_k} > M_k} | \F_k ] \\
      &= \sum_{k=1}^{N} \norm{\phi_k} \E\left[ \frac{\Delta_k^2}{\abs{\Delta_k}} \ind_{\abs{\Delta_k} > M_k} | \F_k \right]
      \leq \sum_{k=1}^{N} \norm{\phi_k} \frac{\E[ \Delta_k^2 | \F_k ]}{M_k} \\
      &\leq 4\sum_{k=1}^{N} (\norm{X_k}^2 + \eta \sqrt{n}) \frac{\norm{P_\pi L X_k}^2 + \norm{P_\pi}_F^2}{M_k}
      \leq 4\sum_{k=1}^{N} (M_{X,\delta}^2 + \eta \sqrt{n}) \frac{M_{PLX,\delta}^2 + \norm{P_\pi}_F^2}{M_k} \\
      &= 4 (M_{X,\delta}^2 + \eta \sqrt{n}) \frac{M_{PLX,\delta}^2 + \norm{P_\pi}_F^2}{M_{\Delta,\delta} } \sum_{k=1}^{N} \frac{1}{\sqrt{k}}
      \leq 8 (M_{X,\delta}^2 + \eta \sqrt{n}) \frac{M_{PLX,\delta}^2 + \norm{P_\pi}_F^2}{M_{\Delta,\delta} } \sqrt{N} \:.
\end{align*}
Now, observe that,
\begin{align*}
  \frac{M_{PLX,\delta}^2 + \norm{P_\pi}_F^2}{M_{\Delta,\delta} } &\leq \frac{ 2\norm{P_\pi L P_N^{1/2}}_F^2 + 4 \norm{P_\pi L P_N^{1/2}}^2 \log(N/\delta) + \norm{P_\pi}_F^2 }{ M_{\Delta,\delta} } \\
  &\leq c_3 (M_{PLX,\delta} + \norm{P_\pi}_F) \:.
\end{align*}
Hence,
\begin{align*}
  T_1 \leq c_4 (M_{X,\delta}^2 + \eta\sqrt{n}) (M_{PLX,\delta} + \norm{P_\pi}_F) \sqrt{N} \:.
\end{align*}
We now turn our attention to bounding the martingale difference sequence,
\begin{align*}
  T_2 := \bignorm{ \sum_{k=1}^{N} \phi_k\ind_{\norm{X_k} \leq M_{X,\delta}} (\Delta_k\ind_{\abs{\Delta_k} \leq M_k} - \E[ \Delta_k\ind_{\abs{\Delta_k} \leq M_k} | \F_k ])  } := \bignorm{ \sum_{k=1}^{N} Y_k } \:.
\end{align*}
Observe that $Y_k$ is $\F_{k+1}$-measurable and $\E[Y_k|\F_k] = 0$.
Also,
\begin{align*}
  \norm{Y_k} \leq 2(M_{X,\delta}^2 + \eta\sqrt{n})M_k = 2(M_{X,\delta}^2 + \eta\sqrt{n})M_{\Delta,\delta} \sqrt{k}
\end{align*}
holds almost surely.
Hence, by Freedman's inequality for matrix martingales (Corollary 1.3 of Tropp~\cite{tropp11}),
for every $\sigma > 0$, there exists
an event with probability at least $1-\delta$ such that
\begin{enumerate}[(a)]
  \item $\bignorm{\sum_{k=1}^{N} Y_k } \leq \frac{8}{3}(M_{X,\delta}^2 + \eta\sqrt{n}) M_{\Delta,\delta} \sqrt{N} + \sigma \sqrt{2 \log(d/\delta)}$, or
  \item $\sum_{k=1}^{N} \E[ \norm{Y_k}^2 | \F_{k} ] > \sigma^2$.
\end{enumerate}
We now bound $\sum_{k=1}^{N} \E[ \norm{Y_k}^2 | \F_{k} ]$ from above so we can exclude
the possibility of the second condition.
To do this, we observe that on $\calE_{X,\delta} \cap \calE_{PLX,\delta}$,
\begin{align*}
  \E[\norm{Y_k}^2 | \F_k] &\leq (M_{X,\delta}^2 + \eta\sqrt{n})^2 \E[ (\Delta_k\ind_{\abs{\Delta_k} \leq M_k} - \E[ \Delta_k\ind_{\abs{\Delta_k} \leq M_k} | \F_k ])^2 | \F_k] \\
  &\leq (M_{X,\delta}^2 + \eta\sqrt{n})^2 \E[ \Delta_k^2 \ind_{\abs{\Delta_k} \leq M_k} | \F_k ] \\
  &\leq (M_{X,\delta}^2 + \eta\sqrt{n})^2 \E[ \Delta_k^2 | \F_k ] \\
  &= (M_{X,\delta}^2 + \eta\sqrt{n})^2 ( 4 \norm{P_\pi L X_k}^2 + 2 \norm{P_\pi}_F^2 ) \\
  &\leq 4 (M_{X,\delta}^2 + \eta\sqrt{n})^2 (M_{PLX,\delta}^2 + \norm{P_\pi}_F^2) \:,
\end{align*}
from which we conclude that
$\sum_{k=1}^{N} \E[ \norm{Y_k}^2 | \F_{k} ] \leq 4 N (M_{X,\delta} + \eta\sqrt{n})^2 (M_{PLX,\delta}^2 + \norm{P_\pi}_F^2)$.
Hence, setting
\begin{align*}
  \sigma = 2\sqrt{N} (M_{X,\delta}^2 + \eta\sqrt{n})( M_{PLX,\delta} + \norm{P_\pi}_F  ) \:,
\end{align*}
we conclude there exists an event $\calE_{\mathrm{fr},\delta}$ such that
$\Pr(\calE_{\mathrm{fr},\delta}^c) \leq \delta$ and
on $\calE_{\mathrm{fr},\delta} \cap \calE_{X,\delta} \cap \calE_{PLX,\delta}$,
\begin{align*}
  T_2 \leq \frac{8}{3}(M_{X,\delta}^2 + \eta\sqrt{n}) M_{\Delta,\delta} \sqrt{N} + 2\sqrt{N} (M_{X,\delta}^2 + \eta\sqrt{n})( M_{PLX,\delta} + \norm{P_\pi}_F  ) \sqrt{2 \log(d/\delta)} \:.
\end{align*}
We are now ready to combine the above calculations.
We introduce the shorthand $\phitilde_k := \phi_k \ind_{\norm{X_k} \leq M_{X,\delta}}$
and $\Deltatilde_k := \Delta_k \ind_{\abs{\Delta_k} \leq M_k}$.
For what follows, we assume we are on the event
$\calE_{\Delta,\delta} \cap \calE_{\mathrm{fr},\delta} \cap \calE_{X,\delta} \cap \calE_{PLX,\delta}$.
By a union bound, this occurs with probability at least $1-4\delta$.
On this event, we have that $\phitilde_k = \phi_k$ and $\Deltatilde_k = \Delta_k$ for all $k=1, ..., N$.
Hence, using the fact that $\E[\Delta_k | \F_k] = 0$,
\begin{align*}
  \bignorm{ \sum_{k=1}^{N} \phi_k \Delta_k } &= \bignorm{ \sum_{k=1}^{N} \phitilde_k \Deltatilde_k } \\
  &\leq \bignorm{ \sum_{k=1}^{N} \phitilde_k \E[\Deltatilde_k | \F_k] } + \bignorm{  \sum_{k=1}^{N} \phitilde_k (\Deltatilde_k - \E[\Deltatilde_k | \F_k]) }  \\
  &= \bignorm{ \sum_{k=1}^{N} \phitilde_k (\E[\Deltatilde_k | \F_k] - \E[\Delta_k | \F_k])  } + \bignorm{  \sum_{k=1}^{N} \phitilde_k (\Deltatilde_k - \E[\Deltatilde_k | \F_k]) } \\
  &= \bignorm{ \sum_{k=1}^{N} \phitilde_k \E[\Delta_k \ind_{\abs{\Delta_k} > M_k} | \F_k]  } + \bignorm{  \sum_{k=1}^{N} \phitilde_k (\Deltatilde_k - \E[\Deltatilde_k | \F_k]) } \\
  &= T_1 + T_2 \\
  &\leq c_5 (M_{X,\delta}^2 + \eta\sqrt{n}) (M_{PLX,\delta} + \norm{P_\pi}_F) \sqrt{N} \\
  &\qquad + c_6 (M_{X,\delta}^2 + \eta\sqrt{n}) M_{\Delta,\delta} \sqrt{N} \\
  &\qquad + c_7 (M_{X,\delta}^2 + \eta\sqrt{n})( M_{PLX,\delta} + \norm{P_\pi}_F  ) \sqrt{\log(n/\delta)} \sqrt{N} \:.
\end{align*}
The claim now follows by observing that
\begin{align*}
  \norm{P_\pi L P_N^{1/2}}_F \leq \norm{P_\pi}_F \norm{L P_N^{1/2}} \:,
\end{align*}
followed by straightforward simplifications.
\end{proof}
Theorem~\ref{thm:lstd_estimate}
now readily follows by combining the
structural result of Lemma~\ref{lemma:structural_bound_lstd_lqr}
with the bounds established in
Lemma~\ref{lemma:eigenvalue_bound_lstd_lqr} and
Lemma~\ref{lemma:upstairs_bound_lstd_lqr},
using the fact that $P_N \preccurlyeq P_\infty$ and
$L P_\infty L^\T = P_\infty - I \preccurlyeq P_\infty$.

\section{Experiments}
\label{sec:experiments}

We conduct numerical experiments on LSTD for value function estimation, and
Least-Squares Policy Iteration (LSPI) for an end-to-end comparison with the
model-based methods in Dean et al.~\cite{dean17}.
Our implementation
is carried out in Python using \verb|numpy| for linear algebraic computations
and PyWren~\cite{jonas17} for parallelization.

In our first set of experiments, we construct synthetic examples where
we vary the condition number of the resulting closed-loop
controllability gramian matrix. We find that on these instances, as
the condition number increases, the required number of samples to estimate
the value function to fixed relative error increases, as
predicted by our result in Theorem~\ref{thm:lstd_estimate}.
In our second set of experiments, we compare model-free policy iteration (LSPI)
to two model-based methods:
(a) the na{\"i}ve nominal model controller which uses a controller
designed assuming that the nominal model has zero error, and (b)
a controller based on a semidefinite relaxation to the non-convex robust control problem
with static state-feedback.
Our experiments show that model-free policy iteration requires more samples
than model-based methods for the instances we consider.

\subsection{Synthetic Data}
\label{sec:experiments:synthetic}

The goal in this section is to showcase the qualitative behavior of LSTD on LQR
predicted by Theorem~\ref{thm:lstd_estimate} as the conditioning of the
closed-loop controllability gramian varies.
We consider several instances of LQR with $n=5$,
$Q=R=0.1 I_5$, and $\gamma=0.9$, where the state transition matrices $(A,B)$ and the policy $\pi(x) = Kx$
will be specified later.
For each configuration, we collect
$100$ trajectories of length $N=1000$. For each trajectory,
we take the first $N_p$ points for $N_p \in \{100, 200, ..., 1000\}$
and compute the LSTD estimator $\widehat{P}_{N_p}$ on the first $N_p$ data points.
We then compute the relative error
$\frac{\norm{P_\pi - \widehat{P}_{N_p}}_F}{\norm{P_\pi}_F}$ for each $N_p$,
and report the median and $25$-th to $75$-th percentile over the $100$ trajectories.

\begin{figure}[t!]
  \centering
  \begin{minipage}[t]{0.46\textwidth}
  \begin{center}
  \includegraphics[width=\columnwidth]{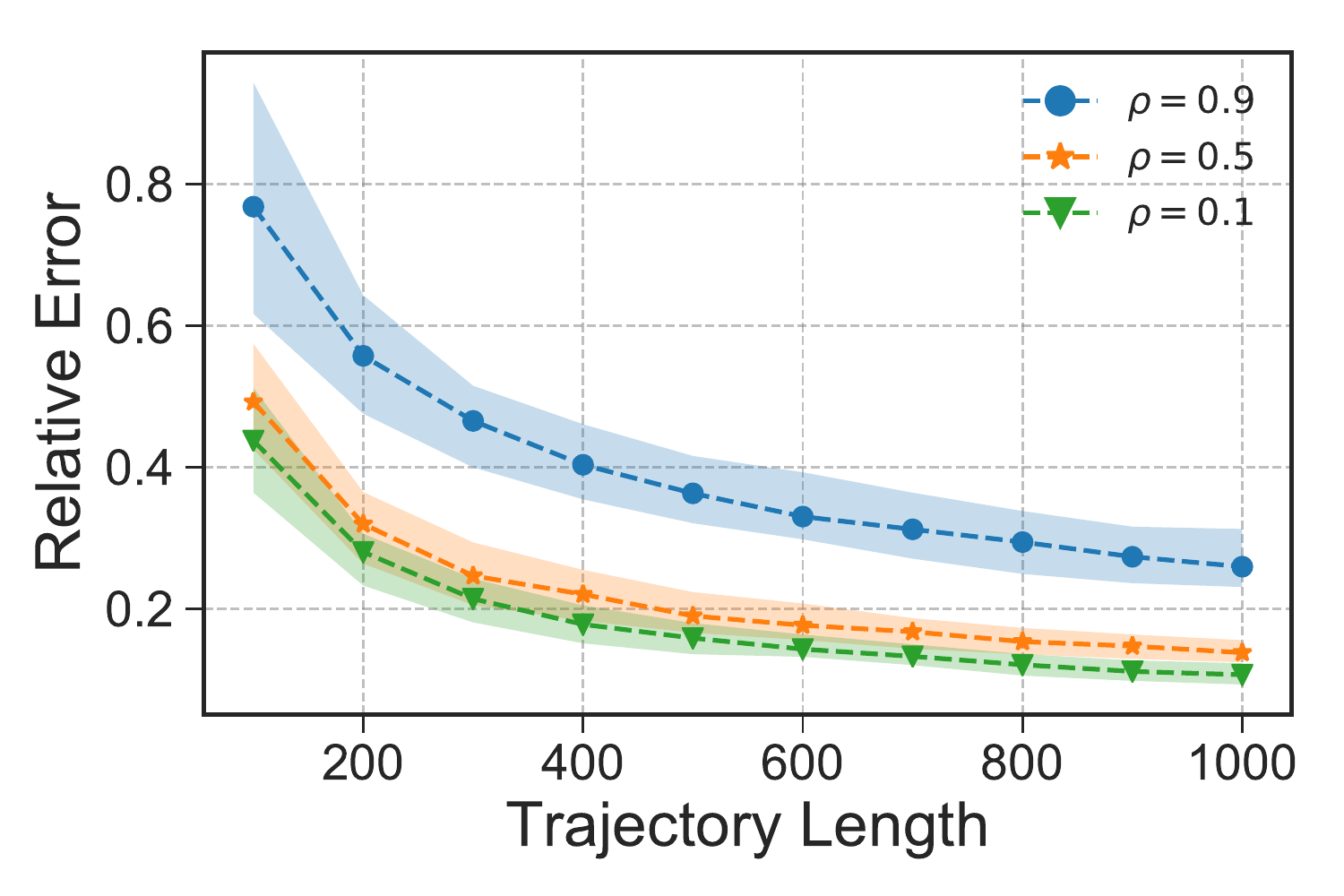}
  \end{center}
  \caption{
    Performance of LSTD on LQR instances where
    the closed loop response is $L=A+BK=\rho I_5$ for $\rho \in \{0.1,0.5,0.9\}$.
	The dashed line represents the median relative error, and the
	shaded region covers the $25$-th to $75$-th percentile
	of the relative error out of $100$ trajectories.
  }
    \label{fig:exp_simple}
  \end{minipage}
  \hspace{.02\textwidth}
    \begin{minipage}[t]{0.46\textwidth}
  \begin{center}
    \includegraphics[width=\columnwidth]{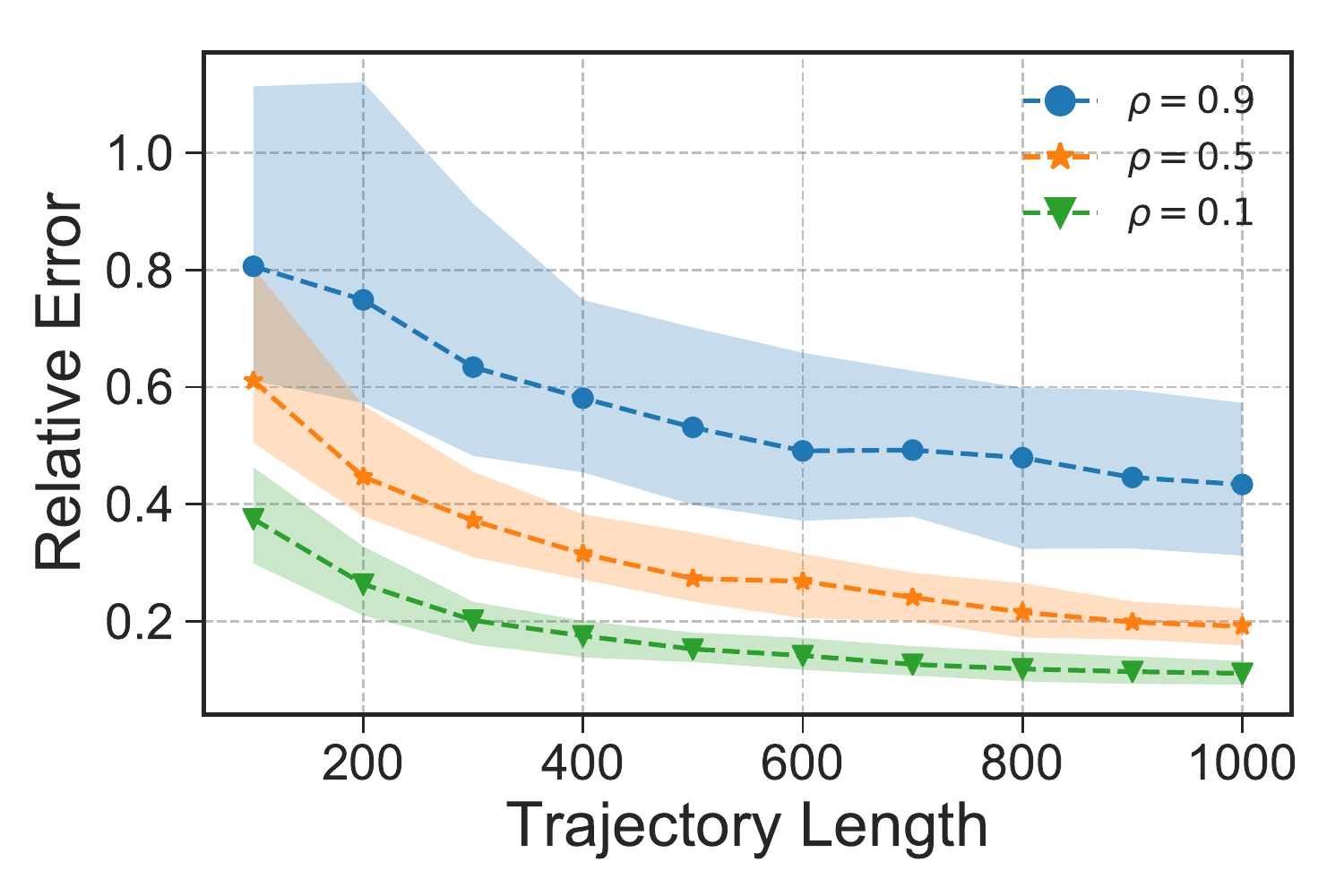}
  \end{center}
  \caption{
    Performance of LSTD on LQR instances where
    $B=K=0_{5 \times 5}$ and $A$ is generated randomly with spectral radius $\rho(A) \in \{0.1,0.5,0.9\}$.
	The dashed line represents the median relative error, and the
	shaded region covers the $25$-th to $75$-th percentile
	of the relative error out of $100$ trajectories.
  }
    \label{fig:exp_trace}
    \end{minipage}
\end{figure}

In the first experiment, we set $A = B = I_5$, and
we vary $K \in \{ \diag(-(1-\rho), -(1-\rho), ..., -(1-0.01)) : \rho \in \{0.1, 0.5, 0.9\} \}$ so that
$L = A+BK = \diag(\rho, \rho, ..., 0.01)$
and $\kappa(P_\infty) = \frac{1}{1-\rho^2} (1-0.01^2)$
for $\rho \in \{0.1, 0.5, 0.9\}$.
Theorem~\ref{thm:lstd_estimate} predicts that as $\rho$ increases towards one,
the number of samples required for $\varepsilon$-relative error increases
as well. Figure~\ref{fig:exp_simple} corroborates this finding.

For our second experiment, we set $B = K = 0_{5 \times 5}$ so that the closed-loop response is simply $A$.
We generate random instances of
$A$ as follows.  For each $\rho \in \{0.1, 0.5, 0.9\}$, we generated $1000$
$A$ instances by setting $A_{ii} = \rho$ for all diagonal entries
and $A_{ij} \sim \textrm{clip($\mathcal{N}(0,
1)$, -1, 1)}$ independently for all upper triangular entries. We order the $A$
instances by $\kappa(P_\infty(A))$, where $A P_\infty(A) A^\T - P_\infty(A) + I =
0$ and take the median. This results in
$\kappa \approx 7$, $\kappa \approx 35$, and $\kappa \approx 7 \times 10^5$
for $\rho = 0.1, 0.5, 0.9$, respectively.
We then run LSTD on the three median instances,
reporting the results in Figure~\ref{fig:exp_trace}.  Once again, as $\kappa(P_\infty(A))$
increases, the required trajectory length increases, as suggested by Theorem~\ref{thm:lstd_estimate}.
We note, however, that Theorem~\ref{thm:lstd_estimate} appears to be conservative in
predicting the actual scaling behavior with $\kappa(P_\infty(A))$.

\renewcommand{\algorithmicrequire}{\textbf{Input:}}
\begin{center}
    \begin{algorithm}[htb]
    \caption{Least-Squares Policy Iteration for LQR}
    \begin{algorithmic}[1]
        \REQUIRE{Starting policy $\pi_0(x) = K_0 x$.}
        \STATE{Collect $M$ rollouts of length $N$: $D = \{ (x^{(\ell)}_k, u^{(\ell)}_k, r^{(\ell)}_k, x^{(\ell)}_{k+1}) \}_{k=1,\ell=1}^{N,M}$, $u^{(\ell)}_k \sim N(0, I)$.}
        \STATE{$K \gets K_0$.}
        \WHILE{not converged}
            \STATE{$\bmattwo{{P_{11}}}{{P_{12}}}{{P_{12}}^\T}{{P_{22}}} \gets \text{LSTD-Q}(D, K)$. \verb|/* Estimate Q-function for K. */|}
            \STATE{$K \gets - {P_{22}}^{-1} {P_{12}}^\T$. \verb|/* Policy improvement step. */|}
        \ENDWHILE
        \STATE{{\bf return} $K$.}
    \end{algorithmic}
    \label{alg:lspi}
    \end{algorithm}
\end{center}

\subsection{Least-Squares Policy Iteration}
\label{sec:experiments:lspi}

We now describe our comparison of the Least-Squares Policy Iteration (LSPI)
algorithm from Lagoudakis and Parr~\cite{lagoudakis03} to the model-based
approaches of Dean et al.~\cite{dean17}.
It is interesting to empirically compare the end-to-end sample complexity of model-free
versus model-based methods for LQR in order to reach a specified
controller cost, since our theoretical results in Section~\ref{sec:lstd:lqr}
suggest that LSPI can require more samples than the model-based approaches.
We look at the same LQR instance from Dean et al., which is
described by
\begin{align}
  A = \begin{bmatrix}
    1.01 & 0.01 & 0 \\
    0.01 & 1.01 & 0.01 \\
    0 & 0.01 & 1.01
  \end{bmatrix} \:, \:\: B = I_{3} \:, \:\: Q = 10^{-3} I_{3} \:, \:\: R = I_{3} \:. \label{eq:dean_system}
\end{align}
We will consider both the discounted LQR problem with $\gamma = 0.98$ and
the average cost LQR problem, given by
\begin{align*}
  \mathop{\mathrm{minimize}}_{U} ~~ \limsup_{T \to \infty} \: \E\left[ \frac{1}{T}\sum_{k=0}^{T-1} (X_k^\T Q X_k + U_k^\T R U_k) \right] ~~\text{s.t.}~~ X_{k+1} = A X_k + B U_k + W_k \:.
\end{align*}
The choice of $\gamma=0.98$ ensures that the closed-loop
system $A+BK$ with $K$ the optimal discounted controller is
stable.
Our metric of interest will be the relative error
$\frac{J(K) - J_\star}{J_\star}$, where $J_\star$ is the optimal infinite-horizon
cost on either the discounted or average cost objective, and
$J(K)$ is the infinite-horizon cost of using the controller $K$ in feedback
with the true system \eqref{eq:dean_system}.

We run our experiments as follows.
We collect $M$ independent trajectories of the system
\eqref{eq:dean_system} excited by independent Gaussian noise
$N(0, I_3)$ of length $N=20$.
This produces a collection of $MN$ tuples
$D = \{(x^{(\ell)}_k, u^{(\ell)}_k, r^{(\ell)}_k, x^{(\ell)}_{k+1})\}_{k=1,\ell=1}^{N,M}$. We repeat this whole process $100$ times.
In our experiments, we will refer to the value $MN$ as the
number of timesteps, and each set $D$ of $MN$ tuples collected
will be referred to as a trial.
As in the previous experiment, we use the prefix of the data
to report different values for the number of timesteps used.
We now describe in more detail
the different algorithms we evaluate.

\begin{center}
    \begin{algorithm}[htb]
    \caption{LSTD-Q}
    \begin{algorithmic}[1]
        \REQUIRE{Samples $D = \{(x_k, u_k, r_k, x_{k+1})\}_{k=1}^{\abs{D}}$, policy $\pi(x) = K x$.}
        \STATE{Define the feature map $\phi(x, u)$ as}
        \begin{align*}
            \phi(x, u) = \svec\left( \vstacktwo{x}{u}\vstacktwo{x}{u}^\T + \eta \vstacktwo{I}{K}\vstacktwo{I}{K}^\T \right) \:.
        \end{align*}
        \STATE{Form the matrices $A$ and $b$ as}
        \begin{align*}
            A = \sum_{k=1}^{\abs{D}} \phi(x_k, u_k)(\phi(x_k, u_k) - \gamma \phi(x_{k+1}, K x_{k+1}))^\T \:, \:\: b = \sum_{k=1}^{\abs{D}} r_k \phi(x_k, u_k) \:.
        \end{align*}
        \STATE{$\widehat{p} \gets A^{\dag} b$.}
        \STATE{{\bf return} $\smat(\widehat{p})$.}
    \end{algorithmic}
    \label{alg:lstdq}
    \end{algorithm}
\end{center}

\paragraph{LSPI.}
For completeness, LSPI is described in
Algorithm~\ref{alg:lspi}.
LSPI relies on a variant of LSTD for $Q$-functions instead of value functions,
which is described in Algorithm~\ref{alg:lstdq}.
To run LSPI, we need a starting controller $K_0$.
The trivial controller $K_0 = 0_{3 \times 3}$ is insufficient, since
the matrix $\sqrt{\gamma} A$ is not stable, and hence
does not induce a finite $Q$-function.
This is a drawback of LSPI; a reasonable initialization must be
chosen for the algorithm to work. For the purposes of comparison,
we set $K_0$ such that
the closed loop matrix $A+BK_0 = \diag(0.6, 0.6, 0.6)$ and is hence a valid
starting point for LSPI.
Furthermore, the relative error
$(J(K_0) - J_\star)/J_\star \approx 6.603$
for the discounted case and
$(J(K_0) - J_\star)/J_\star \approx 4.778$ for the average cost case.
When running LSPI for discounted cost (resp. average cost),
if at any point we estimate a policy $K_t$
such that $\sqrt{\gamma}(A+BK_t)$ (resp. $A+BK_t$) is not stable,
we consider the algorithm as having failed and assign it a score of $+\infty$.

\paragraph{Nominal controller.}

The nominal controller works by first estimating the state-transition matrices
$(\widehat{A}, \widehat{B})$ from the given trajectories via ordinary
least-squares.  With the estimates $(\widehat{A}, \widehat{B})$, we directly
solve via algebraic Ricatti equations for the optimal discounted/average cost controllers under the assumption
that the dynamics are exactly $(\widehat{A}, \widehat{B})$.  We then check to
see if the resulting costs with the nominal controller in feedback with the
true system are finite, and assign a score of $+\infty$ otherwise.

\paragraph{Common Lyapunov controller.}

The common Lyapunov synthesis procedure is developed in Dean et al.\ as a
semidefinite relaxation to the non-convex robust controller synthesis problem
with static state-feedback.
The advantage of the common Lyapunov controller over the nominal is that,
if the program succeeds, it provides a certificate that the actual closed-loop
system is stable (this is not guaranteed by the nominal controller, nor LSPI).
The disadvatage is that this robustness guarantee typically trades off with
performance.
Since the formulation in Dean et al.\ is for the average cost
setting, we only run the procedure in this setting. Because the procedure
is a robust synthesis algorithm, it takes as input an upper bound on the
estimation errors $\norm{\widehat{A} - A} \leq \varepsilon_A$ and
$\norm{\widehat{B} - B} \leq \varepsilon_B$.
We use both the true errors and
$2\times$ the true errors as the input bounds.
The former showcases the best possible performance,
and the latter simulates the
non-parametric bootstrap method used in Dean et al.\ to compute these
confidence bounds; their results suggest that the bootstrap
over-estimates the true errors by roughly a factor of two.
We solve the resulting semidefinite programs using
cvxpy~\cite{diamond16} with MOSEK~\cite{mosek} as the backend solver.

\begin{figure}[t!]
  \centering
  \begin{minipage}[t]{0.46\textwidth}
  \begin{center}
  \includegraphics[width=\columnwidth]{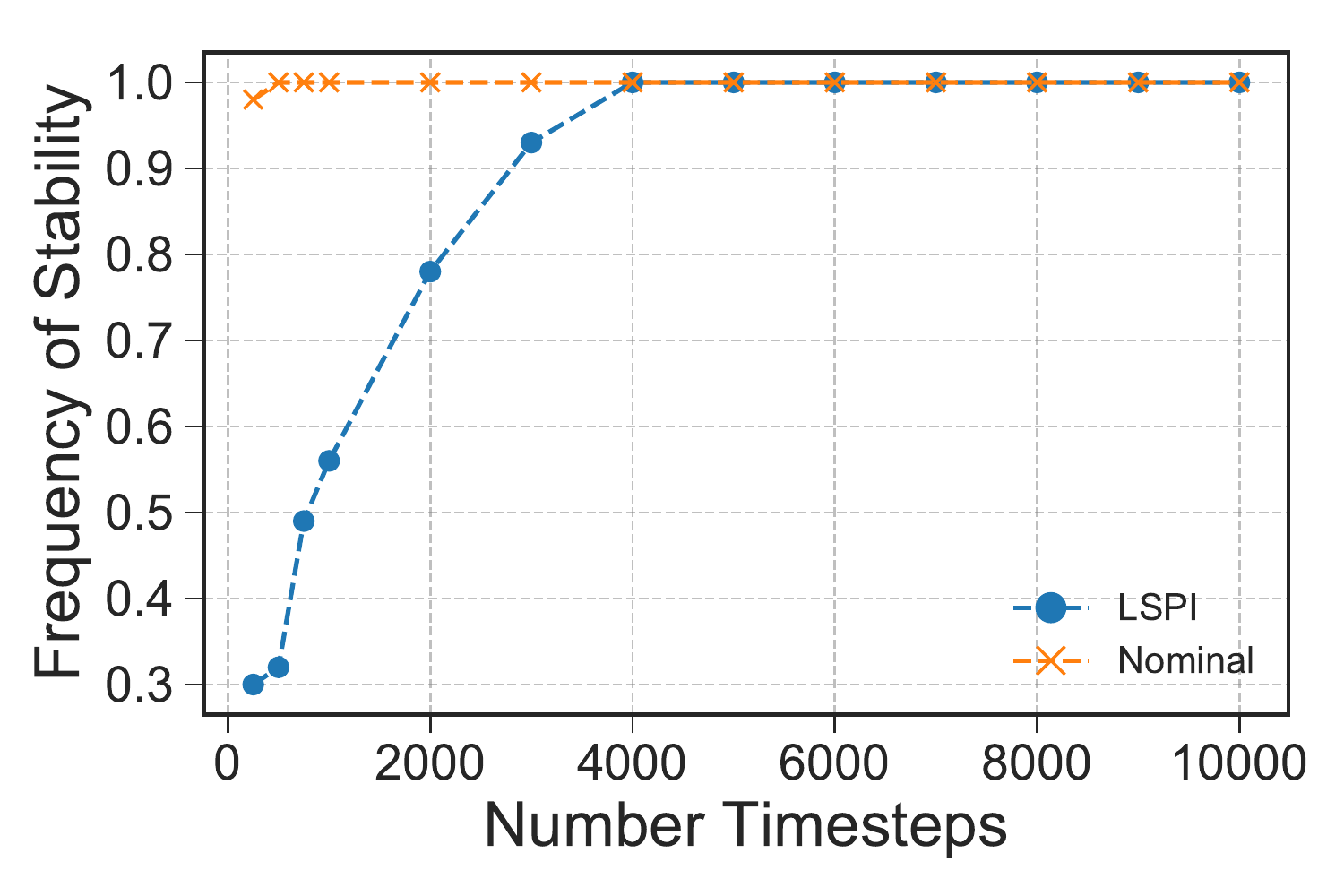}
  \end{center}
  \caption{A comparison of how frequently (out of $100$ trials)
	both LSPI and the nominal synthesis procedure were able to produce
	a controller $\widehat{K}$ such that the matrix
	$\sqrt{\gamma}(A+B\widehat{K})$ was stable. This condition is necessary
	and sufficient for the discounted infinite-horizon cost to be finite.}
  \label{fig:dis_cost_freq_stability}
  \end{minipage}
  \hspace{.02\textwidth}
  \begin{minipage}[t]{0.46\textwidth}
  \begin{center}
	\includegraphics[width=\columnwidth]{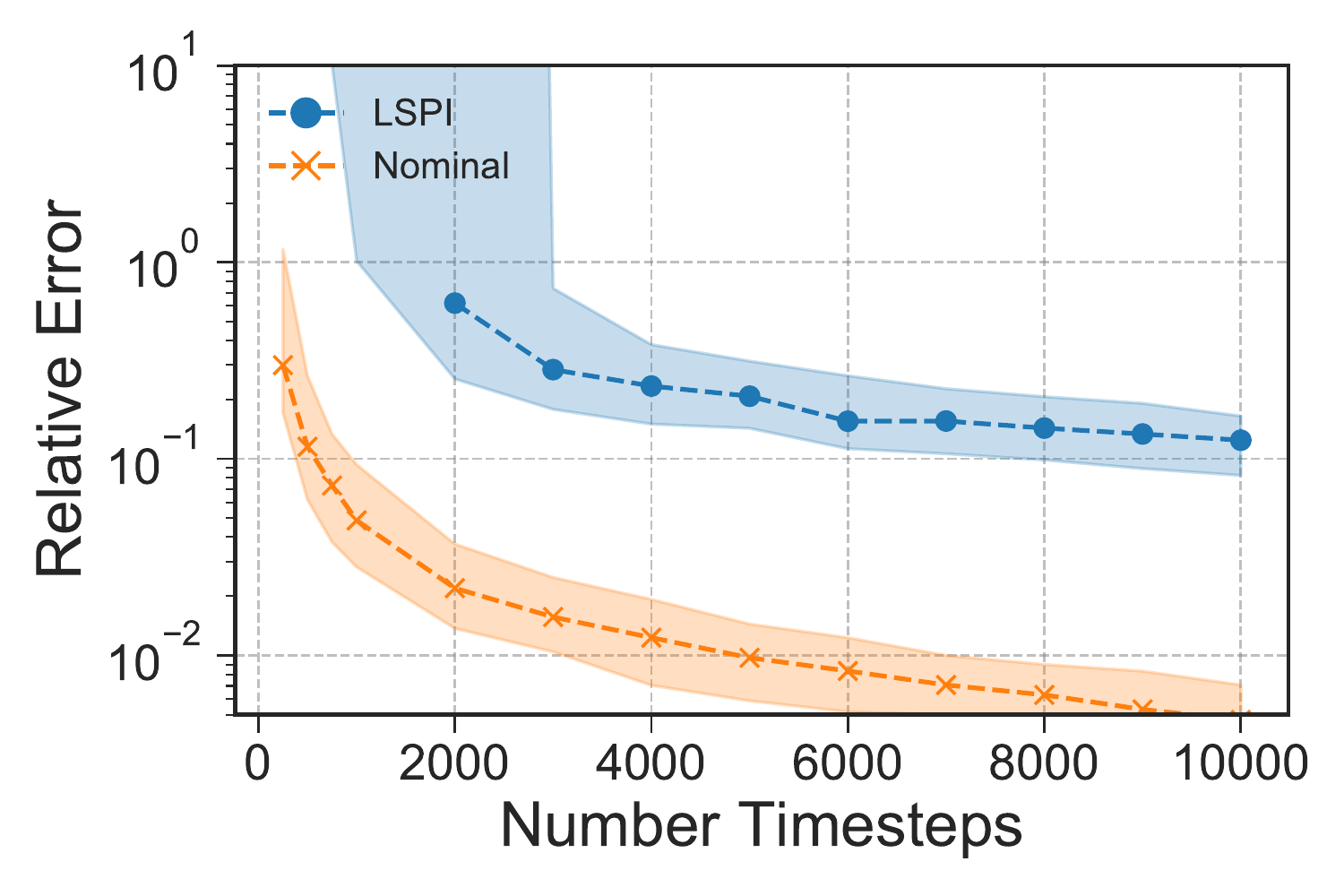}
  \end{center}
  \caption{A comparison of the relative error of the controllers
	produced by both LSPI and the nominal synthesis procedure for the
	discounted LQR problem.
	The points along the dashed line denote the median cost, and the
	shaded region covers the $25$-th to $75$-th percentile out of $100$ trials.}
  \label{fig:dis_cost_relative_error}
  \end{minipage}
\end{figure}

The results for the discounted LQR problem are shown in Figure~\ref{fig:dis_cost_freq_stability}
and Figure~\ref{fig:dis_cost_relative_error}, and
the results for the average cost LQR problem are shown in
Figure~\ref{fig:avg_cost_freq_stability}
and Figure~\ref{fig:avg_cost_relative_error}.
We observe on the discounted problem that LSPI
less robust and more sample inefficient than the nominal controller.
In Figure~\ref{fig:dis_cost_freq_stability}, we observe that even with $3000$
timesteps the frequency of stability for LSPI is worse than that
of the nominal controller at $250$ timesteps.
Similarly, in Figure~\ref{fig:dis_cost_relative_error}, we see that the
relative error achieved by LSPI at $3000$ timesteps is comparable to that
achieved by the nominal controller at $250$ timesteps.
The qualitative differences between LSPI and the nominal controller
remain the same when we move to the average cost controller.
In Figure~\ref{fig:avg_cost_relative_error},
we see that the nominal controller and the common Lyapunov
controller given the actual error bounds perform the best,
the common Lyapunov controller given $2\times$ the actual error bound
performs slightly worse, and
the performance of LSPI is substantially behind the rest, taking for instance over
$10\times$ more samples compared to the nominal controller
to achieve a relative error of $10^{-1}$ .

\begin{figure}[t!]
  \centering
  \begin{minipage}[t]{0.46\textwidth}
  \begin{center}
  \includegraphics[width=\columnwidth]{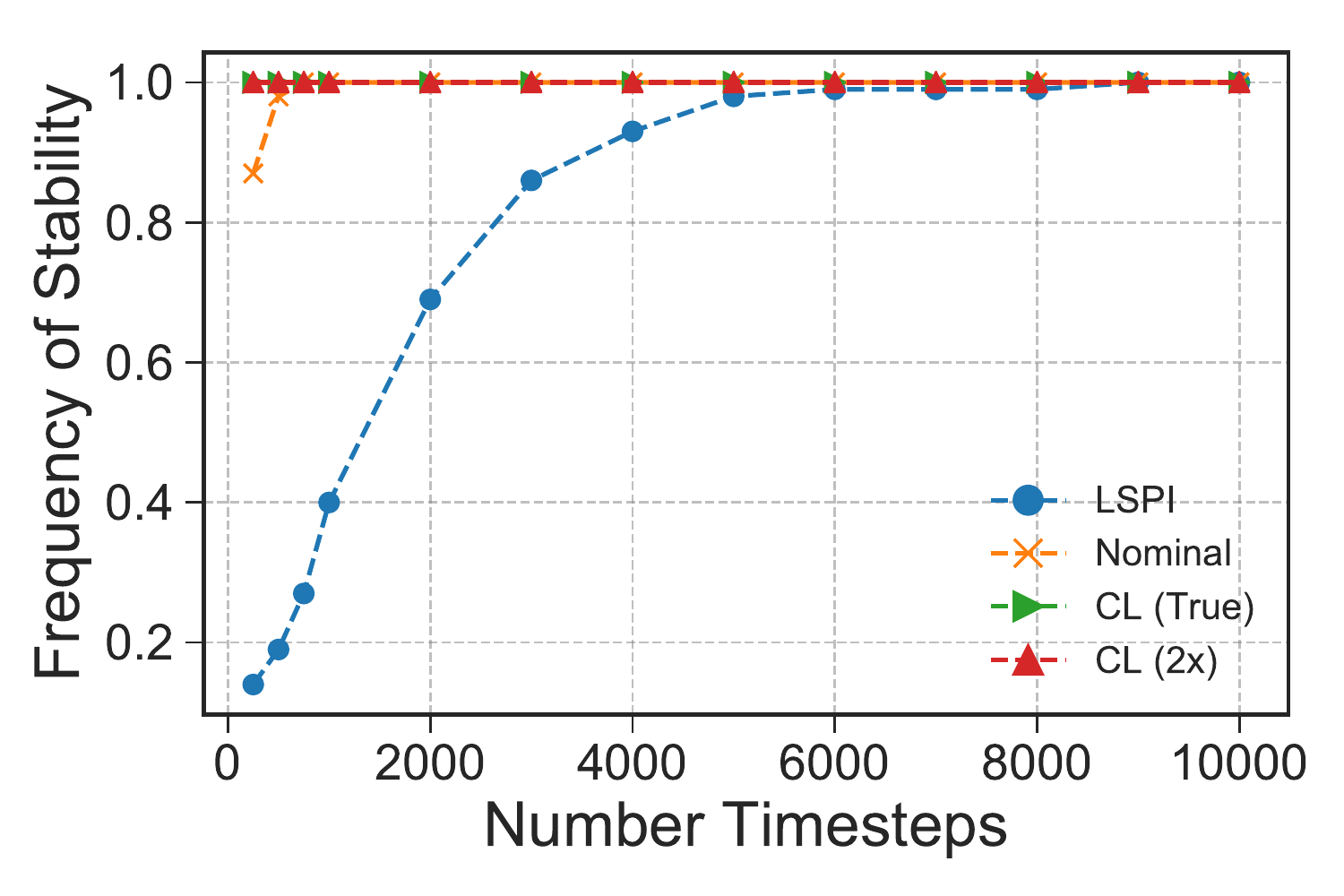}
  \end{center}
  \caption{A comparison of how frequently (out of $100$ trials)
	LSPI, the nominal synthesis procedure, and the common Lyapunov (CL)
	synthesis procedures were able to produce
	a controller $\widehat{K}$ such that the matrix
	$A+B\widehat{K}$ was stable.
	This condition is necessary
	and sufficient for the average infinite-horizon cost to be finite.}
  \label{fig:avg_cost_freq_stability}
  \end{minipage}
  \hspace{.02\textwidth}
  \begin{minipage}[t]{0.46\textwidth}
  \begin{center}
	\includegraphics[width=\columnwidth]{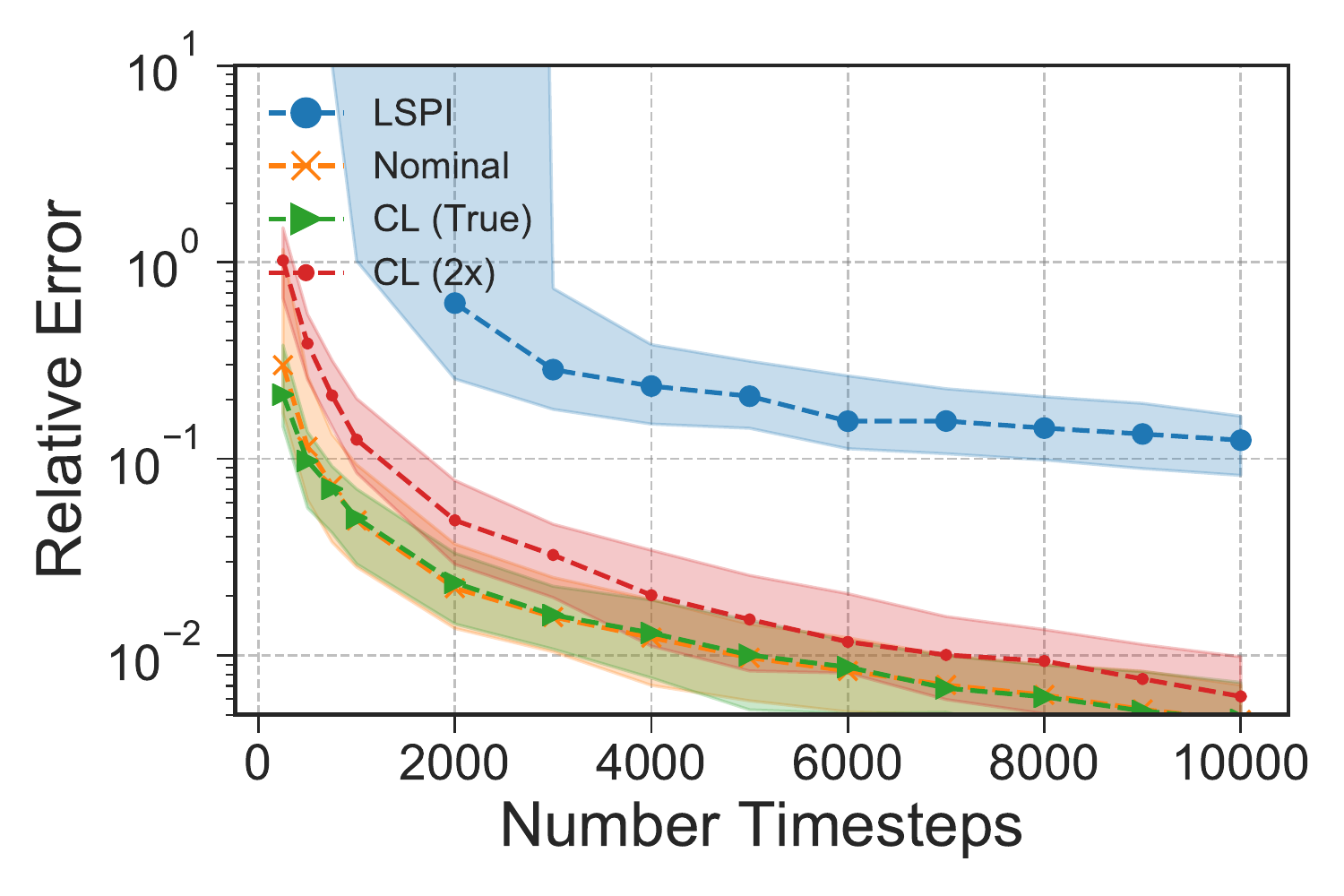}
  \end{center}
  \caption{A comparison of the relative error of the controllers
	produced by LSPI, the nominal synthesis procedure, and the common Lyapunov (CL) procedures for the
	average cost LQR problem.
	The points along the dashed line denote the median cost, and the
	shaded region covers the $25$-th to $75$-th percentile out of $100$ trials.}
  \label{fig:avg_cost_relative_error}
  \end{minipage}
\end{figure}

\section{Conclusion}

We studied the number of samples needed for the LSTD estimator to return a
$\varepsilon$-accurate solution in relative error for the value function
associated to a fixed policy $\pi$ for LQR.  In the process of deriving
our result, we provided a concentration result for the minimum eigenvalue of a
sample covariance matrix formed along the trajectory of a $\beta$-mixing
stochastic process. 
Empirically, we demonstrated that model-free policy iteration (LSPI) requires
substantially more samples on certain LQR instances than the model-based
methods from Dean et al.
We hope our results encourage further investigation
into the foundations of RL for continuous control problems.
We now highlight some possible extensions of our work.

\paragraph{End-to-end guarantees.}
Theorem~\ref{thm:lstd_estimate} provides an upper bound on the estimation error
of the value function for a fixed policy. 
While extending our analysis to estimating a fixed $Q$-function is
straightforward, it is not as clear how to iterate this process. In particular,
how many samples are needed until policy-iteration reaches a policy which
receives an expected reward that is within $\varepsilon$ additive or relative
error to the optimal value? Our experiments in Section~\ref{sec:experiments:lspi}
suggest that the answer may be more than model-free methods, but 
it is not clear if this phenomenon is general or if there are instances
where LSPI outperforms model-based methods.
Can we establish conditions under which model-based methods will 
always outperform model-free methods?

\paragraph{Lower bounds.}
Another interesting question is how sharp the bound in 
Theorem~\ref{thm:lstd_estimate} is, especially in terms of its
dependence on the spectral properties of $P_\infty$.
While our numerical experiments in Section~\ref{sec:experiments:synthetic}
suggest that the qualitative behavior is correct, we do not 
have any algorithmic lower bounds for LSTD which confirm 
this rigorously.
Furthermore, what are the information-theoretic lower bounds
incurred by any RL algorithm for the LQR problem in terms of number of samples?

\paragraph{Other model-free RL estimators.}
Policy gradient algorithms such as 
Trust Region Policy Optimization~\cite{schulman15} 
have become increasingly popular for solving MDPs in robotics.
How does the sample complexity of policy gradient or
TRPO compare to LSPI and the model-based methods of Dean et al.?

\section*{Acknowledgements}
We thank Orianna DeMasi, Vitaly Kuznetsov, Horia Mania, Max Simchowitz, Vikas Sindhwani, and Xinyan Yan
for many helpful comments and suggestions.
Part of this work was completed when ST was interning at Google Brain, New
York, NY.  BR is generously supported by NSF award CCF-1359814, ONR awards
N00014-14-1-0024 and N00014-17-1-2191, the DARPA Fundamental Limits of Learning
(Fun LoL) Program, a Sloan Research Fellowship, and a Google Faculty Award.

{
\small
\bibliographystyle{abbrv}
\bibliography{paper}
}

\appendix
\section{Gaussian Moment Lemmas}

First, we present an elementary claim regarding the fourth moment of a non-isotropic multivariate Gaussian.
For completeness, we provide a proof.
\begin{prop}
\label{prop:gaussian_fourth_moment}
Let $x \sim \mathcal{N}(0, I)$, and $A, B$ two fixed symmetric matrices.
We have that
\begin{align*}
	\E[x^\T A x x^\T B x] = 2 \ip{A}{B} + \Tr(A)\Tr(B) \:.
\end{align*}
\end{prop}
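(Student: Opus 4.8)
The plan is to expand both quadratic forms in coordinates and apply Isserlis' (Wick's) theorem for the fourth moments of a standard Gaussian. Writing $x^\T A x = \sum_{i,j} A_{ij} x_i x_j$ and $x^\T B x = \sum_{k,l} B_{kl} x_k x_l$, linearity of expectation gives
\begin{align*}
  \E[x^\T A x \, x^\T B x] = \sum_{i,j,k,l} A_{ij} B_{kl} \, \E[x_i x_j x_k x_l] \:.
\end{align*}
The key input is the standard identity for $x \sim \Normal(0,I)$,
\begin{align*}
  \E[x_i x_j x_k x_l] = \delta_{ij}\delta_{kl} + \delta_{ik}\delta_{jl} + \delta_{il}\delta_{jk} \:,
\end{align*}
which follows from Isserlis' theorem (pairing the four indices) together with the fact that the coordinates are independent with unit variance.

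Next I would substitute this identity and evaluate the three resulting sums. The $\delta_{ij}\delta_{kl}$ term collapses to $\big(\sum_i A_{ii}\big)\big(\sum_k B_{kk}\big) = \Tr(A)\Tr(B)$. The $\delta_{ik}\delta_{jl}$ term gives $\sum_{i,j} A_{ij} B_{ij} = \ip{A}{B}$ directly from the definition of the Frobenius inner product. The $\delta_{il}\delta_{jk}$ term gives $\sum_{i,j} A_{ij} B_{ji}$, which equals $\Tr(AB) = \ip{A}{B}$ once we use the symmetry of $B$ (so $B_{ji} = B_{ij}$). Adding the three contributions yields $\Tr(A)\Tr(B) + 2\ip{A}{B}$, which is the claim.

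There is no real obstacle here; the computation is routine. The only point requiring care is the bookkeeping in the last two contraction terms: one must invoke the symmetry of $A$ and $B$ to identify both off-diagonal sums with the single quantity $\ip{A}{B}$, and to confirm that $\ip{A}{B} = \sum_{i,j} A_{ij}B_{ij} = \Tr(AB)$ is consistent with the inner product convention used elsewhere in the paper. An alternative derivation, if one prefers to avoid index manipulations, is to reduce to the diagonal case by the orthogonal invariance of the standard Gaussian and a simultaneous-diagonalization argument, but the Isserlis route is the most direct and self-contained.
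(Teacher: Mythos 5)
Your proof is correct, but it follows a genuinely different route from the paper's. You expand both quadratic forms in coordinates and invoke Isserlis'/Wick's theorem, $\E[x_i x_j x_k x_l] = \delta_{ij}\delta_{kl} + \delta_{ik}\delta_{jl} + \delta_{il}\delta_{jk}$, as the key input; the three pairings then contract to $\Tr(A)\Tr(B)$, $\ip{A}{B}$, and $\sum_{i,j} A_{ij}B_{ji} = \Tr(AB) = \ip{A}{B}$ (using symmetry), respectively. The paper never cites Wick's theorem: it writes $A = \sum_i \lambda_i u_i u_i^\T$ and $B = \sum_j \gamma_j v_j v_j^\T$ via their eigendecompositions, reduces the claim to the rank-one identity $\E[(x^\T u)^2 (x^\T v)^2] = \norm{u}^2 \norm{v}^2 + 2(u^\T v)^2$, and derives that identity from scratch using only independence, $\E[x_i^2] = 1$, and $\E[x_i^4] = 3$, via the intermediate formula $\E[(x^\T u)^2 xx^\T] = \norm{u}^2 I + 2uu^\T$. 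Your route is shorter and modular but takes the fourth-moment pairing identity as given; the paper's is fully self-contained, in effect re-deriving by hand exactly the special case of Wick's theorem it needs. One small caveat on your closing side remark: a reduction ``to the diagonal case by simultaneous diagonalization'' is not available in general, since two symmetric matrices can be simultaneously orthogonally diagonalized only if they commute (rotating so that $A$ alone is diagonal would suffice, and is closer in spirit to what the paper does); since that remark is not part of your actual argument, it does not affect correctness.
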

\begin{proof}
First, let $u, v$ be two fixed vectors.
Let us compute
\begin{align*}
	\E [ (x^\T u)^2 (x^\T v)^2 ] = v^\T \E[ (x^\T u)^2 xx^\T ]  v \:.
\end{align*}
Fix an $i \neq j$.
We have that
\begin{align*}
	\E\left[\left(\sum_{k} x_k u_k\right)^2 x_i x_j\right] = \sum_{k,l} \E[ x_i x_j x_k x_l u_k u_l ] = 2 u_i u_j \E[ x_i^2 x_j^2 ] = 2 u_i u_j \:.
\end{align*}
On the other hand, for a fixed $i$,
\begin{align*}
	\E\left[\left(\sum_{j} x_j u_k\right)^2 x_i^2 \right] = \sum_{j, k} \E[x_i^2 x_j x_k u_j u_k] = u_i^2 \E[x_i^4] + \sum_{j \neq i} u_j^2 \E[x_j^2] = 3 u_i^2 + \sum_{j \neq i} u_j^2 = 2 u_i^2 + \norm{u}^2 \:.
\end{align*}
Hence,
\begin{align*}
	\E[(x^\T u)^2 xx^\T] = \norm{u}^2 I + 2 uu^\T \:.
\end{align*}
From this we conclude that
\begin{align*}
	\E [ (x^\T u)^2 (x^\T v)^2 ] = \norm{u}^2 \norm{v}^2 + 2 (u^\T v)^2 \:.
\end{align*}
Now write the eigen-decompositions of $A$ and $B$ as
$A = \sum_{i} \lambda_i u_iu_i^\T$ and $B = \sum_{i} \gamma_i v_iv_i^\T$.
We have that
\begin{align*}
	x^\T A x x^\T B x = \left(\sum_{i} \lambda_i (x^\T u_i)^2 \right)\left(\sum_{i} \gamma_i (x^\T v_i)^2 \right) = \sum_{i,j} \lambda_i \gamma_j (x^\T u_i)^2 (x^\T v_j)^2 \:.
\end{align*}
Taking expectations,
\begin{align*}
	\E[ x^\T A x x^\T B x ] &= \sum_{i,j} \lambda_i \gamma_j \E[ (x^\T u_i)^2 (x^\T v_j)^2 ] \\
	&= \sum_{i,j} \lambda_i \gamma_j ( 1 + 2(u_i^\T v_j)^2 ) \\
	&= \left(\sum_{i} \lambda_i\right)\left(\sum_{j} \lambda_j\right) + 2 \Tr\left( \sum_{i,j} \lambda_i \gamma_j u_iu_i^\T v_jv_j^\T \right) \\
	&= \Tr(A) \Tr(B) + 2 \Tr(A B) \:.
\end{align*}
\end{proof}
Next, we state a well-known result
regarding Gaussian hypercontractivity.
\begin{lem}[See e.g. Bogachev~\cite{bogachev15}]
\label{lem:gaussian_hypercontractivity}
Let $f$ be a degree $d$ polynomial and $x \sim \mathcal{N}(0, I)$. For any $q > 2$, we have
\begin{align*}
  \norm{f}_{L^q} \leq (q-1)^{d/2} \norm{f}_{L^2} \:.
\end{align*}
\end{lem}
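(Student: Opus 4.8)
The plan is to deduce this polynomial bound from Nelson's Gaussian hypercontractivity theorem together with the Wiener--Hermite chaos decomposition of $L^2(\mathcal{N}(0,I))$. Recall that $L^2(\mathcal{N}(0,I))$ decomposes as an orthogonal direct sum $\bigoplus_{k \geq 0} \mathcal{W}_k$, where $\mathcal{W}_k$ is the $k$-th Wiener chaos, spanned by products of Hermite polynomials of total degree $k$. Since $f$ has degree at most $d$, its chaos expansion $f = \sum_{k=0}^{d} f_k$, with $f_k$ the orthogonal projection of $f$ onto $\mathcal{W}_k$, terminates at $k = d$, and orthogonality gives $\norm{f}_{L^2}^2 = \sum_{k=0}^{d} \norm{f_k}_{L^2}^2$.

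The structural fact I would exploit is that the Ornstein--Uhlenbeck semigroup $T_\rho$, defined by Mehler's formula $T_\rho h(x) := \E_{y \sim \mathcal{N}(0,I)}[ h(\rho x + \sqrt{1-\rho^2}\, y) ]$, acts diagonally on the chaos decomposition: $T_\rho h = \rho^k h$ for every $h \in \mathcal{W}_k$. Nelson's theorem states that $T_\rho : L^p \to L^q$ is a contraction whenever $1 \leq p \leq q$ and $\rho^2 \leq (p-1)/(q-1)$. I would invoke it with $p = 2$ and $\rho = 1/\sqrt{q-1}$, so that $\rho^2 = 1/(q-1) = (p-1)/(q-1)$ and the contraction bound $\norm{T_\rho g}_{L^q} \leq \norm{g}_{L^2}$ holds for every $g \in L^2$.

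The remaining step is to reconstruct $f$ from a single application of $T_\rho$. Define $g := \sum_{k=0}^{d} \rho^{-k} f_k$, which is a well-defined $L^2$ function since the sum is finite; the diagonal action then gives $T_\rho g = \sum_{k=0}^{d} \rho^{k} \rho^{-k} f_k = f$. Applying the contraction bound and then $L^2$-orthogonality,
\begin{align*}
  \norm{f}_{L^q} = \norm{T_\rho g}_{L^q} \leq \norm{g}_{L^2} = \left( \sum_{k=0}^{d} \rho^{-2k} \norm{f_k}_{L^2}^2 \right)^{1/2} \:.
\end{align*}
Because $\rho = (q-1)^{-1/2} < 1$ and $k \leq d$, each weight obeys $\rho^{-2k} \leq \rho^{-2d} = (q-1)^{d}$, so the right-hand side is at most $(q-1)^{d/2} \bigl( \sum_{k=0}^{d} \norm{f_k}_{L^2}^2 \bigr)^{1/2} = (q-1)^{d/2} \norm{f}_{L^2}$, which is exactly the claim.

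The main obstacle is Nelson's hypercontractivity theorem itself, which is the genuine analytic input; everything else is bookkeeping with the chaos decomposition. For a fully self-contained argument one could instead prove the equivalent Gross logarithmic Sobolev inequality for the standard Gaussian and tensorize the elementary two-point inequality, recovering Nelson's bound along the way. Since the lemma is quoted from Bogachev~\cite{bogachev15}, I would simply cite the hypercontractivity theorem and present the short reduction above, taking care only to note that $T_\rho g$ is computed termwise, which is justified by the continuity of $T_\rho$ on $L^2$.
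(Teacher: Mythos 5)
Your proof is correct. The paper does not actually prove this lemma---it is quoted as a known result with a pointer to Bogachev---so there is no internal argument to compare against; what you have written is the standard derivation that such citations stand for: expand $f$ in Wiener--Hermite chaos, use that the Ornstein--Uhlenbeck semigroup $T_\rho$ acts as multiplication by $\rho^k$ on the $k$-th chaos, invoke Nelson's $L^2 \to L^q$ hypercontractivity at $\rho = (q-1)^{-1/2}$, and absorb the weights $\rho^{-2k} \leq (q-1)^d$ using orthogonality. The individual steps all check out: the chaos expansion of a degree-$d$ polynomial terminates at level $d$, the reconstruction $g := \sum_{k \leq d} \rho^{-k} f_k$ satisfies $T_\rho g = f$ by linearity alone (the sum is finite, so the continuity remark you add is not even needed), and the final bound follows from Parseval. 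The one caveat worth stating plainly is that the genuine analytic content---Nelson's theorem (equivalently, Gross's log-Sobolev inequality plus tensorization, as you note)---is itself cited rather than proved, so your argument is a clean reduction to a deeper black box rather than a self-contained proof; but that is precisely the status the lemma has in the paper, and your reduction is the proof found in the standard references.
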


\section{Proof of Lemma~\ref{lemma:structural_bound_lstd_lqr}}
\label{sec:appendix:lstd}

This follows the development of Lazaric et al.~\cite{lazaric12}.
From a given trajectory $\{(X_k, R_k, X_{k+1})\}_{k=1}^{N+1}$,
let us define three $N \times d$ matrices $\Phi$, $\Phi_+$, and $\Psi$ as follows:
\begin{align*}
  \Phi = \begin{bmatrix} -\phi(X_1)^\T - \\ \vdots \\ -\phi(X_N)^\T- \end{bmatrix} \:, \:\:
  \Phi_+ = \begin{bmatrix} -\phi(X_2)^\T - \\ \vdots \\ -\phi(X_{N+1})^\T- \end{bmatrix} \:, \:\:
  \Psi = \begin{bmatrix} -\psi(X_1)^\T- \\ \vdots \\ -\psi(X_N)^\T- \end{bmatrix} \:.
\end{align*}
Above, $\psi(x) := \E_{x' \sim p(\cdot |x, \pi(x))}[ \phi(x') ]$, where $p(\cdot | x, a)$
is the transition dynamics of the MDP at state $x$ with action $a$.
The LSTD estimator is to find a $\widehat{w} \in \R^d$ such that
\begin{align*}
  \Phi^\T( \Phi - \gamma \Phi_+) \widehat{w} = \Phi^\T R \:,
\end{align*}
where $R = (R_1, ..., R_N) \in \R^N$ is the vector of rewards received.
Under the linear-architecture assumption, Bellman's equation~\eqref{eq:bellman_general} implies that
\begin{align}
  (\Phi - \gamma \Psi) w_* = R \:. \label{eq:bellman_equation}
\end{align}
Define the shift operator $\widehat{P} : \R^{N + 1} \longrightarrow \R^{N + 1}$
as
\begin{align*}
  (\widehat{P} y)_t = \begin{cases} y_{t+1} &\text{if } 1 \leq t \leq N \\
  0 &\text{if } t = N + 1 \end{cases} \:,
\end{align*}
and define the empirical Bellman operator
$\widehat{T} : \R^{N + 1} \longrightarrow \R^{N+1}$ as
\begin{align*}
  \widehat{T}(y) = R_e + \gamma \widehat{P} y \:, \:\: R_e = \begin{bmatrix} R \\ 0 \end{bmatrix} \in \R^{N+1} \:.
\end{align*}
Let $\Phi_e = \begin{bmatrix} \Phi \\ 0 \end{bmatrix} \in \R^{(N + 1) \times d}$.
We now see that the operator $P_{\Phi_e} \widehat{T}$ is contractive,
where $P_{\Phi_e}$ denotes the orthogonal projector onto the range of $\Phi_e$.
\begin{prop}
\label{prop:proj_bellman_contractive}
For all $y, z \in \R^{N+1}$, we have
\begin{align*}
  \norm{ P_{\Phi_e} \widehat{T}(y) - P_{\Phi_e}\widehat{T}(z) } \leq \gamma \norm{y - z} \:.
\end{align*}
\end{prop}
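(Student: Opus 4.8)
The key observation is that $\widehat{T}$ is an affine map whose only input-dependent part is the scaled linear shift operator $\gamma \widehat{P}$. I would begin by forming the difference $\widehat{T}(y) - \widehat{T}(z)$ directly from the definition $\widehat{T}(y) = R_e + \gamma \widehat{P} y$. The additive constant $R_e$ cancels, leaving $\widehat{T}(y) - \widehat{T}(z) = \gamma \widehat{P}(y - z)$. Applying the linear projector $P_{\Phi_e}$ and pulling out the scalar $\gamma$ then gives $P_{\Phi_e}\widehat{T}(y) - P_{\Phi_e}\widehat{T}(z) = \gamma P_{\Phi_e} \widehat{P}(y - z)$, so the claim reduces to showing that the composition $P_{\Phi_e}\widehat{P}$ is nonexpansive in the Euclidean norm.

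Next I would bound the two factors separately. Since $P_{\Phi_e}$ is the orthogonal projection onto the range of $\Phi_e$, it is self-adjoint and idempotent, so its operator norm is at most one; hence $\norm{P_{\Phi_e} w} \leq \norm{w}$ for every $w \in \R^{N+1}$. It then remains to verify the elementary fact that the shift operator $\widehat{P}$ is also nonexpansive. This follows directly from its definition: for any $w \in \R^{N+1}$,
\[
  \norm{\widehat{P} w}^2 = \sum_{t=1}^{N} w_{t+1}^2 = \sum_{s=2}^{N+1} w_s^2 \leq \sum_{s=1}^{N+1} w_s^2 = \norm{w}^2 \:,
\]
since $\widehat{P}$ merely shifts the coordinates down by one and zeros out the last entry, thereby dropping the first coordinate $w_1$ rather than adding mass.

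Chaining these two bounds with the identity from the first step yields
\[
  \norm{P_{\Phi_e}\widehat{T}(y) - P_{\Phi_e}\widehat{T}(z)} = \gamma \norm{P_{\Phi_e}\widehat{P}(y-z)} \leq \gamma \norm{\widehat{P}(y-z)} \leq \gamma \norm{y - z} \:,
\]
which is exactly the desired contraction estimate. I do not expect any real obstacle here: the entire content is the cancellation of $R_e$ together with the observation that both $P_{\Phi_e}$ and $\widehat{P}$ have operator norm at most one, with $\gamma \in (0,1)$ supplying the strict contraction factor. The only point that warrants an explicit line is the norm bound on the shift $\widehat{P}$, which the displayed computation above records.
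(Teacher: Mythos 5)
Your proof is correct and follows essentially the same route as the paper's: cancel the constant $R_e$ to get $\widehat{T}(y)-\widehat{T}(z)=\gamma\widehat{P}(y-z)$, then use that both the shift $\widehat{P}$ and the orthogonal projector $P_{\Phi_e}$ are nonexpansive in the $\ell_2$-norm. The only difference is that you spell out the norm bound on $\widehat{P}$ explicitly, which the paper asserts "by construction"; this is a harmless (and slightly more complete) addition.
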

\begin{proof}
By definition, we have
$\widehat{T}(y) - \widehat{T}(z) = \gamma \widehat{P} (y - z)$.
By construction, we have that $\norm{\widehat{P} (y-z)} \leq \norm{y-z}$.
The claim now follows since the projection operator $P_{\Phi_e}$ is non-expansive in
the $\ell_2$-norm.
\end{proof}
Hence, by Banach's fixed-point theorem, the operator $P_{\Phi_e} \widehat{T}$ has a unique
fixed-point.
It turns out the LSTD estimator is solving for this fixed point, as the
following proposition demonstrates.
\begin{prop}[Section 5.2, Lagoudakis and Parr~\cite{lagoudakis03}]
\label{prop:lstd_fixed_point}
Suppose that $\Phi$ has full column rank, and that
$w \in \R^d$ satisfies
\begin{align*}
  \Phi^\T(\Phi - \gamma \Phi_+) w = \Phi^\T R \:.
\end{align*}
Then, we have that the fixed-point equation holds
\begin{align*}
  \Phi_e w = P_{\Phi_e} \widehat{T}(\Phi_e w) \:.
\end{align*}
\end{prop}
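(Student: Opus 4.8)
The plan is to reduce the asserted fixed-point identity to the normal equations that define $w$, using the full-column-rank hypothesis on $\Phi$. First I would record two structural facts about the embedded matrix $\Phi_e$ (the matrix $\Phi$ with a zero row appended). Appending a zero row does not change the column rank, so $\Phi_e$ has full column rank whenever $\Phi$ does; and the extra zero row contributes nothing to the Gram matrix, so $\Phi_e^\T \Phi_e = \Phi^\T \Phi$, which is therefore invertible. Consequently the orthogonal projector admits the closed form $P_{\Phi_e} = \Phi_e (\Phi^\T\Phi)^{-1}\Phi_e^\T$.

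With these facts in hand, I would rewrite the claim $\Phi_e w = P_{\Phi_e}\widehat{T}(\Phi_e w)$ as $\Phi_e w = \Phi_e (\Phi^\T\Phi)^{-1}\Phi_e^\T \widehat{T}(\Phi_e w)$. Since $\Phi_e$ is injective (full column rank), I can cancel the leading $\Phi_e$ on both sides, so the identity is equivalent to $w = (\Phi^\T\Phi)^{-1}\Phi_e^\T\widehat{T}(\Phi_e w)$, that is, to $\Phi^\T\Phi\, w = \Phi_e^\T \widehat{T}(\Phi_e w)$. It therefore suffices to evaluate the right-hand side and show it equals $\Phi^\T\Phi\, w$ under the hypothesis.

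Next I would expand $\widehat{T}(\Phi_e w) = R_e + \gamma \widehat{P}\Phi_e w$ and compute the three resulting pieces. Because both the last coordinate of $R_e$ and the last row of $\Phi_e$ vanish, $\Phi_e^\T R_e = \Phi^\T R$. The central computation is the cross term $\Phi_e^\T \widehat{P}\Phi_e$: writing $\widehat{P}$ as the one-step shift matrix and tracking indices, composing the embedding with the shift is designed to reproduce the successor-feature matrix, the target being $\Phi_e^\T \widehat{P}\Phi_e = \Phi^\T\Phi_+$. Granting this, substitution gives $\Phi_e^\T\widehat{T}(\Phi_e w) = \Phi^\T R + \gamma\Phi^\T\Phi_+ w$, so the reduced identity $\Phi^\T\Phi\, w = \Phi^\T R + \gamma\Phi^\T\Phi_+ w$ is exactly the hypothesis $\Phi^\T(\Phi - \gamma\Phi_+)w = \Phi^\T R$, which closes the argument.

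The step I expect to demand the most care is precisely the identity $\Phi_e^\T \widehat{P}\Phi_e = \Phi^\T\Phi_+$: the index bookkeeping at the end of the trajectory, where the zero-padding of $\Phi_e$ and the truncation built into the shift operator interact, must be checked so that the final transition $(X_N, X_{N+1})$ is correctly accounted for rather than dropped. This boundary term is exactly the "discarded last state" issue flagged earlier in the paper, and confirming it is handled correctly here is the crux; everything else is routine linear algebra once the Gram-type identities are established.
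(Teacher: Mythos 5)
Your overall route is, in substance, the same as the paper's: both proofs use the full-column-rank hypothesis to reduce the projected fixed-point equation to the normal equations. The paper does this via the equivalence chain $\Phi^\T(\Phi - \gamma\Phi_+)w = \Phi^\T R \Longleftrightarrow \Phi w = P_\Phi(R + \gamma\Phi_+ w)$ together with the block identity $P_{\Phi_e} = \bmattwo{P_\Phi}{0}{0}{0}$, while you work with $P_{\Phi_e} = \Phi_e(\Phi^\T\Phi)^{-1}\Phi_e^\T$ and cancel the injective factor $\Phi_e$; your preliminary identities $\Phi_e^\T\Phi_e = \Phi^\T\Phi$ and $\Phi_e^\T R_e = \Phi^\T R$ are correct.

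However, the one step you deferred --- $\Phi_e^\T\widehat{P}\Phi_e = \Phi^\T\Phi_+$ --- is exactly where the argument fails, and no amount of index bookkeeping rescues it under the definitions as written. Because the last row of $\Phi_e$ is $0$ rather than $\phi(X_{N+1})^\T$, the matrix $\widehat{P}\Phi_e$ has rows $\phi(X_2)^\T, \ldots, \phi(X_N)^\T, 0, 0$, so that
\begin{align*}
  \Phi_e^\T\widehat{P}\Phi_e = \sum_{k=1}^{N-1}\phi(X_k)\phi(X_{k+1})^\T = \Phi^\T\Phi_+ - \phi(X_N)\phi(X_{N+1})^\T \:,
\end{align*}
i.e., the final transition $(X_N, X_{N+1})$ is dropped. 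Your reduction therefore produces the \emph{truncated} normal equations $\Phi^\T\Phi\, w = \Phi^\T R + \gamma\bigl(\Phi^\T\Phi_+ - \phi(X_N)\phi(X_{N+1})^\T\bigr)w$, which agree with the hypothesis only in the degenerate case $\phi(X_{N+1})^\T w = 0$; the fixed point of $P_{\Phi_e}\widehat{T}$ characterizes the "discard-the-last-state" estimator of Lazaric et al., not the estimator in the statement. Nor can the gap be patched by changing the embedding: putting $\phi(X_{N+1})^\T$ in the last row of $\Phi_e$ restores the shift identity but destroys both $\Phi_e^\T\Phi_e = \Phi^\T\Phi$ and the block form of $P_{\Phi_e}$, so the reduction again misses the hypothesis. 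You should know that the paper's own proof elides precisely the same boundary issue: its final display contains the ill-typed term $\gamma\widehat{P}w$ and implicitly uses $\widehat{P}\Phi_e w = \vstacktwo{\Phi_+ w}{0}$, which fails in the $N$-th coordinate for the same reason. So you have correctly located a real defect in this appendix rather than a gap unique to your write-up --- but as a proof of the proposition as stated, your attempt, like the paper's, does not close.
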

\begin{proof}
First, we observe the following equivalences
\begin{align*}
  \Phi^\T( \Phi - \gamma \Phi_+) w = \Phi^\T R &\Longleftrightarrow \Phi^\T \Phi w = \Phi^\T(R + \gamma \Phi_+ w) \\
  &\Longleftrightarrow w = (\Phi^\T \Phi)^{-1} \Phi^\T(R + \gamma \Phi_+ w) \\
  &\Longleftrightarrow \Phi w = \Phi(\Phi^\T \Phi)^{-1} \Phi^\T(R + \gamma \Phi_+ w) \\
  &\Longleftrightarrow \Phi w = P_{\Phi} (R + \gamma \Phi_+ w) \:.
\end{align*}
Next, it is easy to see that
\begin{align*}
  P_{\Phi_e} = \begin{bmatrix} P_{\Phi} & 0 \\ 0 & 0 \end{bmatrix} \:.
\end{align*}
Hence, the following relation holds
\begin{align*}
  \Phi_e w = \begin{bmatrix} \Phi w \\ 0 \end{bmatrix} = \begin{bmatrix} P_{\Phi} & 0 \\ 0 & 0 \end{bmatrix} \left( \begin{bmatrix} R \\ 0 \end{bmatrix} + \gamma \widehat{P} w \right) = P_{\Phi_e} \widehat{T}(\Phi_e w) \:.
\end{align*}
\end{proof}
Next is a structural result for the LSTD estimator.
\begin{prop}[Theorem 1, Lazaric et al.~\cite{lazaric12}]
\label{prop:structural}
Let $\widehat{w}$ denote the LSTD estimator and suppose that $\Phi$ has full column rank.
We have that
\begin{align*}
  \norm{\Phi w_* - \Phi \widehat{w}} \leq \frac{\gamma}{1-\gamma} \norm{ P_{\Phi}( \Phi_+ - \Psi) w_* } \:.
\end{align*}
\end{prop}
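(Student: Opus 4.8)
The plan is to leverage the two structural facts already in place: Proposition~\ref{prop:lstd_fixed_point} identifies $\Phi_e\widehat{w}$ as the (unique) fixed point of the projected empirical Bellman operator $P_{\Phi_e}\widehat{T}$, and Proposition~\ref{prop:proj_bellman_contractive} shows this operator is a $\gamma$-contraction in the Euclidean norm on $\R^{N+1}$. I would reduce the claim to the standard perturbation estimate for contractions: if $F$ is a $\gamma$-contraction with fixed point $x_\star$, then for any probe point $y$ one has $\norm{x_\star - y} \le \frac{1}{1-\gamma}\norm{F(y)-y}$. Taking $F = P_{\Phi_e}\widehat{T}$, $x_\star = \Phi_e\widehat{w}$, and the probe point $y = \Phi_e w_*$ turns the right-hand side into the empirical projected Bellman residual evaluated at the true parameter $w_*$.

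Concretely, I would write $\Phi_e\widehat{w} - \Phi_e w_* = P_{\Phi_e}\widehat{T}(\Phi_e\widehat{w}) - \Phi_e w_*$, insert $\pm\, P_{\Phi_e}\widehat{T}(\Phi_e w_*)$, and bound the difference $P_{\Phi_e}\widehat{T}(\Phi_e\widehat{w}) - P_{\Phi_e}\widehat{T}(\Phi_e w_*)$ using Proposition~\ref{prop:proj_bellman_contractive}. Rearranging yields
\begin{align*}
  (1-\gamma)\,\norm{\Phi_e\widehat{w} - \Phi_e w_*} \le \norm{P_{\Phi_e}\widehat{T}(\Phi_e w_*) - \Phi_e w_*} \:.
\end{align*}
Because the terminal coordinate of $\Phi_e u$ is zero, $\norm{\Phi_e u} = \norm{\Phi u}$, so the left-hand side is exactly $(1-\gamma)\norm{\Phi w_* - \Phi\widehat{w}}$, the quantity the proposition controls.

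The remaining work is to show the residual equals $\gamma\norm{P_\Phi(\Phi_+ - \Psi)w_*}$. I would expand $\widehat{T}(\Phi_e w_*) = R_e + \gamma\widehat{P}\Phi_e w_*$, substitute the Bellman identity $R = (\Phi - \gamma\Psi)w_*$ from \eqref{eq:bellman_equation}, and use the block form $P_{\Phi_e} = \mathrm{diag}(P_\Phi, 0)$ together with $P_\Phi\Phi w_* = \Phi w_*$. After cancellation the first $N$ coordinates should reduce to $\gamma P_\Phi(\Phi_+ - \Psi)w_*$, and combining with the display above produces the factor $\frac{\gamma}{1-\gamma}$.

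The step I expect to be the main obstacle is precisely this residual computation, because the shift operator $\widehat{P}$ interacts with the zero-padded terminal row of $\Phi_e$: applying $\widehat{P}$ drops the $\phi(X_{N+1})$ contribution, so a naive expansion leaves a spurious boundary term proportional to $P_\Phi$ acting on the $N$-th standard basis vector. Controlling this term is exactly the delicate point at which one must avoid discarding the last observed state along the trajectory. I would resolve it by carefully tracking the terminal coordinate through $P_{\Phi_e}$ and confirming that the surviving contribution is only the desired $\gamma P_\Phi(\Phi_+ - \Psi)w_*$, consistent with how Proposition~\ref{prop:lstd_fixed_point} is set up.
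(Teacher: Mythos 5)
Your proposal retraces the paper's proof step for step: the paper's chain of inequalities (triangle inequality, substitution of the fixed-point identity from Proposition~\ref{prop:lstd_fixed_point}, the contraction bound of Proposition~\ref{prop:proj_bellman_contractive}, rearrangement to produce the $\tfrac{1}{1-\gamma}$ factor, then evaluation of the residual via Bellman's equation \eqref{eq:bellman_equation}) is exactly the perturbation estimate for $\gamma$-contractions that you invoke, applied at the probe point $\Phi_e w_*$. So at the level of strategy the two arguments coincide.

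The substantive issue is the step you flag, and there your diagnosis is sharper than your cure. The boundary term does \emph{not} cancel: since the last row of $\Phi_e$ is zero, the $N$-th coordinate of $\widehat{P}\Phi_e w$ equals $(\Phi_e w)_{N+1} = 0$ rather than $\phi(X_{N+1})^\T w$, i.e.
\begin{align*}
  \widehat{P}\,\Phi_e w \;=\; \begin{bmatrix} \Phi_+ w \\ 0 \end{bmatrix} \;-\; \big(\phi(X_{N+1})^\T w\big)\, e_N \:,
\end{align*}
where $e_N \in \R^{N+1}$ is the $N$-th standard basis vector. Consequently the confirmation you defer would in fact fail: the fixed-point identity acquires the spurious term $\gamma\,(\phi(X_{N+1})^\T \widehat{w})\, P_{\Phi_e} e_N$, so that $\Phi_e \widehat{w}$ is exactly a fixed point of $P_{\Phi_e}\widehat{T}$ only for the variant of LSTD in which the last transition is dropped, and the residual at $w_*$ equals $\gamma P_\Phi(\Phi_+ - \Psi)w_*$ only up to the analogous term $\gamma\,(\phi(X_{N+1})^\T w_*)\, P_\Phi e_N$. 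You should know that the paper's own proof silently commits the same elision: step (a) of its residual computation substitutes $[\gamma\Phi_+ w_* ; 0]$ for $\gamma\widehat{P}\Phi_e w_*$ without justification, and the final display in the proof of Proposition~\ref{prop:lstd_fixed_point} does likewise. So your proposal reproduces the paper's argument, gap included; the difference is that you noticed the gap. Closing it requires genuine extra work: either carry both boundary terms through the contraction argument (they produce additive errors involving $\phi(X_{N+1})^\T w_*$ and $\phi(X_{N+1})^\T \widehat{w}$, the latter needing an absorption step since it involves the estimator itself; after dividing by $\sqrt{\lambda_{\min}(\Phi^\T\Phi)}$ these are of the same $1/\sqrt{N}$ order as the main term, so Theorem~\ref{thm:lstd_estimate} survives with worse constants, but the proposition as stated does not follow), or modify the setup so the identity is exact, e.g.\ analyze the estimator with the last transition discarded as in Lazaric et al.\ --- which is precisely the restriction this appendix claims to have removed.
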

\begin{proof}
First, observe that
\begin{align*}
  \norm{\Phi w_* - \Phi \widehat{w}} &= \norm{\Phi_e w_* - \Phi_e \widehat{w}} \\
  &= \norm{P_{\Phi_e} \Phi_e w_* - \Phi_e \widehat{w}} \\
  &\leq \norm{P_{\Phi_e} \Phi_e w_*  - P_{\Phi_e} \widehat{T}(\Phi_e w_*) } + \norm{ P_{\Phi_e} \widehat{T}( \Phi_e w_* ) -\Phi_e \widehat{w} } \\
  &\stackrel{(a)}{=} \norm{P_{\Phi_e} \Phi_e w_* - P_{\Phi_e} \widehat{T} (\Phi_e w_*) } + \norm{ P_{\Phi_e} \widehat{T}( \Phi_e w_*) - P_{\Phi_e} \widehat{T}(\Phi_e \widehat{w}) } \\
  &\stackrel{(b)}{\leq} \norm{P_{\Phi_e} \Phi_e w_* - P_{\Phi_e} \widehat{T} (\Phi_e w_*) } + \gamma \norm{ \Phi_e w_* - \Phi_e \widehat{w} } \\
  &= \norm{P_{\Phi_e} \Phi_e w_* - P_{\Phi_e} \widehat{T} (\Phi_e w_*) } + \gamma \norm{ \Phi w_* - \Phi \widehat{w}} \:.
\end{align*}
Above,
(a) uses the fact that the LSTD estimator satisfies
the fixed-point equation from Proposition~\ref{prop:lstd_fixed_point}, and
(b) uses the $\gamma$-contractive property of the
$P_{\Phi_e} \widehat{T}$ operator from Proposition~\ref{prop:proj_bellman_contractive}.
At this point, we have shown that
\begin{align*}
  \norm{\Phi w_* - \Phi \widehat{w}} \leq \frac{1}{1-\gamma} \norm{ P_{\Phi_e} (\widehat{T} (\Phi_e w_*) - \Phi_e w_*) } \:.
\end{align*}
To finish the proof, we note that
\begin{align*}
  \widehat{T} (\Phi_e w_*) - \Phi_e w_* &= R_e + \gamma \widehat{P} \Phi_e w_* - \Phi_e w_* \\
  &\stackrel{(a)}{=} \begin{bmatrix} \Phi - \gamma \Psi \\ 0 \end{bmatrix} w_* + \begin{bmatrix} \gamma \Phi_+ \\ 0 \end{bmatrix} w_* - \begin{bmatrix} \Phi \\ 0 \end{bmatrix} w_* \\
    &= \begin{bmatrix} \gamma( \Phi_+ - \Psi ) w_* \\ 0 \end{bmatrix} \:,
\end{align*}
where (a) comes from Bellman's equation \eqref{eq:bellman_equation}.
The claim now follows.
\end{proof}
Lemma~\ref{lemma:structural_bound_lstd_lqr}
follows from Proposition~\ref{prop:structural} by making two observations.
Recall that we assume $\Phi$ has full column rank.
Therefore, for any vector $v \in \R^d$,
\begin{align*}
  \norm{\Phi v} = \sqrt{v^\T \Phi^\T \Phi v} \geq \sqrt{\lambda_{\min}(\Phi^\T \Phi)} \norm{v} \:.
\end{align*}
Furthermore, for any vector $\xi \in \R^N$,
\begin{align*}
  \norm{P_{\Phi} \xi} = \sqrt{ \xi^\T P_{\Phi} \xi } = \sqrt{ \xi^\T \Phi (\Phi^\T \Phi)^{-1} \Phi^\T \xi } \leq \frac{\norm{ \Phi^\T \xi}}{\sqrt{\lambda_{\min}(\Phi^\T \Phi)}} \:.
\end{align*}
Combining these two inequalities with Proposition~\ref{prop:structural},
we have
\begin{align*}
  \norm{w_* - \widehat{w}} \leq \frac{\gamma}{1-\gamma} \frac{\norm{\Phi^\T (\Phi_+ - \Psi) w_*}}{\lambda_{\min}(\Phi^\T \Phi)} \:.
\end{align*}

\end{document}